\def\eqref#1{equation~\ref{#1}}
\def\1{\bm{1}}
\DeclareMathAlphabet{\mathsfit}{\encodingdefault}{\sfdefault}{m}{sl}
\SetMathAlphabet{\mathsfit}{bold}{\encodingdefault}{\sfdefault}{bx}{n}
\theoremstyle{plain}
\newtheorem{theorem}{Theorem}[section]
\newtheorem{proposition}[theorem]{Proposition}
\newtheorem{lemma}[theorem]{Lemma}
\newtheorem{example}{Example}[section]
\theoremstyle{definition}
\newtheorem{definition}[theorem]{Definition}
\theoremstyle{remark}
\newtheorem{remark}[theorem]{Remark}
\DeclareMathOperator{\tr}{Tr}
\definecolor{brickred}{rgb}{0.8, 0.25, 0.33}
\newcommand{\Phinet}{\Phi}
\begin{document}

\title{Equivariant Neural Networks for General \\ Linear Symmetries on Lie Algebras}
\maketitle
\vspace{-5em}
% \begin{flushleft}
\textbf{Chankyo Kim}\textsuperscript{1,*} \quad
\textbf{Sicheng Zhao}\textsuperscript{1,*} \quad
\textbf{Minghan Zhu}\textsuperscript{1,2} \quad
\textbf{Tzu-Yuan Lin}\textsuperscript{3} \quad
\textbf{Maani~Ghaffari}\textsuperscript{1} \\
% \vspace{0.8em}
% 소속 기관 목록
\noindent
\textsuperscript{1}University of Michigan\\
\textsuperscript{2}University of Pennsylvania\\
\textsuperscript{3}Massachusetts Institute of Technology\\
% \end{flushleft}
\texttt{\{chankyo,sichengz,minghanz,maanigj\}@umich.edu}, 
\texttt{tzuyuan@mit.edu}
\renewcommand{\thefootnote}{}%
\footnotetext{* Equal contribution.}%
\addtocounter{footnote}{-1}

\begin{abstract}
% Encoding symmetries is a powerful inductive bias for improving the generalization of deep neural networks. However, most existing equivariant models are limited to simple symmetries like rotations, failing to address the broader class of general linear transformations, $\mathrm{GL}(n)$, that appear in many scientific domains. We introduce \textbf{Reductive Lie Neurons (ReLNs)}, a novel neural network architecture exactly equivariant to these general linear symmetries. ReLNs are designed to operate directly on a wide range of structured inputs, including general $n$-by-$n$ matrices.ReLNs introduce a novel adjoint-invariant bilinear layer to achieve stable equivariance for both Lie-algebraic features and matrix-valued inputs, \textit{without requiring redesign for each subgroup}. This architecture overcomes the limitations of prior equivariant networks that only apply to compact groups or simple vector data. We validate ReLNs' versatility across a spectrum of tasks: they outperform existing methods on algebraic benchmarks with $\mathfrak{sl}(3)$ and $\mathfrak{sp}(4)$ symmetries and achieve competitive results on a Lorentz-equivariant particle physics task. In 3D drone state estimation with geometric uncertainty, ReLNs jointly process velocities and covariances, yielding significant improvements in trajectory accuracy. ReLNs provide a practical and general framework for learning with broad linear group symmetries on Lie algebras and matrix-valued data.

Many scientific and geometric problems exhibit general linear symmetries, yet most equivariant neural networks are built for compact groups or simple vector features, limiting their reuse on matrix-valued data such as covariances, inertias, or shape tensors.
We introduce \textbf{Reductive Lie Neurons (ReLNs)}, an exactly $\mathrm{GL}(n)$-equivariant architecture that natively supports matrix-valued and Lie-algebraic features.
ReLNs resolve a central stability issue for reductive Lie algebras by introducing a non-degenerate adjoint (conjugation)-invariant bilinear form, enabling principled nonlinear interactions and invariant feature construction in a single architecture that \textit{transfers across subgroups without redesign}.
We demonstrate ReLNs on algebraic tasks with $\mathfrak{sl}(3)$ and $\mathfrak{sp}(4)$ symmetries, Lorentz-equivariant particle physics, uncertainty-aware drone state estimation via joint velocity--covariance processing, learning from 3D Gaussian-splat representations, and EMLP double-pendulum benchmark spanning multiple symmetry groups. ReLNs consistently match or outperform strong equivariant and self-supervised baselines while using substantially fewer parameters and compute, improving the accuracy–efficiency trade-off and providing a practical, reusable backbone for learning with broad linear symmetries.
Project page: \url{https://reductive-lie-neuron.github.io/}

\end{abstract}

\section{Introduction}
Exploiting symmetries in data is a fundamental principle of geometric deep learning. By enforcing equivariance, where model outputs transform predictably with inputs, neural networks can leverage strong inductive biases to improve data efficiency and generalization.

Substantial progress has been made for compact symmetry groups such as rotations $\mathrm{SO}(3)$ and Euclidean isometries $\mathrm{E}(n)$~\citep{bronstein2021geometric, cohen2016group, thomas2018tensor, satorras2021n, geiger2022e3nn}. However, many geometric and physical systems exhibit broader \emph{linear} symmetries that are naturally expressed by the non-compact general linear group \mbox{$\mathrm{GL}(n)=\{A\in\mathbb{R}^{n\times n}:\det(A)\neq 0\}$}. Despite its ubiquity, general-purpose, exact equivariant architectures for $\mathrm{GL}(n)$ remain comparatively underdeveloped. Appendix~\ref{app:groups_summary} summarizes these groups and applications.

A key challenge for $\mathrm{GL}(n)$-equivariant learning is non-compactness. Unlike compact groups (e.g., $\mathrm{SO}(3)$), $\mathrm{GL}(n)$ does not come with a canonical $\mathrm{Ad}$-invariant positive-definite inner product: the Killing form is degenerate on $\mathfrak{gl}(n)$, and one loses the norm-preserving (unitary/orthogonal) structure that standard steerable irrep-based models rely on in practice~\citep{thomas2018tensor,geiger2022e3nn}. At the Lie-algebra level, $\mathfrak{gl}(n)=\mathbb{R}^{n\times n}$ is reductive but \emph{not} semisimple, so this degeneracy removes a canonical invariant metric and complicates stable invariant scalars for equivariant nonlinear. As a result, existing approaches either restrict attention to semisimple algebras or rely on ad hoc choices that do not provide a general, stable recipe for $\mathrm{GL}(n)$ adjoint equivariance.

Beyond this algebraic obstacle, real-world geometric inputs are often \emph{heterogeneous}, with different attributes obeying different transformation laws. Vectors transform by a left action (e.g., $v\mapsto Av$), whereas matrix-valued tensors such as uncertainty, inertia, or shape covariances transform by congruence (e.g., $\Sigma\mapsto A\Sigma A^\top$). Learning to \emph{jointly} couple vector--matrix channels in a symmetry-consistent and numerically stable way remains challenging: naive flattening discards tensor structure, and eigen/spectral parameterizations can be ambiguous and fragile under differentiation~\citep{magnus1985differentiating}. These issues arise in robotics state estimation, where velocities and covariance uncertainty must be processed together, and in 3D Gaussian Splatting, which couples a mean $\mu\in\mathbb{R}^3$ with an anisotropic covariance $\Sigma\in\mathrm{SPD}(3)$.

In this paper, we introduce \emph{Reductive Lie Neurons (ReLNs)}, a general-purpose architecture that is \emph{exactly} equivariant to the adjoint action of $\mathrm{GL}(n)$ on $\mathfrak{gl}(n)$. ReLNs are built around a non-degenerate $\mathrm{Ad}$-invariant bilinear form that resolves Killing-form degeneracy on reductive algebras and enables stable invariant gating/normalization and expressive equivariant nonlinearities within a single backbone. We evaluate ReLNs on algebraic benchmarks ($\mathfrak{sl}(3)$, $\mathfrak{sp}(4)$), uncertainty-aware drone state estimation with joint velocity--covariance processing, learning from 3D Gaussian-splat representations with comparisons to Gaussian-MAE, and the EMLP double-pendulum benchmark; additional Lorentz-equivariant results are deferred to the appendix.

Our main contributions are:
\begin{enumerate} 

    \item We propose \textbf{Reductive Lie Neurons (ReLNs)}, a novel, general-purpose architecture for exact $\mathrm{GL}(n)$ adjoint equivariance, built upon a $\mathrm{Ad}$-invariant bilinear form that resolves the degeneracy issues of the standard Killing form on reductive algebras. %, built upon a learnable $\mathrm{Ad}$-invariant bilinear form.
    % that overcomes the limitations of the degenerate Killing form.
    
    % \item We unify classical left-action equivariance with our adjoint-action framework. Through a equivariant embedding map, we show that problems defined with left-action symmetries—such as particle physics with Lorentz group symmetry and 3D point clouds with SO(3) symmetry—can be solved within our unified architecture without specialized designs.

    \item We introduce a unified lifting that embeds heterogeneous geometric inputs, left-acting vectors and congruence-transforming matrices (e.g., covariances, 3D Gaussian parameters), into a shared $\mathfrak{gl}(n)$ feature space, enabling a \textbf{single adjoint-equivariant backbone} with exact equivariance.

    % \item We establish the framework for geometric uncertainty-aware equivariant learning, enabling models to treat matrix-valued data that transforms under congruence (e.g., covariance tensors) as geometric objects.
    % \item We demonstrate the effectiveness of ReLNs through extensive experiments, showing that they outperform prior methods on Lie-algebraic benchmarks and achieve significant improvements in accuracy and robustness on a challenging 3D drone state estimation task.

    \item We demonstrate \textbf{practical utility and efficiency} across diverse regimes, matching Lie-algebra baselines on $\mathfrak{sl}(3)$ and $\mathfrak{sp}(4)$, achieving strong robustness gains on uncertainty-aware drone state estimation and 3D Gaussian-splat representation learning, substantially reducing per-step compute compared to EMLP on the double-pendulum benchmark, matching baseline performance on jet top-tagging with Lorentz-equivariant embeddings.

\end{enumerate}

\begin{figure}[t]
  \centering
  % Use PDF/SVG for vector quality; adjust width as needed (e.g., 0.75\linewidth)
  \includegraphics[width=\linewidth]{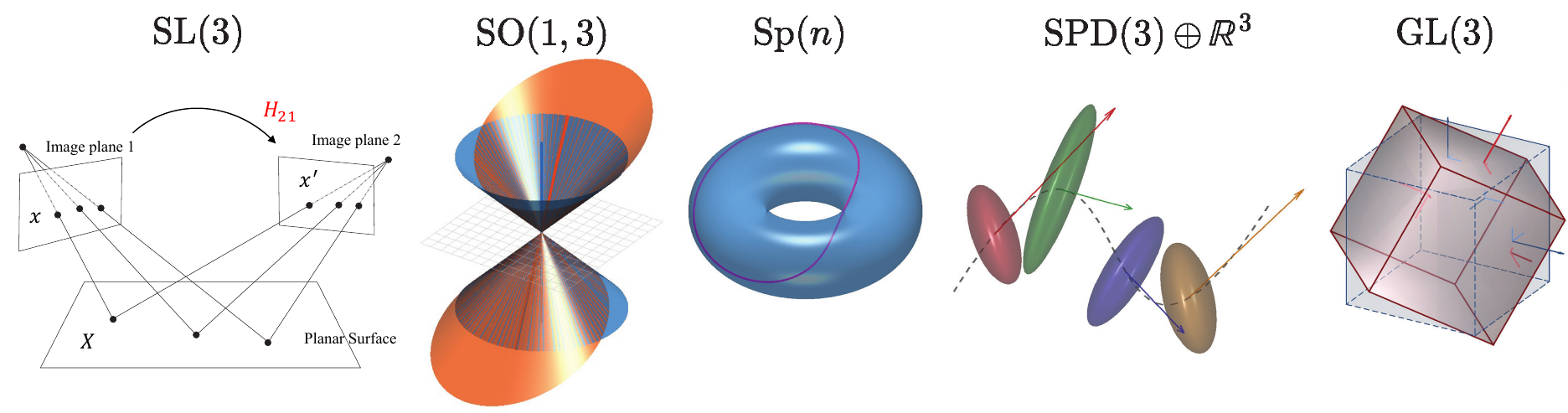}

    \caption{Examples of Lie groups and related manifolds in scientific applications. From left: the special linear group \(\mathrm{SL}(3)\) (image homography), the Lorentz group \(\mathrm{SO}^+(1,3)\) (spacetime symmetry), symplectic groups \(\mathrm{Sp}(n)\) (Hamiltonian mechanics), the \(\mathrm{SPD}(3) \times \mathbb{R}^{3}\) state space (probabilistic estimation), and the general linear group \(\mathrm{GL}(3)\) (modeling stress-strain in continuum mechanics).}
  \label{fig:domain_groups}
\end{figure}

\begin{figure}[t]
    \centering
    \includegraphics[width=0.7\linewidth]{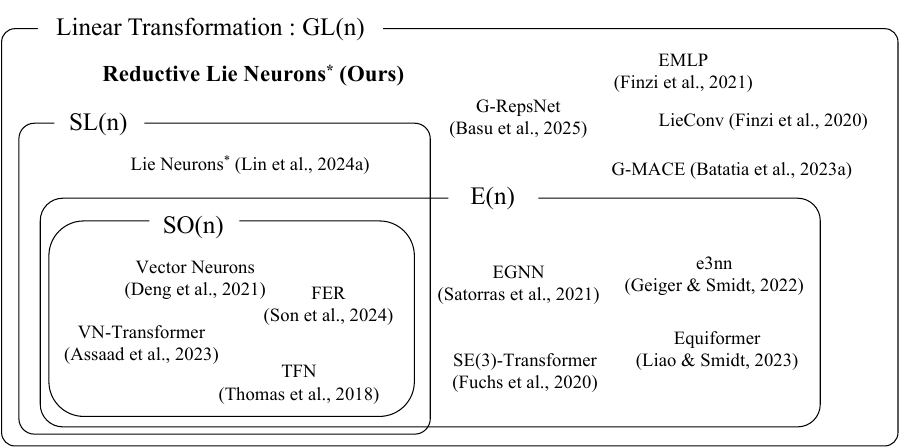}
    \caption{A taxonomy of selected representative equivariant neural architectures, categorized by the symmetries to which they are equivariant. This diagram situates our work, ReLNs, among other notable methods that are often specialized for subgroups such as $\mathrm{SL}(n)$, $\mathrm{SO}(n)$, or the Euclidean group $\mathrm{E}(n)$. An asterisk ($^*$) denotes methods equivariant to the group’s adjoint action.}
    \label{fig:taxonomy}
\end{figure}

\section{Related Work}
Encoding symmetry into neural architectures is a well-established inductive bias for improving data efficiency and generalization~\citep{bronstein2021geometric}. Most geometric deep learning work targets Euclidean isometries such as rotations and rigid motions (e.g., $\mathrm{SO}(n)$ and $\mathrm{SE}(n)$). For grid data, representative approaches include group-equivariant and steerable CNNs~\citep{cohen2016group, weiler2018learning, weiler2021general}. For sets and graphs, many methods build features from irreducible representations and tensorial message passing, including TFNs, $\mathrm{E}(n)$-GNNs, and equivariant transformers~\citep{thomas2018tensor, fuchs2020se, satorras2021n, batatia2022mace, liao2022equiformer, assaad2022vn, hutchinson2021lietransformer}, alongside lighter vector-based variants~\citep{deng2021vector, son2024intuitive}. Related theoretical work studies universality of invariant/equivariant architectures~\citep{maron2019universality}, and model-agnostic strategies such as canonicalization and frame averaging provide complementary routes to equivariance~\citep{puny2021frame, lin2024equivariance, kaba2023equivariance, panigrahi2024improved}. While several frameworks can represent higher-order tensors (e.g., TFN~\citep{thomas2018tensor} and EMLP~\citep{finzi2021practical}), scalable architectural design for \emph{general matrix-valued} quantities (e.g., covariances transforming as $\Sigma \mapsto R\Sigma R^\top$) remains less standardized in practice.

\textbf{Non-compact groups and $\mathrm{GL}(n)$.}
Extending equivariance to non-compact symmetries remains an active area, notably exemplified by Lorentz-equivariant architectures that operate on Minkowski geometry in particle physics \citep{bogatskiy2020lorentz, gong2022efficient, batatia2023a, zhdanov2024clifford}. Complementary research utilizes geometric (Clifford) algebras to represent features as multivectors, leading to frameworks such as Geometric Algebra Transformers (GATr) \cite{brehmer2023geometric}, Clifford-equivariant simplicial message passing \cite{liu2024clifford}, and lightweight designs like GLGENN \cite{filimoshina2025glgenn}. Approaches to non-compact groups include generalizations of group convolutions and kernels~\citep{xu2022unified, helwig2023group}, constructions based on matrix functions or reductive groups~\citep{batatia2023equivariant, batatia2023a}, Lie-algebra-based kernels and decompositions~\citep{finzi2020generalizing, mironenco2024lie, shumaylov2025lie}, differential-operator and differential-invariant methods~\citep{he2022neural, shen2020pdo, jenner2022steerable, sangalli2022differential, li2024affine}, and algebraic constraint-solving frameworks such as EMLP and G-RepsNet~\citep{finzi2021practical, basu2024g}. These lines provide general tools but can incur substantial overhead (e.g., integration/basis computation) or lack inductive biases such as locality. Within Lie-algebraic adjoint-equivariant learning, Lie Neurons~\citep{lin2023lie} establish $\mathrm{Ad}$-equivariant neural network but rely on semisimple algebras where the Killing form is non-degenerate, and thus do not directly cover reductive algebras. This gap is relevant when inputs include general matrix-valued quantities that transform by adjoint action. Classical invariant filtering explicitly respects these transformation rules~\citep{barrau2016invariant, hartley2020contact}, but is model-based and does not provide a learned backbone.

\textbf{Learning on 3D Gaussian splats.}
3D Gaussian Splatting represents geometry using anisotropic primitives parameterized by a mean $\mu \in \mathbb{R}^3$ and covariance $\Sigma \in \mathrm{SPD}(3)$. Recent learning frameworks over such primitives, including self-supervised learning and dynamics/segmentation models~\citep{ma2025large, lu2024manigaussian, zhang2024dynamics, ye2024gaussian}, often treat heterogeneous attributes as loosely coupled channels (e.g., concatenating $\mu$ with rotation parameters and applying generic architectures), which does not explicitly enforce the coupled transformation of $(\mu,\Sigma)$. This decoupled treatment disregards the underlying geometry, representing vectors and tensors as mixed latent feature that fail to recognize their transformation laws. Consequently, such models struggle to capture the symmetry of the scene without massive data augmentation. ReLN formulation targets this gap by embedding these heterogeneous attributes into a unified Lie-algebraic space, enforcing the coupled equivariant structure of $(\mu, \Sigma)$, exploiting the isomorphism between congruence and adjoint actions under $\mathrm{SO(3)}$.

ReLNs overcome these limitations by establishing exact adjoint equivariance on $\mathfrak{gl}(n)$ via a non-degenerate $\mathrm{Ad}$-invariant bilinear form. This approach bypasses the reliance on degenerate invariants and group-specific integration required by previous methods. \autoref{fig:taxonomy} situates ReLNs relative to prior equivariant frameworks.

\section{Preliminaries}
\label{sec:preliminaries}
We build equivariant networks on a reductive Lie algebra $\mathfrak{gl}(n)$, requiring equivariance under the adjoint action $\mathrm{Ad}_g(X)=gXg^{-1}$ for $g \in \mathrm{GL}(n)$. A key obstruction is that $\mathfrak{gl}(n)$ is reductive but not semisimple: its canonical  $\mathrm{Ad}$-invariant \emph{symmetric bilinear form} (the Killing form) is degenerate. Many equivariant architectures rely on a \emph{non-degenerate} invariant form to produce invariant scalars (e.g., gating/normalization) and to parameterize genuinely nonlinear equivariant operations (e.g., Vector Neurons~\citep{deng2021vector}, Lie Neurons~\citep{lin2023lie}); degeneracy makes these constructions ill-conditioned or effectively linear. We resolve this by introducing a non-degenerate  $\mathrm{Ad}$-invariant bilinear form on $\mathfrak{gl}(n)$, enabling fully nonlinear adjoint-equivariant layers. Our scope is reductive Lie algebras; extending to non-reductive cases (e.g., $\mathfrak{aff}(n)$) generally requires additional non-canonical completions. For details of Lie theory and background, see Appendix~\ref{app:prelim}.

\section{Reductive Lie Neurons: Architecture}
\label{sec:architecture}
We present ReLNs, a framework for building deep networks equivariant to the adjoint action of $\mathrm{GL}(n)$ on its Lie algebra $\mathfrak{gl}(n)$. The design centers on a non-degenerate, $\mathrm{Ad}$-invariant bilinear form on the reductive Lie algebra $\mathfrak{gl}(n)$ and a complete toolbox of equivariant linear maps, nonlinearities, pooling, and invariant readouts. 

\subsection{An $\mathrm{Ad}$-invariant bilinear form for reductive Lie algebras}
\label{sec:bilinear_form}
A key obstruction to extending Lie-algebraic designs (e.g., Lie Neurons~\citep{lin2023lie}) from semisimple Lie algebras to $\mathfrak{gl}(n)$ is that the Killing form may be \emph{degenerate} on non-semisimple algebras. A finite-dimensional Lie algebra is \emph{reductive} if it admits an ideal decomposition
$\mathfrak g=\mathfrak z(\mathfrak g)\oplus[\mathfrak g,\mathfrak g]$ with $[\mathfrak g,\mathfrak g]$ semisimple, and the Killing form is
$B_{\mathfrak g}(X,Y)=\tr(\mathrm{ad}_X\circ\mathrm{ad}_Y)$; see Appendix~\ref{app:reductive} for detailed definitions and the Killing form on $\mathfrak g$.
In particular, $\mathfrak{gl}(n)$ is reductive but not semisimple: it contains a nontrivial center on which $\mathrm{ad}$ vanishes, forcing degeneracy of the Killing form. To obtain well-posed invariant contractions and nonlinearities, we construct an $\mathrm{Ad}$-invariant \emph{non-degenerate completion} of the Killing form on any reductive Lie algebra.

\begin{definition}[Modified Bilinear Form on a Reductive Lie Algebra]
\label{def:mod-killing}
If $\mathfrak g$ is reductive, then $\mathfrak g=\mathfrak z(\mathfrak g)\oplus[\mathfrak g,\mathfrak g]$, where $\mathfrak z(\mathfrak g)$ is the center. Choose any $\mathrm{Ad}$-invariant inner product $\langle\cdot,\cdot\rangle_{\mathfrak z}$ on $\mathfrak z(\mathfrak g)$ (for connected $G$ this is automatic since $\mathrm{Ad}|_{\mathfrak z(\mathfrak g)}:G \to \mathrm{GL}(\mathfrak z(\mathfrak g))$ is locally constant. See Appendix~\ref{app:adjointrep} for the formal definition of \(\mathrm{Ad}\)) . For $Z_i\in\mathfrak z(\mathfrak g)$ and $X_i\in[\mathfrak g,\mathfrak g]$ define %
\begin{equation}
    \widetilde B(Z_1{+}X_1,\,Z_2{+}X_2) :=
\langle Z_1,Z_2\rangle_{\mathfrak z}+B(X_1,X_2),
\qquad 
\end{equation}
where $B$ denotes the Killing form on the semisimple ideal $[\mathfrak g,\mathfrak g]$.
\end{definition}

\begin{proposition} [Non-degeneracy and $\mathrm{Ad}$-invariance]
\label{prop:mod-killing}
The form $\widetilde B$ is symmetric, $\mathrm{Ad}$-invariant, and non-degenerate.
Moreover, $\mathfrak z(\mathfrak g)$ and $[\mathfrak g,\mathfrak g]$ are $\widetilde B$-orthogonal, with
$\widetilde B|_{[\mathfrak g,\mathfrak g]}=B$ and $\widetilde B|_{\mathfrak z(\mathfrak g)}=\langle\cdot,\cdot\rangle_{\mathfrak z}$.
\end{proposition}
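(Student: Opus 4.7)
The plan is to verify each of the four claims (symmetry, $\widetilde B$-orthogonality of the summands, the stated restrictions, $\mathrm{Ad}$-invariance, and nondegeneracy) by working block-wise against the reductive decomposition $\mathfrak g=\mathfrak z(\mathfrak g)\oplus[\mathfrak g,\mathfrak g]$, and then combining. The two facts I would lean on are classical: (i) for any Lie algebra the Killing form is $\mathrm{Ad}$-invariant, and (ii) the derived subalgebra of a reductive Lie algebra is semisimple, so Cartan's criterion gives nondegeneracy of its Killing form.

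First I would dispatch the easy items. Symmetry is immediate: $\langle\cdot,\cdot\rangle_{\mathfrak z}$ is symmetric by hypothesis and $B$ is symmetric as a Killing form; hence $\widetilde B$, defined as their block-diagonal sum, is symmetric. For the orthogonality and restriction claims, plugging $Z\in\mathfrak z(\mathfrak g)$ and $X\in[\mathfrak g,\mathfrak g]$ into the defining formula gives $\widetilde B(Z,X)=\langle Z,0\rangle_{\mathfrak z}+B(0,X)=0$, while setting $X_1=X_2=0$ (resp.\ $Z_1=Z_2=0$) recovers $\widetilde B|_{\mathfrak z(\mathfrak g)}=\langle\cdot,\cdot\rangle_{\mathfrak z}$ (resp.\ $\widetilde B|_{[\mathfrak g,\mathfrak g]}=B$) directly from the definition.

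Next I would handle $\mathrm{Ad}$-invariance. The key observation is that both summands of the reductive decomposition are $\mathrm{Ad}(G)$-stable: $\mathfrak z(\mathfrak g)$ is characteristic under any Lie algebra automorphism, and $[\mathfrak g,\mathfrak g]$ is preserved because $\mathrm{Ad}_g$ is a Lie algebra automorphism and hence maps brackets to brackets. So for any $g\in G$ and $Y_i=Z_i+X_i$, one has $\mathrm{Ad}_g Y_i=\mathrm{Ad}_g Z_i+\mathrm{Ad}_g X_i$ with the two summands lying in $\mathfrak z(\mathfrak g)$ and $[\mathfrak g,\mathfrak g]$, respectively. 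Then $\widetilde B(\mathrm{Ad}_g Y_1,\mathrm{Ad}_g Y_2)=\langle\mathrm{Ad}_g Z_1,\mathrm{Ad}_g Z_2\rangle_{\mathfrak z}+B(\mathrm{Ad}_g X_1,\mathrm{Ad}_g X_2)$, and each term is $\mathrm{Ad}$-invariant: the first by the hypothesis on $\langle\cdot,\cdot\rangle_{\mathfrak z}$, the second by the standard $\mathrm{Ad}$-invariance of the Killing form.

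The main obstacle, and the only substantive step, is nondegeneracy. Since the two blocks are $\widetilde B$-orthogonal, it suffices to show that $\widetilde B$ is nondegenerate on each summand separately. On $\mathfrak z(\mathfrak g)$ this is the assumed nondegeneracy of $\langle\cdot,\cdot\rangle_{\mathfrak z}$. On $[\mathfrak g,\mathfrak g]$, reductiveness of $\mathfrak g$ means $[\mathfrak g,\mathfrak g]$ is semisimple; I would then invoke Cartan's criterion to conclude that its Killing form is nondegenerate. Here a subtle point worth checking is that the Killing form of $\mathfrak g$ restricted to $[\mathfrak g,\mathfrak g]$ agrees with the intrinsic Killing form of $[\mathfrak g,\mathfrak g]$ viewed as a Lie algebra in its own right; this follows because $\mathrm{ad}_X$ for $X\in[\mathfrak g,\mathfrak g]$ annihilates $\mathfrak z(\mathfrak g)$, so with respect to the block decomposition $\mathrm{ad}_X\mathrm{ad}_Y$ acts trivially on the center block and coincides with the intrinsic adjoint action on the $[\mathfrak g,\mathfrak g]$ block, making the traces equal. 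Combining block nondegeneracy with the $\widetilde B$-orthogonal direct-sum structure completes the proof.
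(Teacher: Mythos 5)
Your proof is correct and follows essentially the same block-wise strategy as the paper: symmetry and orthogonality drop out of the definition, nondegeneracy is checked block-by-block using Cartan's criterion on the semisimple part and the hypothesis on the center, and $\mathrm{Ad}$-invariance is assembled from the invariance of each summand.

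There are two places where your route is a bit different from the paper's, both in your favor. First, for $\mathrm{Ad}$-invariance you argue directly at the group level: you observe that $\mathfrak z(\mathfrak g)$ and $[\mathfrak g,\mathfrak g]$ are both $\mathrm{Ad}(G)$-stable (characteristic ideals under any automorphism), so the block decomposition is respected by $\mathrm{Ad}_g$, and then you use invariance of the Killing form under arbitrary Lie algebra automorphisms plus the assumed invariance of $\langle\cdot,\cdot\rangle_{\mathfrak z}$. The paper's appendix instead establishes the infinitesimal $\mathrm{ad}$-invariance identity and integrates it along paths in the identity component; your argument avoids the integration step and applies uniformly to all of $G$, given the hypothesis that $\langle\cdot,\cdot\rangle_{\mathfrak z}$ is $\mathrm{Ad}(G)$-invariant. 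Second, you flag and resolve the latent ambiguity in ``the Killing form on $[\mathfrak g,\mathfrak g]$'' — whether it means the restriction of the ambient Killing form or the intrinsic one — by noting that $\mathrm{ad}_X\mathrm{ad}_Y$ annihilates the central block (since the center is killed by every $\mathrm{ad}$) and coincides with the intrinsic adjoint on the $[\mathfrak g,\mathfrak g]$ block because $[\mathfrak g,\mathfrak g]$ is an ideal, so the two traces agree. The paper silently uses the intrinsic form; your check is a useful addition that makes the definition unambiguous. One cosmetic remark: your phrasing suggests the annihilation of $\mathfrak z(\mathfrak g)$ is special to $X\in[\mathfrak g,\mathfrak g]$, whereas it holds for every $X\in\mathfrak g$ by the very definition of the center; the argument is still correct as written.
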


\begin{proof}[Proof sketch]
$B$ vanishes on $\mathfrak z(\mathfrak g)$ and is $\mathrm{Ad}$-invariant; by construction $\langle\cdot,\cdot\rangle_{\mathfrak z}$ is $\mathrm{Ad}$-invariant. Symmetry is immediate. Nondegeneracy follows since $B$ is nondegenerate on the semisimple ideal and $\langle\cdot,\cdot\rangle_{\mathfrak z}$ is nondegenerate on the center; orthogonality holds because $B(\mathfrak z(\mathfrak g),[\mathfrak g,\mathfrak g])=0$. Detailed proofs are provided in Appendix~\ref{app:bilinear_details}.
\end{proof}
% $B$ is $\mathrm{Ad}$-invariant and vanishes on $\mathfrak z(\mathfrak g)$. By $\mathrm{Ad}$-invariance of $\langle\cdot,\cdot\rangle_{\mathfrak z}$, $\widetilde B$ is $\mathrm{Ad}$-invariant. 
% Symmetry is clear from the definition. 
% For nondegeneracy, if $\widetilde B(Z{+}X,\cdot)\equiv0$, then $B(X,\cdot)=0$ on $[\mathfrak g,\mathfrak g]$, 
% hence $X=0$ since the Killing form $B$ is nondegenerate on the semisimple ideal, and 
% $\langle Z,\cdot\rangle_{\mathfrak z}=0$ implies $Z=0$.
% Orthogonality follows because $B(\mathfrak z(\mathfrak g),[\mathfrak g,\mathfrak g])=0$.

For our primary case $\mathfrak g=\mathfrak{gl}(n)=\mathbb RI \oplus \mathfrak{sl}(n)$, we use the closed form
\begin{equation}
\widetilde B(X,Y) \;=\; 2n\,\tr(XY)\;-\;\tr(X)\tr(Y).
\label{eq:killing-gl-concrete}
\end{equation}
This $\widetilde B$ is our basic $\mathrm{Ad}$-invariant contraction throughout ReLN architecture.
% Additional discussion (including its relation to the classical Killing form and the $\mathfrak{so}(3)\simeq\mathbb R^3$ correspondence) is deferred to Appendix~\ref{app:connections}.

\paragraph{Verification and Relation to Prior Bilinear Forms.}
Our concrete form in Eq.~\ref{eq:killing-gl-concrete} satisfies the conditions of Definition~\ref{def:mod-killing}. Decomposing a matrix $X = X_0 + \frac{1}{n}\mathrm{tr}(X)I$ (where $X_0 \in \mathfrak{sl}(n)$) reveals that our form separates orthogonally:
\begin{equation}
\widetilde{B}(X, Y) = \underbrace{2n \cdot \mathrm{tr}(X_0 Y_0)}_{B_{\mathfrak{sl}(n)}(X_0, Y_0)} + \underbrace{\mathrm{tr}(X)\mathrm{tr}(Y)}_{\text{Inner product on } \mathbb{R}I}.
\end{equation}
This decomposition directly shows how $\widetilde{B}$ serves as a generalization of prior work. The first term is the Killing form on the semisimple part, which is the tool used in Lie Neurons~\citep{lin2023lie}. The second term is a standard inner product on the center, which, under the isomorphism $\mathfrak{so}(3)\simeq\mathbb{R}^3$, recovers the dot product used in Vector Neurons~\citep{deng2021vector}. For $\mathfrak{so}(3)$, the hat map intertwines the vector action and the adjoint action, so $\mathrm{Ad}$-equivariance on $\mathfrak{so}(3)$ corresponds to standard $\mathrm{SO}(3)$-equivariance on $\mathbb{R}^3$; see Lemma~\ref{lem:so3_hat_adjoint}. Our single form thus unifies these approaches, extending to the full reductive algebra $\mathfrak{gl}(n)$ (details can be found in Appendix~\ref{app:connections}).

\begin{figure}[t]
    \centering
    \includegraphics[width=0.6\textwidth]{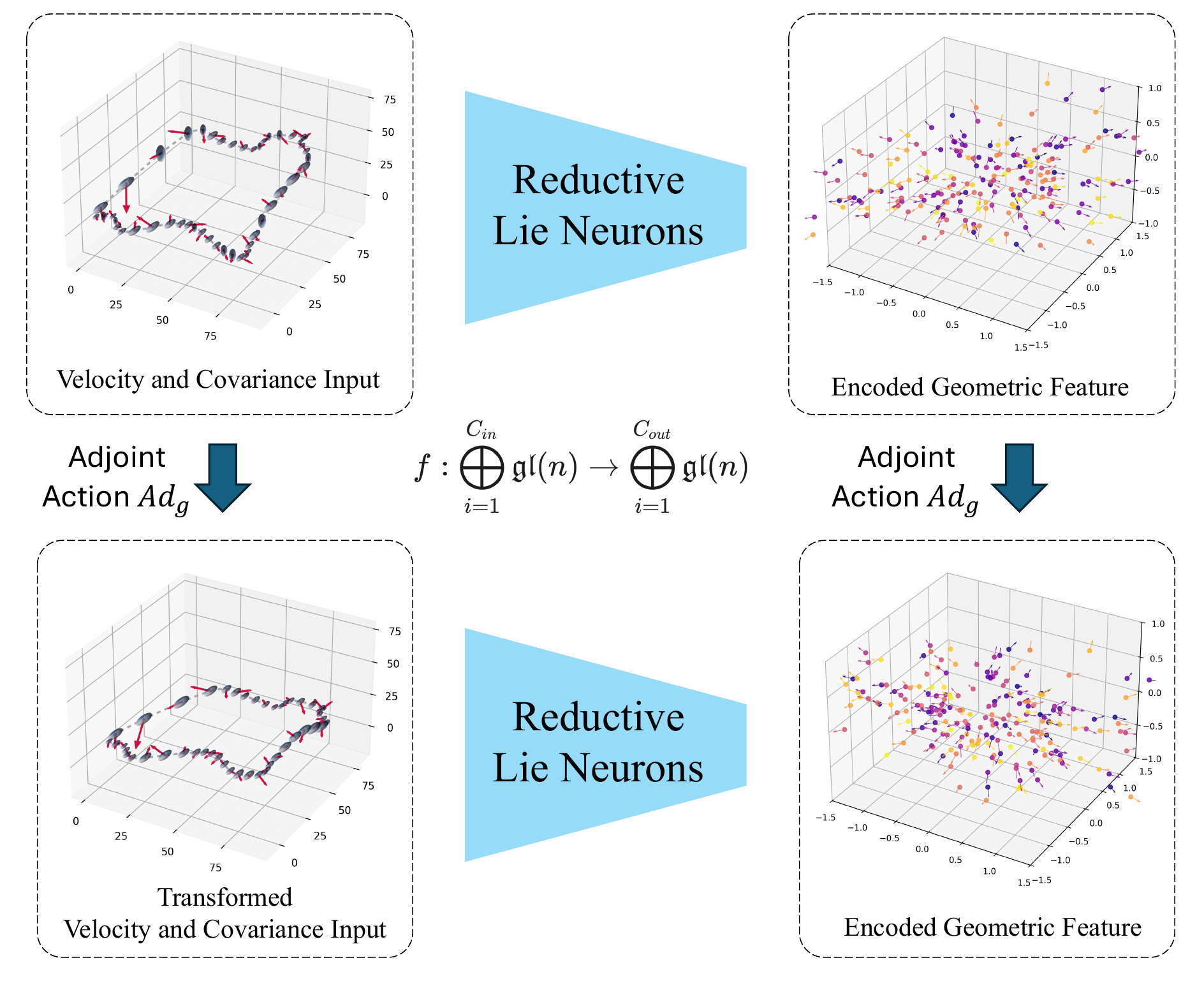}
    \caption{Adjoint equivariance using a unified representation for diverse geometric inputs. Our framework embeds inputs with different transformation rules, such as velocity (\(v \mapsto Rv\)) and covariance (\(\Sigma \mapsto R\Sigma R^T\)), into a common Lie algebra. Therefore, they transform under the same adjoint action \(\mathrm{Ad}_g\), with which our network \(f\) commutes as shown in the diagram.}
    \label{fig:equiv_framework}
\end{figure}

\subsection{The ReLN Layer Toolbox}

We represent multi-channel input as $x\in\mathbb R^{K\times C}$, where each column $x_c\in\mathbb R^K$ corresponds to a matrix $X_c\in\mathfrak g$ (via the vee/hat isomorphism, Appendix~\ref{app:prelim}) .

\paragraph{ReLN-Linear.}  A linear map applied to the channel dimension $f(x;W)=xW$ with $W\in\mathbb R^{C\times C'}$ is equivariant: the group acts on the left (geometric dimension) while $W$ acts on the right (channel dimension), and thus these operations commute (formal proof in Appendix~\ref{app:layer_details}).

\paragraph{Equivariant Nonlinearities.} We introduce two complementary nonlinear primitives; full definitions, parameterizations, and stability prescriptions are deferred to Appendix~\ref{app:layer_details}.
\begin{itemize}
  %   \item \textbf{ReLN-ReLU}: To overcome the non-equivariance of elementwise activations, we use our form $\widetilde{B}$ to define a directional nonlinearity (invariant gate). Each vector feature $x_c$ is rectified along a learnable direction $d_c$ via the update rule:
  %   % \textbf{ReLN-ReLU} Using $\widetilde B$ to build an invariant gate, each channel feature is updated by
  %   \begin{equation}
  %   x_c' \;=\; x_c \;+\; \max\big(0,\;\widetilde B(x_c^\wedge,d_c^\wedge)\big)\, d_c,
  %   \end{equation}
  % where $x_c^\wedge,d_c^\wedge\in\mathfrak g$ are the matrix forms of the channel feature and a learnable direction. Because $\widetilde B(\cdot,\cdot)$ is $\mathrm{Ad}$-invariant, the scalar gate is invariant and the vector update is equivariant.

    % \item \textbf{ReLN-ReLU}: To avoid basis-dependent elementwise activations, we use $\widetilde{B}$ to build an \emph{invariant scalar gate} and apply it along an \emph{equivariant direction}.
    % For each channel, we obtain a direction $d_c$ from $x$ via an equivariant map (e.g., an equivariant linear mixing), and update
    % \begin{equation}
    % x_c' \;=\; x_c \;+\; \sigma\!\big(\widetilde B(x_c^\wedge,d_c^\wedge)\big)\, d_c,
    % \qquad \sigma(t)=\max(0,t),
    % \end{equation}
    % where $x_c^\wedge,d_c^\wedge\in\mathfrak g$ denote the matrix forms.
    % Since $\widetilde B(\cdot,\cdot)$ is $\mathrm{Ad}$-invariant, the gate $\sigma(\widetilde B(\cdot,\cdot))$ is invariant; because $d_c$ transforms equivariantly with $x_c$, the resulting vector update is equivariant.

    \item \textbf{ReLN-ReLU}: To avoid basis-dependent elementwise activations, we use $\widetilde{B}$ to build an \emph{invariant scalar gate} and apply it along an \emph{equivariant direction}. For input features $x \in \mathbb{R}^{K \times C}$, we explicitly define the equivariant reference direction $d = xW$, where $W \in \mathbb{R}^{C \times C}$ is a learnable weight matrix. The feature is then updated via the following gated mechanism:
    \begin{equation}
        x_c' = \begin{cases} x_c, & \text{if } \widetilde B(x_c^\wedge,d_c^\wedge) \le 0 \\ x_c + \widetilde B(x_c^\wedge,d_c^\wedge) d_c, & \text{otherwise} \end{cases}
    \end{equation}
    where $x_c^\wedge, d_c^\wedge \in \mathfrak{g}$ denote the matrix forms. Since $\widetilde B(\cdot,\cdot)$ is $\mathrm{Ad}$-invariant, the gating condition is invariant; because $d$ is an equivariant linear map of $x$, the resulting vector update remains strictly equivariant.

    \item \textbf{ReLN-Bracket.} Following prior work~\citep{lin2023lie}, we include a layer that leverages the Lie bracket (matrix commutator). This operation is an $\mathrm{Ad}$-equivariant primitive on the Lie algebra that creates nonlinear interactions by measuring the non-commutativity of features. The layer applies two independent linear maps, parameterized by weights $W_a, W_b \in \mathbb{R}^{C \times C}$, to the input channels $x_{\mathrm{in}}$ to produce two intermediate features, computes their commutator, and injects the vectorized result as a shared residual:
    \begin{equation}
        x_{\mathrm{out}} = x_{\mathrm{in}} + \big(\left[ (x_{\mathrm{in}} W_a)^\wedge, (x_{\mathrm{in}} W_b)^\wedge \right]\big)^\vee.
    \end{equation}
    
\end{itemize}
\paragraph{Equivariant Pooling and Invariant Layers.}
The final components of the ReLN toolbox enable feature aggregation and the production of invariant outputs.

\begin{itemize}
    \item \textbf{Max-Killing Pooling:} Given an unordered set $\{x_n\}_{n=1}^N$ with $x_n\in\mathbb{R}^{K\times C}$, we form a \emph{dynamic} query via the shared ReLN-Linear map $d_n = x_n W_q$. For each channel $c$, we select the index $n^*(c)=\arg\max_n \widetilde{B}\big(X_{n,c}, (d_n)_c^\wedge\big)$ and return the pooled feature $X_c^{\max}=X_{n^*(c),c}$. Because $W_q$ acts only on channels, $(d_n)_c^\wedge$ transforms equivariantly with $X_{n,c}$, making the score $\widetilde{B}(\cdot,\cdot)$ invariant; parameter sharing across $n$ and the symmetric $\arg\max$ aggregation preserve permutation invariance.
    \item \textbf{Invariant Layer:} To produce a group-invariant output, this layer contracts feature $X_c$ using the form $\widetilde{B}$. The resulting scalar, $y_c = \widetilde{B}(X_c, X_c)$, is invariant by construction.
\end{itemize}

\paragraph{Unifying geometric representations.}
By operating directly on $n\times n$ matrix representations, ReLNs provide a unified primitive for vectors, matrices, and higher-order geometric objects (e.g., covariances). This allows ReLNs to handle a broader class of geometric inputs without resorting to separate specialized architectures; empirical validation is presented in Section~\ref{subsec:covariance}.

\section{Experiments}
\label{sec:experiments}

We evaluate ReLNs along three axes: (i) \emph{exact adjoint equivariance} under broad linear symmetries (including reductive and non-compact groups), (ii) \emph{learning from coupled heterogeneous geometry} (e.g., velocities with covariances), and (iii) \emph{accuracy--efficiency trade-offs} against widely used equivariant and self-supervised baselines.

We consider (a) algebraic benchmarks for adjoint-equivariant learning on $\mathfrak{sl}(3)$ and $\mathfrak{sp}(4)$, (b) uncertainty-aware drone state estimation with joint velocity--covariance processing under $\mathrm{SO}(3)$ frame changes, (c) the EMLP double-pendulum benchmark assessing accuracy and per-step efficiency, and (d) learning from 3D Gaussian-splat primitives in ShapeSplat~\cite{ma2025large} with comparisons to Gaussian-MAE. 
We compare against non-equivariant baselines (MLP, ResNet), Lie-algebraic equivariant models (Lie Neurons), and strong equivariant architectures including Vector Neurons (VN), Tensor Field Networks (TFN), and an $\mathrm{SE}(3)$-Transformer. Additional Lorentz-equivariant top-tagging results are deferred to Appendix~\ref{app:top_tagging}.

\subsection{Algebraic Benchmarks on Semisimple Lie Algebras}
To verify that our general $\mathfrak{gl}(n)$ framework correctly generalizes to semisimple subalgebras, we evaluate ReLN on two Lie-algebraic benchmarks first introduced by Lie Neurons~\citep{lin2023lie}.
% \tyl{Just out of curiosity, we didn't test anything related to $GL(N)$ right? We have been claiming about the equivariance to $GL(N)$ without experiments, probably expect the reveiwers will ask about it.}

\begin{table}[t]
  \centering
  \scriptsize
  \caption{Platonic solid classification (mean $\pm$ std over 5 runs). ID = in-distribution; RC = rotated-camera (10 random $\mathrm{SO}(3)$ test rotations). ``+Aug'' denotes training with random $\mathrm{SO}(3)$ augmentation of input homographies. Higher is better ($\uparrow$).}
  \label{tab:platonic}
  
  % [수정 1] 여백을 10pt -> 3pt로 줄임
  \setlength{\tabcolsep}{3pt}
  
  % [수정 2] 표 크기를 컬럼 너비에 딱 맞게 자동 조절
  \resizebox{0.8\linewidth}{!}{
      \begin{tabular}{l r c c}
        \toprule
        \textbf{Model} & \textbf{\# Params} & \textbf{ID Acc} (mean $\pm$ std) & \textbf{RC Acc} (mean $\pm$ std) \\
        \midrule
        MLP  & 206,339 & 95.76\% $\pm$ 0.65\% & 36.54\% $\pm$ 0.99\% \\
        MLP + Aug & 206,339 & 81.47\% $\pm$ 0.77\% & 81.20\% $\pm$ 2.34\% \\
        MLP (wider) & 411,479 & 96.82\% $\pm$ 0.53\% & 36.55\% $\pm$ 0.34\% \\
        MLP (wider) + Aug & 411,479 & 85.22\% $\pm$ 1.46\% & 83.43\% $\pm$ 0.51\% \\
        \midrule
        Lie Neurons& 331,272 & 99.62\% $\pm$ 0.25\% & 99.61\% $\pm$ 0.14\% \\
        \textbf{ReLN (Ours)} & 331,272 & \textbf{99.78\% $\pm$ 0.04\%} & \textbf{99.78\% $\pm$ 0.04\%} \\
        \bottomrule
      \end{tabular}
  }
\end{table}

\subsubsection{Platonic Solid Classification}
\label{subsec:platonic}

% \paragraph{Platonic Solid Classification on $\mathfrak{sl}(3)$.}

We first validate our model on the Platonic solid classification benchmark from~\citep{lin2023lie}, testing the adjoint-equivariance where camera rotations induce a conjugation action on the homographies between the projected faces of the solid, represented as $\mathrm{SL}(3)$. Full experimental details are provided in Appendix~\ref{app:hyperparams}.

The results, summarized in Table~\ref{tab:platonic}, confirm that non-equivariant baselines fail to generalize to rotated camera views. This generalization gap persists even with augmentation or increased capacity, as our wider MLP variant with approximately double the parameters shows negligible improvement on the out-of-distribution test set. In contrast, the ReLN model achieves near-perfect accuracy with robustness, matching the performance of the Lie Neurons while demonstrating improved results. This result validates that our general $\mathfrak{gl}(n)$ framework operates effectively on common semisimple subalgebras, as its built-in adjoint-equivariance on the parent group yields robust behavior when restricted to subgroups like $\mathrm{SO}(3)$ and $\mathrm{SL}(3)$.

\subsubsection{Invariant Function Regression on $\mathfrak{sp}(4)$.}
\label{subsec:sp4}

To probe algebraic generality beyond $\mathfrak{sl}(n)$, we regress a highly nonlinear $\mathrm{Sp}(4)$-invariant scalar on the real symplectic Lie algebra $\mathfrak{sp}(4,\mathbb{R})$ (the Lie algebra underlying Hamiltonian/symplectic symmetry). For $X,Y\in\mathfrak{sp}(4,\mathbb{R})$, the target is
\begin{align}
\nonumber g(X,Y)=&\sin\!\big(\tr(XY)\big)+\cos\!\big(\tr(YY)\big) \\&-\tfrac{1}{2}\tr(YY)^3
+\det(XY)+\exp\!\big(\tr(XX)\big).
\end{align} 
We sample 10k training and 10k test pairs and compare ReLN to MLP baselines (with/without $\mathrm{Sp}(4)$ conjugation augmentation) and Lie Neurons.
We report test MSE, MSE averaged over 500 random adjoint actions, and the invariance error.

As shown in Table~\ref{tab:sp4}, non-equivariant MLPs are orders of magnitude less accurate and exhibit substantially larger invariance error, indicating that they fail to reliably capture the underlying group structure even with \(\mathrm{Sp}(4)\) conjugation augmentation. In contrast, Lie Neurons and ReLN achieve low test MSE and near-zero invariance error, demonstrating stable invariant regression on \(\mathfrak{sp}(4)\). At this parameter scale, the performance of the two equivariant models is comparable. These results support that our model matches the accuracy of specialized Lie-algebraic baselines while retaining a unified construction across diverse Lie algebras.

\begin{table}[t] % * 제거, [t] 또는 [h] 사용
  \centering
  \scriptsize
  \caption{Regression performance and invariance error on \(\mathfrak{sp}(4)\). “Tr. Aug.” indicates training augmentation.}
  \label{tab:sp4}
  
  % 컬럼 간 여백을 줄여 공간 확보
  \setlength{\tabcolsep}{3pt} 
  
  % 표 전체 폭을 현재 컬럼 너비에 맞춤
  \resizebox{0.8\columnwidth}{!}{ 
    \begin{tabular}{l c r cc c}
      \toprule
      \multirow{2}{*}{Model}
        & Tr. Aug. % 헤더 길이를 살짝 줄임
        & \multirow{2}{*}{\# Params}
        & \multicolumn{2}{c}{Test Aug.}
        & Inv. Error \\ 
      \cmidrule(lr){4-5}
        & & & ID & SP(4) & \\
      \midrule
      \multirow{2}{*}{MLP 256}  
        & Id    & 137{,}217 & 0.126 & 1.360 & 0.722 \\ 
        & SP(4) & 137{,}217 & 0.192 & 0.587 & 0.476 \\
      \multirow{2}{*}{MLP 512}  
        & Id    & 536{,}577 & 0.107 & 0.906 & 0.585 \\ 
        & SP(4) & 536{,}577 & 0.123 & 0.446 & 0.374 \\
      \midrule
      Lie Neurons  
        & Id    & 263{,}170 & \(5.83\times10^{-4}\) & \(5.84\times10^{-4}\) & \({3.84\times10^{-7}}\) \\
      ReLN (ours)
        & Id    & 263{,}170 & \({5.14\times10^{-4}}\) & \({5.14\times10^{-4}}\) & \(4.73\times10^{-7}\) \\
      \bottomrule
    \end{tabular}
  }
\end{table}

\subsection{Drone State Estimation with Geometric Uncertainty}
\label{subsec:covariance}

We evaluate ReLN on a drone state-estimation task that reconstructs 3D trajectories from noisy velocity measurements $\mathbf{v}\in\mathbb{R}^3$ and time-varying uncertainty covariances $C\in\mathrm{SPD}(3)$ during highly dynamic flights. This setup requires a model to jointly process vector and matrix data in a geometrically consistent and uncertainty-aware manner.

\paragraph{Experimental Setup.}
We use a large-scale synthetic dataset of 200 aggressive trajectories (over 13 hours of flight); details are in Appendix~\ref{app:drone_exp_details}.
We evaluate ReLN against: non-equivariant 1D ResNets, VN~\citep{deng2021vector}, and two spherical-harmonics-based equivariant architectures that use steerable bases and tensor-product coupling to construct equivariant messages: TFN~\citep{thomas2018tensor} and SE(3)-Transformers~\citep{fuchs2020se}.

To isolate the effect of uncertainty fusion, we test three variants for each equivariant model: velocity-only, velocity + covariance, and velocity + log-covariance. For baselines such as VN that cannot natively process matrix inputs, we implement an eigendecomposition-based strategy to ingest covariance data (details in Appendix~\ref{app:impl}). To ingest covariance data in TFN, SE(3)-Transformer, we apply irreducible representation (irrep) decomposition. 

\paragraph{Irrep decomposition vs.\ unified adjoint processing.}

Steerable baselines (TFN, SE(3)-Transformer) represent $C$ through irreducible components (trace and traceless-symmetric parts, corresponding to $\ell=0$ and $\ell=2$), and learn interactions with $\mathbf{v}$ via tensor-product coupling between fibers. This decomposition ignores the holistic geometry of the measurement-uncertainty pair, relying on tensor-product coupling to reconnect these components. See Appendix~\ref{app:irrep_decomp} for the detailed architecture design and decomposition. In contrast, ReLN embeds both $\mathbf{v}$ and $C$ into a common matrix-valued representation with a single conjugation rule: $\mathbf{v}$ is lifted to $\mathfrak{so}(3)\subset\mathfrak{gl}(3)$ and $C$ (or $\log C$) is provided as a matrix feature.
This yields a uniform $\mathrm{Ad}$-action for the pair and allows gating/normalization using the same $\mathrm{Ad}$-invariant non-degenerate form $\widetilde B$. The final velocity estimate is obtained by projecting the network output to its skew-symmetric component; implementation details are deferred to Appendix~\ref{app:drone_exp_details}.

\subsubsection{Results and Analysis}
\label{sec:results_analysis}

\begin{table}[t]
\centering
\caption{Drone state estimation performance (ATE and RTE in meters). $v, C, \log C$ denote velocity, covariance, and log-covariance inputs. Refer to Appendix~\ref{app:drone_full_results} for the exhaustive performance report including all variants and metrics.}
\label{tab:drone_results_slim}
\scriptsize
\setlength{\tabcolsep}{2.5pt} % 컬럼 간격 최소화
\resizebox{0.9\columnwidth}{!}{
\begin{tabular}{l c SS SS}
\toprule
\multirow{2}{*}{\textbf{Model}} & \multirow{2}{*}{\textbf{Input}} & \multicolumn{2}{c}{\textbf{ID}} & \multicolumn{2}{c}{\textbf{SO(3)}} \\
\cmidrule(lr){3-4} \cmidrule(lr){5-6}
& & {ATE $\downarrow$} & {RTE $\downarrow$} & {ATE $\downarrow$} & {RTE $\downarrow$} \\
\midrule
\textit{Non-Equiv.} \\
\quad ResNet & $(v, C)$ & 205.11 & 106.07 & 213.26 & 109.37 \\
\midrule
\textit{Equiv. Baselines} \\
\quad VN \citep{deng2021vector} & $v$ & 17.36 & 13.51 & 17.36 & 13.51 \\
\quad VN \citep{deng2021vector} & $(v, C)$ & 191.78 & 98.39 & 190.22 & 98.26 \\
\quad TFN \citep{thomas2018tensor} & $(v, C)$ & 17.56 & 14.40 & 17.56 & 14.40 \\
\quad SE(3)-Tr. \citep{fuchs2020se} & $(v, C)$ & 21.67 & 16.77 & 21.67 & 16.77 \\
\quad Lie Neurons \citep{lin2023lie} & $(v, C)$ & 16.86 & 13.65 & 16.86 & 13.65 \\
\midrule
% \textit{Lie-algebraic} \\
\textit{Our Equivariant Models} \\
\quad {ReLN (Ours)} & $v$ & 16.85 & 12.70 & 16.85 & 12.70 \\
\quad {ReLN (Ours)} & $(v, C)$ & 16.49 & 13.02 & 16.49 & 13.02 \\
\quad {ReLN (Ours)} & $(v, \log C)$ & \textbf{13.92} & \textbf{11.04} & \textbf{13.92} & \textbf{11.04} \\
\quad {ReLN (Ours)} (no semisimple) & $(v, \log C)$ & 16.27 & 12.65 & 16.27 & 12.65 \\

\bottomrule
\end{tabular}
}
\end{table}

\begin{figure*}[htbp] % Use [htbp] for good placement (Here, Top, Bottom, Page)
    % \centering
    \begin{center}
    \includegraphics[width=\textwidth]{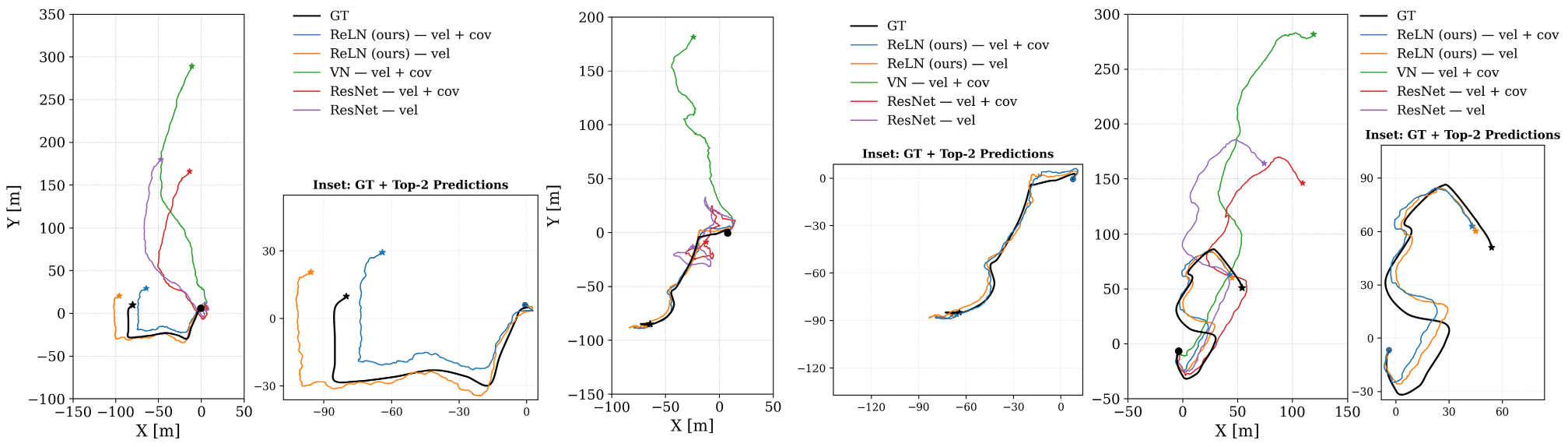}
    
    % The caption text you requested.
    \caption{
        Trajectory reconstruction across flight difficulties: high (left), best-case (middle), and medium (right). ReLN models consistently track the ground truth (black) with high fidelity, especially when leveraging covariance. Insets provide a magnified view of the two best-performing variants (ReLN) to highlight their accuracy.
    }
    % A label is crucial for referencing the figure in your text, e.g., "As shown in Figure~\ref{fig:main_trajectories}..."
    \label{fig:main_trajectories}
    \end{center}
\end{figure*}

Table~\ref{tab:drone_results_slim} summarizes the main findings (full results in Appendix~\ref{app:drone_full_results}).

\textbf{Equivariance is necessary for robustness and accuracy.} Non-equivariant ResNets exhibit large trajectory errors and do not reliably benefit from adding covariance, indicating that simply concatenating matrix-valued uncertainty does not reliably yield geometry-consistent fusion. Moreover, their errors remain high under test-time $\mathrm{SO}(3)$ rotations, confirming poor robustness to measurement-frame changes.

% Second, \emph{being equivariant is not sufficient---the interface for covariance matters.}
% Among equivariant baselines, velocity-only variants already perform well (e.g., VN, TFN, and SE(3)-Transformer), but their behavior diverges once covariance is introduced.
% Notably, the VN covariance variant degrades to near non-equivariant performance, suggesting that an eigendecomposition-based interface can break the coupled structure between a measurement and its uncertainty (e.g., separating eigenvectors from eigenvalues), making it difficult to learn a stable uncertainty-aware weighting.

\textbf{The Necessity of Structural Compatibility in Covariance Fusion.} Among equivariant baselines, velocity-only models already attain low error (e.g., VN), but performance becomes highly sensitive once covariance is introduced.
In particular, the VN $(v,C)$ variant collapses toward non-equivariant performance (Table~\ref{tab:drone_results_slim}; Table~\ref{tab:drone_full_comparison}), consistent with the fact that an eigendecomposition-based interface can separate coupled degrees of freedom (eigenvectors vs.\ eigenvalues) and yield an unstable or non-geometric uncertainty signal for learning.  By contrast, TFN and SE(3)-Transformer can incorporate $(v,C)$ through steerable tensor representations, leading to moderate improvements.

\textbf{Local equivariant operators offer superior robustness for uncertainty fusion.}
While SE(3)-Transformer shows a slight performance advantage in the velocity-only regime ($v$), the introduction of covariance uncertainty shifts the advantage to TFN. As shown in Table~\ref{tab:drone_full_comparison}, TFN consistently improves over SE(3)-Transformer across all covariance-integrated variants (e.g., $(v, \log C)$). This pattern suggests that while attention mechanisms can effectively model global dependencies in simple velocity sequences, local convolutional aggregation provides a stronger and more stable inductive bias for processing high-frequency dynamics and uncertainty.

\paragraph{ReLN yields the most consistent uncertainty-aware gains.}
ReLN achieves the best overall performance, and its strongest configuration uses $(v,\log C)$ (ATE $13.92$, RTE $11.04$). Unlike tensorial pipelines, ReLN represents velocity and covariance in a shared matrix-algebraic space with a single conjugation rule, enabling uncertainty-conditioned contractions and gating without violating equivariance. The improvement from $C$ to $\log C$ further supports that respecting the intrinsic geometry of $\mathrm{SPD}(3)$ via the matrix logarithm improves conditioning for learning.

\paragraph{Ablative Validation of Reductive Decomposition.}
ReLN improves over its \emph{semisimple-only} variant (the no-center model, equivalent to Lie Neurons), indicating that incorporating the central component $\mathfrak z(\mathfrak g)$ is beneficial for uncertainty-aware estimation. Conversely, removing the semisimple ideal (retaining only the center) degrades performance, increasing ATE from $13.92$ to $16.27$ (Table~\ref{tab:drone_results_slim}). Together, these ablations support that the best performance arises from modeling the \emph{full} reductive structure.

\paragraph{Discussion.}
\label{subsubsec:discussion_cov}
Overall, the results indicate that ReLNs act as a \emph{geometry- and uncertainty-aware} estimator, remaining stable under random measurement-frame changes (test-time $\mathrm{SO}(3)$ rotations). While ReLN is not a classical recursive (Markovian) filter, it learns to fuse velocity sequences with their time-varying covariances through uncertainty-dependent modulation, improving trajectory reconstruction. This suggests ReLNs as a modular backbone for estimation pipelines that must handle matrix-valued uncertainty. A natural direction is to characterize when the learned fusion approximates uncertainty-adaptive integration, and to integrate ReLNs with classical state-estimation frameworks.

\subsection{Equivariance for 3D Gaussian Splatting (3DGS) }
\label{sec:3dgs_experiment}

\begin{figure*}[h]
    \centering
    \includegraphics[width=1.0\linewidth]{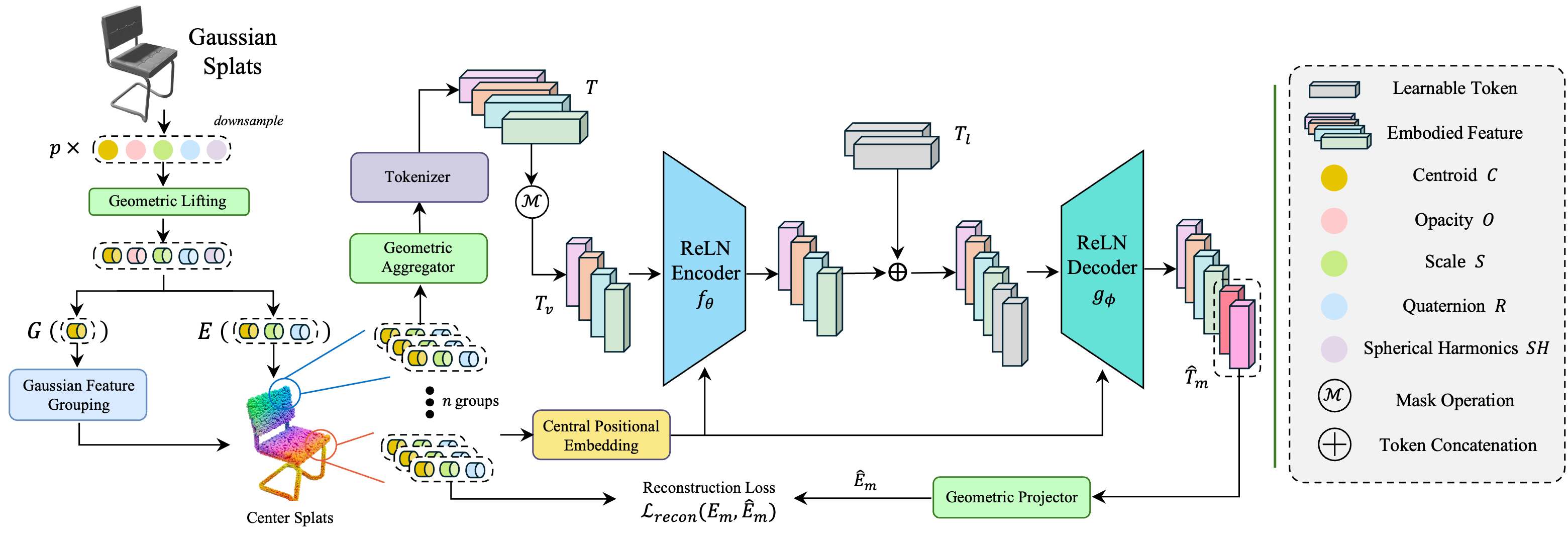} 
    \caption{\textbf{The ReLN-integrated Gaussian-MAE Framework.} Our architecture pipelines raw 3D Gaussian splats through a \textbf{Geometric Lifting} stage to map them into the $\mathfrak{gl}(3)$ Lie algebra. The model explicitly bifurcates the data flow: (1) \textbf{Active Geometry} (position $\mu$, covariance $\Sigma$) is processed via the \textbf{ReLN Encoder} and \textbf{Decoder} to maintain $\mathrm{GL}(3)$-equivariance, while (2) \textbf{Invariant Attributes} (color $c$, opacity $\alpha$) are integrated as Type-0 features. A dedicated \textbf{Central Positional Embedding} ensures stable spatial context. Finally, the \textbf{Geometric Projector} utilizes the Vee map and Killing form to reconstruct the physical Gaussian parameters while preserving holistic symmetry.}
    \label{fig:framework_overview}
\end{figure*}

Standard point cloud networks typically process only spatial coordinates. However, 3DGS represents scenes using anisotropic 3D Gaussians parameterized by a mean position $\mu \in \mathbb{R}^3$ and a covariance matrix $\Sigma \in \mathrm{SPD}(3)$. This introduces a geometric challenge: a global rotation $R \in \mathrm{SO}(3)$ acts differently on these components, where $\mu$ transforms as a vector ($R\mu$) and $\Sigma$ transforms via congruence ($R\Sigma R^\top$).

To evaluate ReLNs in this multi-modal geometric setting, we adapt the masked autoencoder framework proposed by \citet{ma2025large}, which establishes a self-supervised pretraining baseline for 3D Gaussian Splats. While their original architecture processes Gaussian parameters as loosely coupled features using standard Transformers, we redesign the encoder and decoder using ReLN blocks to enforce $\mathrm{GL}(3)$-equivariance. The overall architecture of our ReLN-integrated Gaussian-MAE is illustrated in Figure~\ref{fig:framework_overview}. Specifically, we unify these geometric types by embedding them into the Lie algebra $\mathfrak{gl}(3)$: the mean position $\mu$ is treated as a translation, while the covariance $\Sigma$ is mapped to the linear subspace $\mathrm{Sym}(3) \subset \mathfrak{gl}(3)$ via the matrix logarithm. This ensures that the network learns consistent geometric representations of 3D shapes under arbitrary orientations.

\textbf{Experimental Setup.} We utilize the ShapeNet dataset processed into 3D Gaussian primitives for self-supervised pre-training, following the protocol of \citet{ma2025large}. The model is then fine-tuned on ModelNet10 for classification. We compare our ReLN-based architecture against the baseline model from \citet{ma2025large}. Refer to Appendix~\ref{app:3dgs_details} for full experimental details.

\paragraph{Experimental Results.} Table~\ref{tab:rotation_robustness} evaluates classification accuracy on aligned and randomly rotated ($0^\circ$--$180^\circ$) test sets. The baseline drops from $93.39\%$ to $18.28\%$. In contrast, ReLN maintains a stable accuracy of $95.15\%$, representing a marginal $0.33$ point variance compared to its aligned performance. Furthermore, ReLN converges in fewer epochs during pretraining and achieves lower reconstruction error across attributes (including rotation and scale); see Appendix Figure~\ref{fig:loss_convergence}. Together with the rotation-stable accuracy in Table~\ref{tab:rotation_robustness}, these results support that enforcing the coupled transformation rules of $(\mu,\Sigma)$ via a reductive Lie-algebraic backbone improves robustness to orientation changes.

% --- TABLE FOR 3DGS RESULTS ---
\begin{table}[t]
    \centering
    \scriptsize
    \caption{\textbf{Rotation robustness on 3D Gaussian splats.} 
    Classification accuracy (\%) on ModelNet10 for the baseline \citep{ma2025large} and ReLN. 
    \textit{Standard} uses the aligned test set; \textit{Rotated} applies random rotations ($0^\circ$--$180^\circ$).}
    \label{tab:rotation_robustness}
    \resizebox{0.7\columnwidth}{!}{
        \begin{tabular}{lcc}
            \toprule
            \textbf{Method} & \textbf{Standard Acc.} (\%) & \textbf{Rotated Acc.} (\%) \\
            \midrule
            Baseline \citep{ma2025large} & 93.39 & 18.28 \\
            \textbf{ReLN (Ours)}         & \textbf{94.82} & \textbf{95.15} \\
            \bottomrule
        \end{tabular}
    }
\end{table}

%%%%%%%%%%%%%%%%%%%% EMLP Benchmark  %%%%%%%%%%%%%%%%%%%%

\subsection{EMLP Double-Pendulum Benchmark and Efficiency}
We evaluate ReLN on the EMLP double-pendulum benchmark~\cite{finzi2021practical}, which learns Hamiltonian dynamics under multiple symmetry groups ($O(2)$, $SO(2)$, and $D_6$). Following the standard protocol, we report test rollout errors in Table~\ref{tab:double_pendulum}; additional details are provided in Appendix~\ref{app:double_pendulum_details}.

\textbf{Accuracy.} ReLN achieves comparable or lower rollout error than EMLP across all symmetry settings (e.g., $SO(2)$: \textbf{0.010} vs.\ $0.015$), without introducing group-specific architectural modifications.

\textbf{Compute and latency.} With matched hidden size and representation dimension, ReLN-HNN is substantially cheaper per step than EMLP-HNN (Table~\ref{tab:efficiency}): $142{,}190$ vs.\ $1{,}589{,}909$ FLOPs ($11.18\times$) and $2.216$ vs.\ $61.635$ ms ($27.81\times$). This gap is consistent with ReLN using closed-form matrix operations from exact $\mathrm{Ad}$-equivariant primitives on $\mathfrak{gl}(n)$, whereas EMLP relies on symmetry-dependent basis construction and projection.

Overall, these results support ReLN as a reusable equivariant backbone that attains EMLP-level accuracy while substantially reducing per-step computation across different symmetry choices.

\begin{table}[t]
    \centering
    \caption{Test rollout error on the EMLP Double-Pendulum Benchmark. ReLN matches or improves upon EMLP across different symmetry groups ($O(2)$, $SO(2)$, $D_6$). Values represent the geometric mean of relative errors, with the standard deviation over 3 trials in parentheses; e.g., the notation $0.012 (2)$ denotes $0.012 \pm 0.002$.}
    \label{tab:double_pendulum}
    \resizebox{\linewidth}{!}{
    \begin{tabular}{l c c c c c c c}
        \toprule
        \multirow{2}{*}{\textbf{Metric}} & \multicolumn{2}{c}{\textbf{$O(2)$}} & \multicolumn{2}{c}{\textbf{$SO(2)$}} & \multicolumn{2}{c}{\textbf{$D_6$}} & \multirow{2}{*}{\textbf{MLP-HNNs}} \\
        \cmidrule(lr){2-3} \cmidrule(lr){4-5} \cmidrule(lr){6-7}
         & EMLP & \textbf{ReLN} & EMLP & \textbf{ReLN} & EMLP & \textbf{ReLN} & \\
        \midrule
        Rollout Error & 0.012 (2) & \textbf{0.011 (2)} & 0.015 (3) & \textbf{0.010 (4)} & 0.013 (2) & \textbf{0.011 (2)} & 0.028 \\
        \bottomrule
    \end{tabular}
    }
\end{table}

\begin{table}[t]
    \centering
    \caption{Computational cost on the HNN task. ReLN-HNN reduces FLOPs/step and inference latency relative to EMLP-HNN.}
    % \caption{Computational Efficiency (FLOPs) comparison on the HNN Task. ReLN is approximately $11\times$ more efficient than EMLP.}
    \label{tab:efficiency}
    \scriptsize % 기본 폰트 크기 조절
    \setlength{\tabcolsep}{4pt} % 컬럼 간 간격 축소 (기본값은 보통 6pt)
    \resizebox{0.7\columnwidth}{!}{ % 컬럼 너비에 맞춰 강제 조정
    \begin{tabular}{l c c c}
        \toprule
        \textbf{Model} & \textbf{\# Params} & \textbf{FLOPs / step} & \textbf{Inference (ms)} \\
        \midrule
        MLP-HNN & 34,817 & 70,400 & 0.159 \\
        \hdashline
        EMLP-HNN & 55,569 & 1,589,909 & 61.635 \\
        \textbf{ReLN-HNN (Ours)} & 69,889 & \textbf{142,190} & \textbf{2.216} \\
        \bottomrule
    \end{tabular}
    }
\end{table}
%%%%%%%%%%%%%%%%%%%% EMLP Benchmark  %%%%%%%%%%%%%%%%%%%%

\section{Discussion and Limitations.}
ReLNs are exactly $\mathrm{GL}(n)$-adjoint equivariant by construction. For covariance-like quantities, however, the physically standard transformation is congruence, $\Sigma \mapsto A\,\Sigma\,A^\top$, which agrees with adjoint conjugation $X\mapsto AXA^{-1}$
only for orthogonal changes of frame ($A^{-1}=A^\top$). Thus, while our unified Lie-algebraic interface is most directly interpretable under rigid rotations,
extending the same measurement--uncertainty coupling to general $\mathrm{GL}(n)$ coordinate changes requires additional geometric justification and may call for alternative liftings or invariants. A further direction is to extend the framework to groups with translations, such as affine and $\mathrm{SE}(3)$ symmetries, by leveraging their semidirect-product structure and the induced actions on the corresponding Lie algebras.

\section{Conclusion}
% This work introduces ReLNs, a unified neural architecture that provides exact equivariance to the adjoint action of the general $n \times n$ matrix algebra $\mathfrak{gl}(n)$ and its subgroups. ReLNs enable efficient learning on Lie-algebraic features and structured geometric data, such as covariance matrices. 
% Furthermore, our work establishes a unified Lie-algebraic framework that handles both classical left-action symmetries on vectors and native adjoint-actions on matrices within a single architecture. ReLNs achieve state-of-the-art results on benchmarks and deliver large gains in a challenging drone state estimation task by integrating uncertainty. We will apply our equivariant matrix processing capability to a wider array of physical systems, including the dynamics of articulated robots and large-scale sensor fusion, to further expand the boundaries of geometric deep learning.

We introduced Reductive Lie Neurons (ReLNs), a unified architecture that is exactly equivariant to the adjoint action on $\mathfrak{gl}(n)$ and its reductive matrix subalgebras. Using a non-degenerate $\mathrm{Ad}$-invariant contraction, ReLNs provide an efficient Lie-equivariant framework for learning with structured geometric quantities, including matrix-valued uncertainty. The same viewpoint links classical left actions on vectors to adjoint actions on matrices via explicit Lie-algebraic liftings, enabling a single backbone to process heterogeneous geometric inputs in a consistent manner. Empirically, ReLNs achieve strong performance on algebraic benchmarks and 3D Gaussian-splat representation learning, and yield substantial gains on drone state estimation by incorporating uncertainty equivariantly.

We expect these capabilities to benefit data-efficient representation learning in vision and robotics, particularly for structured 3D scene models—such as 3D Gaussian splats and world models—where coupled transformation rules are central. Future work will scale ReLNs to larger physical systems (e.g., multimodal fusion and articulated dynamics) and further explore efficient equivariant architectures for matrix-parameterized representations.

\subsubsection*{Acknowledgments}
% Use unnumbered third level headings for the acknowledgments. All
% acknowledgments, including those to funding agencies, go at the end of the paper.
This work was supported by AFOSR MURI FA9550-23-1-0400 and AFOSR YIP FA9550-25-1-0224. The authors thank Dr. Frederick A. Leve for
his encouragement and support.
%%%%%%%%%%%%%%%%%%%%

\clearpage
\bibliography{strings-full,ieee-full,references}

@article{thomas2018tensor,
  title={Tensor field networks: Rotation-and translation-equivariant neural networks for {3D} point clouds},
  author={Thomas, Nathaniel and Smidt, Tess and Kearnes, Steven and Yang, Lusann and Li, Li and Kohlhoff, Kai and Riley, Patrick},
  journal={arXiv preprint arXiv:1802.08219},
  year={2018}
}

@article{fuchs2020se,
  title={{SE(3)-Transformers}: {3D Roto-Translation} equivariant attention networks},
  author={Fuchs, Fabian B and Worrall, Daniel E and Fischer, Volker and Welling, Max},
  journal=C-NIPS,
  year={2020}
}

@article{finzi2020generalizing,
  title={Generalizing convolutional neural networks for equivariance to {Lie} groups on arbitrary continuous data},
  author={Finzi, Marc and Stanton, Samuel S and Izmailov, Pavel and Wilson, Andrew Gordon},
  journal=C-ICML,
  year={2020}
}

@article{cohen2016group,
  title={{Group Equivariant Convolutional Networks}},
  author={Cohen, Taco and Welling, Max},
  journal=C-ICML,
  year={2016}
}

@inproceedings{
batatia2023a,
title={{A General Framework for Equivariant Neural Networks on Reductive Lie Groups}},
author={Ilyes Batatia and Mario Geiger and Jose M Munoz and Tess Smidt and Lior Silberman and Christoph Ortner},
booktitle=C-NIPS,
year={2023},
}

@inproceedings{
mironenco2024lie,
title={{Lie Group Decompositions for Equivariant Neural Networks}},
author={Mircea Mironenco and Patrick Forr{\'e}},
booktitle={The Twelfth International Conference on Learning Representations},
year={2024},
}

@inproceedings{finzi2021practical,
  title={A practical method for constructing equivariant multilayer perceptrons for arbitrary matrix groups},
  author={Finzi, Marc and Welling, Max and Wilson, Andrew Gordon},
  journal = C-ICML,
  pages={3318--3328},
  year={2021},
  organization={PMLR}
}

@inproceedings{hutchinson2021lietransformer,
  title={{LieTransformer: Equivariant self-attention for {Lie} Groups}},
  author={Hutchinson, Michael J and Le Lan, Charline and Zaidi, Sheheryar and Dupont, Emilien and Teh, Yee Whye and Kim, Hyunjik},
  booktitle=C-ICML,
  pages={4533--4543},
  year={2021},
  organization={PMLR}
}

@article{ruhe2023clifford,
  title={{Clifford Group Equivariant Neural Networks}},
  author={Ruhe, David and Brandstetter, Johannes and Forr{\'e}, Patrick},
  journal={Advances in Neural Information Processing Systems},
  volume={36},
  pages={62922--62990},
  year={2023}
}

@inproceedings{
shumaylov2025lie,
title={{Lie Algebra Canonicalization: Equivariant Neural Operators under arbitrary Lie Groups}},
author={Zakhar Shumaylov and Peter Zaika and James Rowbottom and Ferdia Sherry and Melanie Weber and Carola-Bibiane Sch{\"o}nlieb},
booktitle=C-ICLR,
year={2025},
}

@article{weiler2021general,
  author = {Weiler, Maurice and Cesa, Gabriele},
  title = {{General E(2)-Equivariant Steerable CNNs}},
  journal = C-NIPS,
  year = {2021},
  pages = {14334--14347},
}

@inproceedings{weiler2018learning,
  title={{Learning Steerable Filters for Rotation Equivariant CNNs}},
  author={Weiler, Maurice and Hamprecht, Fred A and Storath, Martin},
  booktitle=C-CVPR,
  pages={849--858},
  year={2018}
}

@inproceedings{deng2021vector,
  title={{Vector Neurons}: A general framework for so (3)-equivariant networks},
  author={Deng, Congyue and Litany, Or and Duan, Yueqi and Poulenard, Adrien and Tagliasacchi, Andrea and Guibas, Leonidas J},
  booktitle=C-ICCV,
  pages={12200--12209},
  year={2021}
}

@inproceedings{
son2024intuitive,
title={{An Intuitive Multi-Frequency Feature Representation for {SO}(3)-Equivariant Networks}},
author={Dongwon Son and Jaehyung Kim and Sanghyeon Son and Beomjoon Kim},
booktitle=C-ICLR,
year={2024},
}

@inproceedings{satorras2021n,
  title={{E(n)} equivariant graph neural networks},
  author={Satorras, V{\i}ctor Garcia and Hoogeboom, Emiel and Welling, Max},
  booktitle=C-ICML,
  pages={9323--9332},
  year={2021},
  organization={PMLR}
}

@InProceedings{lin2023lie,
  title = 	 {{Lie Neurons}: {Adjoint}-{Equivariant Neural Networks for Semisimple Lie Algebras}},
  author =       {Lin, Tzu-Yuan and Zhu, Minghan and Ghaffari, Maani},
  booktitle = C-ICML,
  pages = 	 {30529--30545},
  year = 	 {2024},
  publisher =    {PMLR},
}

@article{bronstein2021geometric,
  title={Geometric deep learning: Grids, groups, graphs, geodesics, and gauges},
  author={Bronstein, Michael M and Bruna, Joan and Cohen, Taco and Veli{\v{c}}kovi{\'c}, Petar},
  journal={arXiv preprint arXiv:2104.13478},
  year={2021}
}

@article{barrau2016invariant,
  title={The invariant extended {Kalman} filter as a stable observer},
  author={Barrau, Axel and Bonnabel, Silvere},
  journal={IEEE Transactions on Automatic Control},
  volume={62},
  number={4},
  pages={1797--1812},
  year={2016},
  publisher={IEEE}
}

@article{hartley2020contact,
  title={Contact-aided invariant extended {Kalman} filtering for robot state estimation},
  author={Hartley, Ross and Ghaffari, Maani and Eustice, Ryan M and Grizzle, Jessy W},
  journal=J-IJRR,
  volume={39},
  number={4},
  pages={402--430},
  year={2020},
  publisher={SAGE Publications Sage UK: London, England}
}

@inproceedings{kaba2023equivariance,
  title={Equivariance with learned canonicalization functions},
  author={Kaba, S{\'e}kou-Oumar and Mondal, Arnab Kumar and Zhang, Yan and Bengio, Yoshua and Ravanbakhsh, Siamak},
  booktitle=C-ICML,
  pages={15546--15566},
  year={2023},
  organization={PMLR}
}

@inproceedings{
puny2021frame,
title={{Frame Averaging for Invariant and Equivariant Network Design}},
author={Omri Puny and Matan Atzmon and Edward J. Smith and Ishan Misra and Aditya Grover and Heli Ben-Hamu and Yaron Lipman},
booktitle={International Conference on Learning Representations},
year={2022},
}

@inproceedings{xu2022unified,
  title={Unified fourier-based kernel and nonlinearity design for equivariant networks on homogeneous spaces},
  author={Xu, Yinshuang and Lei, Jiahui and Dobriban, Edgar and Daniilidis, Kostas},
  booktitle=C-ICML,
  pages={24596--24614},
  year={2022},
  organization={PMLR}
}

@inproceedings{he2022neural,
  title={Neural {ePDOs}: Spatially adaptive equivariant partial differential operator based networks},
  author={He, Lingshen and Chen, Yuxuan and Shen, Zhengyang and Yang, Yibo and Lin, Zhouchen},
  booktitle=C-ICLR,
  year={2022}
}

@article{basu2024g,
title={G-{RepsNet}: A Lightweight Construction of Equivariant Networks for Arbitrary Matrix Groups},
author={Sourya Basu and Suhas Lohit and Matthew Brand},
journal={Transactions on Machine Learning Research},
issn={2835-8856},
year={2025},
url={https://openreview.net/forum?id=k1eYngOvf0},
note={}
}

@article{helwig2023group,
  title={Group equivariant fourier neural operators for partial differential equations},
  author={Helwig, Jacob and Zhang, Xuan and Fu, Cong and Kurtin, Jerry and Wojtowytsch, Stephan and Ji, Shuiwang},
  journal=C-ICML,
  year={2023}
}

@inproceedings{
liao2022equiformer,
title={{Equiformer: Equivariant Graph Attention Transformer for 3D Atomistic Graphs}},
author={Yi-Lun Liao and Tess Smidt},
booktitle=C-ICLR,
year={2023},
url={https://openreview.net/forum?id=KwmPfARgOTD}
}

@article{panigrahi2024improved,
  title={Improved canonicalization for model agnostic equivariance},
  author={Panigrahi, Siba Smarak and Mondal, Arnab Kumar},
  journal={arXiv preprint arXiv:2405.14089},
  year={2024}
}

@inproceedings{maron2019universality,
  title={On the universality of invariant networks},
  author={Maron, Haggai and Fetaya, Ethan and Segol, Nimrod and Lipman, Yaron},
  booktitle=C-ICML,
  pages={4363--4371},
  year={2019},
  organization={PMLR}
}

@inproceedings{
lin2024equivariance,
title={{Equivariance via Minimal Frame Averaging for More Symmetries and Efficiency}},
author={Yuchao Lin and Jacob Helwig and Shurui Gui and Shuiwang Ji},
booktitle=C-ICML,
year={2024},
}

@article{batatia2022mace,
  title={{MACE}: Higher order equivariant message passing neural networks for fast and accurate force fields},
  author={Batatia, Ilyes and Kovacs, David P and Simm, Gregor and Ortner, Christoph and Cs{\'a}nyi, G{\'a}bor},
  journal=C-NIPS,
  volume={35},
  pages={11423--11436},
  year={2022}
}

@inproceedings{
batatia2023equivariant,
title={Equivariant Matrix Function Neural Networks},
author={Ilyes Batatia and Lars Leon Schaaf and Gabor Csanyi and Christoph Ortner and Felix Andreas Faber},
booktitle=C-ICLR,
year={2024},
}

@article{magnus1985differentiating,
  title={On differentiating eigenvalues and eigenvectors},
  author={Magnus, Jan R},
  journal={Econometric theory},
  volume={1},
  number={2},
  pages={179--191},
  year={1985},
  publisher={Cambridge University Press}
}

@inproceedings{bogatskiy2020lorentz,
  title={Lorentz group equivariant neural network for particle physics},
  author={Bogatskiy, Alexander and Anderson, Brandon and Offermann, Jan and Roussi, Marwah and Miller, David and Kondor, Risi},
  booktitle=C-ICML,
  pages={992--1002},
  year={2020},
  organization={PMLR}
}

@inproceedings{bogatskiy2022pelican,
  title     = {{PELICAN: Permutation Equivariant and Lorentz Invariant or Covariant Aggregator Network for Particle Physics}},
  author    = {Bogatskiy, Alexander and Hoffman, Timothy and Miller, David W. and Offermann, Jan T.},
  booktitle = {Machine Learning and the Physical Sciences Workshop at NeurIPS 2022},
  year      = {2022},
  note      = {arXiv:2211.00454}
}

@article{gong2022efficient,
  title={An efficient {Lorentz} equivariant graph neural network for jet tagging},
  author={Gong, Shiqi and Meng, Qi and Zhang, Jue and Qu, Huilin and Li, Congqiao and Qian, Sitian and Du, Weitao and Ma, Zhi-Ming and Liu, Tie-Yan},
  journal={Journal of High Energy Physics},
  volume={2022},
  number={7},
  pages={1--22},
  year={2022},
  publisher={Springer}
}

@article{kasieczka2019machine,
  title={The machine learning landscape of top taggers},
  author={Kasieczka, Gregor and Plehn, Tilman and Butter, Anja and Cranmer, Kyle and Debnath, Dipsikha and Dillon, Barry M and Fairbairn, Malcolm and Faroughy, Darius A and Fedorko, Wojtek and Gay, Christophe and others},
  journal={SciPost Physics},
  volume={7},
  number={1},
  pages={014},
  year={2019}
}

@article{munoz2022boost,
  title={Boost invariant polynomials for efficient jet tagging},
  author={Munoz, Jose M and Batatia, Ilyes and Ortner, Christoph},
  journal={Machine Learning: Science and Technology},
  volume={3},
  number={4},
  pages={04LT05},
  year={2022},
  publisher={IOP Publishing}
}

@inproceedings{qu2022particle,
  title={Particle transformer for jet tagging},
  author={Qu, Huilin and Li, Congqiao and Qian, Sitian},
  booktitle=C-ICML,
  pages={18281--18292},
  year={2022},
  organization={PMLR}
}

@article{qu2020jet,
  title={Jet tagging via particle clouds},
  author={Qu, Huilin and Gouskos, Loukas},
  journal={Physical Review D},
  volume={101},
  number={5},
  pages={056019},
  year={2020},
  publisher={APS}
}

@article{komiske2019energy,
  title={Energy flow networks: deep sets for particle jets},
  author={Komiske, Patrick T and Metodiev, Eric M and Thaler, Jesse},
  journal={Journal of High Energy Physics},
  volume={2019},
  number={1},
  pages={1--46},
  year={2019},
  publisher={Springer}
}

@article{pearkes2017jet,
  title={Jet constituents for deep neural network based top quark tagging},
  author={Pearkes, Jannicke and Fedorko, Wojciech and Lister, Alison and Gay, Colin},
  journal={arXiv preprint arXiv:1704.02124},
  year={2017}
}

@article{
assaad2022vn,
title={{VN-Transformer}: Rotation-Equivariant Attention for Vector Neurons},
author={Serge Assaad and Carlton Downey and Rami Al-Rfou' and Nigamaa Nayakanti and Benjamin Sapp},
journal={Transactions on Machine Learning Research},
issn={2835-8856},
year={2023},
note={}
}

@article{geiger2022e3nn,
  title={e3nn: Euclidean Neural Networks},
  author={Mario Geiger and Tess E. Smidt},
  journal={ArXiv},
  year={2022},
  volume={abs/2207.09453},
}

@inproceedings{li2024affine,
  title={Affine equivariant networks based on differential invariants},
  author={Li, Yikang and Qiu, Yeqing and Chen, Yuxuan and He, Lingshen and Lin, Zhouchen},
  booktitle=C-CVPR,
  pages={5546--5556},
  year={2024}
}

@inproceedings{
jenner2022steerable,
title={Steerable Partial Differential Operators for Equivariant Neural Networks},
author={Erik Jenner and Maurice Weiler},
booktitle=C-ICLR,
year={2022},
url={https://openreview.net/forum?id=N9W24a4zU}
}

@inproceedings{shen2020pdo,
  title={Pdo-econvs: Partial differential operator based equivariant convolutions},
  author={Shen, Zhengyang and He, Lingshen and Lin, Zhouchen and Ma, Jinwen},
  booktitle=C-ICML,
  pages={8697--8706},
  year={2020},
  organization={PMLR}
}

@inproceedings{sangalli2022differential,
  title={{Differential invariants for SE (2)-equivariant networks}},
  author={Sangalli, Mateus and Blusseau, Samy and Velasco-Forero, Santiago and Angulo, Jes{\'u}s},
  booktitle={C-ICIP},
  pages={2216--2220},
  year={2022},
  organization={IEEE}
}

@inproceedings{zhdanov2024clifford,
  title={{Clifford-Steerable Convolutional Neural Networks}},
  author={Zhdanov, Maksim and Ruhe, David and Weiler, Maurice and Lucic, Ana and Brandstetter, Johannes and Forr{\'e}, Patrick},
  booktitle=C-ICML,
  pages={61203--61228},
  year={2024},
  organization={PMLR}
}

@inproceedings{ma2025large,
    title={{A Large-Scale Dataset of Gaussian Splats and Their Self-Supervised Pretraining}},
    author={Ma, Qi and Li, Yue and Ren, Bin and Sebe, Nicu and Konukoglu, Ender and Gevers, Theo and Van Gool, Luc and Paudel, Danda Pani},
    booktitle={International Conference on 3D Vision (3DV)},
    pages={145--155},
    year={2025},
    organization={IEEE}
  }

@inproceedings{lu2024manigaussian,
  title={{ManiGaussian: Dynamic Gaussian Splatting for Multi-task Robotic Manipulation}}, 
  author={Lu, Guanxing and Zhang, Shiyi and Wang, Ziwei and Liu, Changliu and Lu, Jiwen and Tang, Yansong},
  booktitle=C-ECCV,
  pages={349--366},
  year={2024},
  organization={Springer}
}

@inproceedings{ye2024gaussian,
  title={{Gaussian Grouping: Segment and Edit Anything in 3D Scenes}},
  author={Ye, Mingqiao and Danelljan, Martin and Yu, Fisher and Ke, Lei},
  booktitle=C-ECCV,
  pages={162--179},
  year={2024},
  organization={Springer}
}

@inproceedings{zhang2024dynamics,
    title={Dynamic {3D Gaussian Tracking for Graph-Based Neural Dynamics Modeling}},
    author={Zhang, Mingtong and Zhang, Kaifeng and Li, Yunzhu},
    booktitle={8th Annual Conference on Robot Learning},
    year={2024}
}

@article{favoni2022lattice,
  title={Lattice gauge equivariant convolutional neural networks},
  author={Favoni, Matteo and Ipp, Andreas and M{\"u}ller, David I and Schuh, Daniel},
  journal={Physical Review Letters},
  volume={128},
  number={3},
  pages={032003},
  year={2022},
  publisher={APS}
}

@article{brehmer2023geometric,
  title={Geometric algebra transformer},
  author={Brehmer, Johann and De Haan, Pim and Behrends, S{\"o}nke and Cohen, Taco S},
  journal={Advances in Neural Information Processing Systems},
  volume={36},
  pages={35472--35496},
  year={2023}
}

@article{liu2024clifford,
  title={Clifford group equivariant simplicial message passing networks},
  author={Liu, Cong and Ruhe, David and Eijkelboom, Floor and Forr{\'e}, Patrick},
  journal={arXiv preprint arXiv:2402.10011},
  year={2024}
}

@article{filimoshina2025glgenn,
  title={GLGENN: A Novel Parameter-Light Equivariant Neural Networks Architecture Based on Clifford Geometric Algebras},
  author={Filimoshina, Ekaterina and Shirokov, Dmitry},
  journal={arXiv preprint arXiv:2506.09625},
  year={2025}
}

@STRING{C-CVPR = {Proc. {IEEE} Conf. Comput. Vis. Pattern Recog.}}

@STRING{C-ECCV = {Proc. European Conf. Comput. Vis.}}

@STRING{C-ICCV = {Proc. {IEEE} Int. Conf. Comput. Vis.}}

@STRING{C-ICIP = {Proc. Int. Conf. Image Process.}}

@STRING{C-NIPS = {Proc. Advances Neural Inform. Process. Syst. Conf.}}

@STRING{C-ICLR = {Proc. Int. Conf. Learning Representations }}

@STRING{C-ICML = {{Proc. Int. Conf. Machine Learning}}}

@STRING{J-IJRR = {Int. J. Robot. Res.}}

@STRING{C-CVPR = {Proceedings of the {IEEE} Conference on Computer Vision and Pattern Recognition}}

@STRING{C-CVPR = {Proceedings of the {IEEE} Conference on Computer Vision and Pattern Recognition Workshops}}

@STRING{C-ECCV = {Proceedings of the European Conference on Computer Vision}}

@STRING{C-ICCV = {Proceedings of the {IEEE} International Conference on Computer Vision}}

@STRING{C-ICIP = {Proceedings of the International Conference on Image Processing}}

@STRING{C-NIPS = {Advances in Neural Information Processing Systems Conference}}

@STRING{C-ICLR = {International Conference on Learning Representations}}

@STRING{C-ICML = {International Conference on Machine Learning}}

@STRING{J-IJRR = {International Journal of Robotics Research}}
\bibliographystyle{iclr2026_conference}

\clearpage
\appendix

\section{Lie-theoretic preliminaries}\label{app:prelim}

This appendix provides an overview of key concepts and derivations from Lie group theory relevant to our construction of $\mathrm{GL}(n)$ adjoint-equivariant neural networks.

\subsection{Lie groups, Lie algebras, hat/vee}
\label{app:adjoint}
A \emph{matrix Lie group} $G\subset\mathrm{GL}(n)$ is a smooth subgroup of invertible matrices. Its Lie algebra $\mathfrak{g}=\operatorname{Lie}(G)$ is the tangent space at the identity and is identified with a subspace of $\mathfrak{gl}(n)$. Fix a basis $\{E_i\}_{i=1}^m$ of $\mathfrak{g}$. The coordinate maps are:
\begin{equation}
\wedge:\mathbb{R}^m\to\mathfrak{g},\quad x=(x_i)\mapsto x^\wedge=\sum_{i} x_i E_i,
\qquad
\vee:\mathfrak{g}\to\mathbb{R}^m,\quad X\mapsto X^\vee.
\end{equation}
These maps let us implement algebra-valued features as Euclidean vectors in code.

The associated Lie algebra \( \mathfrak{g} = \mathrm{Lie}(G) \) is the tangent space at the identity element \( e \in G \). It carries a bilinear, antisymmetric product called the \emph{Lie bracket}, given by
\begin{equation}
[A, B] = AB - BA,
\end{equation}
in the case of \( \mathrm{GL}(n) \) which captures the infinitesimal structure of the group near the identity. The bracket quantifies non-commutativity of generators: \( [A, B] = 0 \) implies commutativity, whereas \( [A, B] \neq 0 \) indicates a non-trivial interaction.

\subsection{Representations and the adjoint action}
\label{app:adjointrep}
A representation $\Phi:G\to\mathrm{GL}(V)$ differentiates to $\phi:\mathfrak{g}\to\mathfrak{gl}(V)$ by
\begin{equation}
\phi(X)=\left. \frac{d}{dt} \right|_{t=0}\Phi(\exp(tX)).
\end{equation}
The adjoint representation $\mathrm{Ad}:G\to\mathrm{GL}(\mathfrak{g})$ is defined to be the differential of group conjugation at the identity 
\begin{equation}
    \mathrm{Ad}_g(X) = \left. \frac{d}{dt}\right|_{t=0}g(\exp(tX))g^{-1}.
\end{equation}
Therefore we get a map $\mathrm{Ad}:G  \to \mathrm{GL}(\mathfrak g)$. For matrix groups, this is given 
\begin{equation}
\mathrm{Ad}_g(X)=gXg^{-1},\qquad g\in G,\; X\in\mathfrak{g}, \label{eq:adjoint-group}
\end{equation}
and differentiating yields the Lie-algebra adjoint $\mathrm{ad}_X(Y)=[X,Y]$. One checks
\begin{equation}
\mathrm{Ad}_g([X,Y])=[\mathrm{Ad}_g X,\mathrm{Ad}_g Y],\qquad \mathrm{ad}_X([Y,Z])=[\mathrm{ad}_X Y,Z]+[Y,\mathrm{ad}_X Z].
\end{equation}

\subsection{Vectorized adjoint}
Using the hat ($\wedge$) / vee ($\vee$) maps, the adjoint action on the Lie algebra induces a corresponding action on the vector coordinates. This vectorized action is a linear map represented by a matrix:
\begin{equation}
\mathrm{Ad}^m_g:\mathbb{R}^m\to\mathbb{R}^m,\qquad
\mathrm{Ad}^m_g(x) = (\mathrm{Ad}_g(x^\wedge))^\vee.
\end{equation}
Equivalently, $\mathrm{Ad}^m_g$ is the $m\times m$ matrix representation of $\mathrm{Ad}_g$ in the chosen basis $\{E_i\}$. In practice we precompute or assemble the $m\times m$ matrix representing $\mathrm{Ad}^m_g$ (or apply it implicitly) to implement left-multiplicative equivariant layers that act on vector features.

\subsection{Structure of Lie Algebra: Semisimplicity and Reductivity}
\label{app:reductive}
\begin{definition}[Semisimple and Reductive Lie Algebras]
A finite-dimensional Lie algebra $\mathfrak g$ is
\begin{itemize}
\item \emph{semisimple} if it has no nonzero solvable ideals (equivalently, its radical is zero);
\item \emph{reductive} if it decomposes as a direct sum of ideals
\begin{equation}
\mathfrak g \;=\; \mathfrak z(\mathfrak g)\;\oplus\;[\mathfrak g,\mathfrak g],
\end{equation}
where $\mathfrak z(\mathfrak g)$ is the center and $[\mathfrak g,\mathfrak g]$ is semisimple.
\end{itemize}
\end{definition}

\begin{example}
The Lie algebra \( \mathfrak{gl}(n) \) decomposes as:
\begin{equation}
\mathfrak{gl}(n)=\mathbb RI\oplus\mathfrak{sl}(n),
\qquad
\mathbb RI=\mathfrak z(\mathfrak{gl}(n)),\quad
\mathfrak{sl}(n)=[\mathfrak{gl}(n),\mathfrak{gl}(n)].
\end{equation}
where \( \mathfrak{sl}(n) \) (traceless matrices) is semisimple, and \( \mathbb{R}I \) (scalar matrices) forms the center. Thus $\mathfrak{gl}(n)$ is reductive but not semisimple (it has a nontrivial center).
\end{example}

This decomposition highlights the non-semisimple nature of \( \mathfrak{gl}(n) \), which plays a critical role in understanding the degeneracy of certain invariant forms such as the Killing form. This degeneracy hinders the application of standard tools in Lie-theoretic deep learning. Our work addresses this issue in the context of \( \mathrm{GL}(n) \)-equivariant architectures in Lie algebra $\mathfrak{gl}(n)$.

The Killing form on $\mathfrak g$ is
\begin{equation}
B_{\mathfrak g}(X,Y):=\operatorname{tr}(\mathrm{ad}_X\circ \mathrm{ad}_Y).
\end{equation}

\begin{theorem}[Cartan criterion]
\label{thm:cartan_criterion}
For a finite-dimensional Lie algebra $\mathfrak g$, the Killing form $B_{\mathfrak g}$ is non-degenerate if and only if $\mathfrak g$ is semisimple.
\end{theorem}

In particular, if $\mathfrak g$ has nontrivial center then $B_{\mathfrak g}$ is degenerate, since $\mathrm{ad}_Z=0$ for all $Z\in\mathfrak z(\mathfrak g)$.

%------------------------------------------------------------
\subsection{$\mathrm{Ad}$-invariant bilinear forms and the trace form}
\label{app:invariant_forms}

\begin{definition}[$\mathrm{Ad}$-invariance]
A bilinear form $B:\mathfrak g\times\mathfrak g\to\mathbb R$ is \emph{$\mathrm{Ad}$-invariant} if
\begin{equation}
B(\mathrm{Ad}_g X,\mathrm{Ad}_g Y)=B(X,Y)\qquad \forall g\in G,\;\forall X,Y\in\mathfrak g.
\end{equation}
\end{definition}

On semisimple $\mathfrak g$, the Killing form is $\mathrm{Ad}$-invariant and non-degenerate.
On reductive but non-semisimple algebras (notably $\mathfrak{gl}(n)$), one often uses alternative $\mathrm{Ad}$-invariant forms.
A basic example on $\mathfrak{gl}(n)$ is the trace pairing
\begin{equation}
\langle X,Y\rangle_{\mathrm{tr}}:=\operatorname{tr}(XY).
\end{equation}

\begin{proposition}[$\mathrm{Ad}$-invariance of the trace pairing on $\mathfrak{gl}(n)$]
\label{prop:trace_ad_invariant}
For $g\in \mathrm{GL}(n)$ and $X,Y\in\mathfrak{gl}(n)$,
\begin{equation}
\operatorname{tr}\bigl((gXg^{-1})(gYg^{-1})\bigr)=\operatorname{tr}(XY).
\end{equation}
\end{proposition}
\begin{proof}
By cyclicity of trace,
$\operatorname{tr}((gXg^{-1})(gYg^{-1}))=\operatorname{tr}(gXYg^{-1})=\operatorname{tr}(XY)$.
\end{proof}

The trace form is non-degenerate as a bilinear form on the vector space $\mathfrak{gl}(n)$ and therefore provides a practical substitute for the Killing form when designing $\mathrm{Ad}$-invariant bilinear layers on $\mathfrak{gl}(n)$.

We will use explicit $\mathfrak{gl}(n)$ identities (including the closed form of $B_{\mathfrak{gl}(n)}$, the resulting degeneracy, and our non-degenerate $\mathrm{Ad}$-invariant modification $\widetilde B$) in Appendix~\ref{app:connections}.

\section{Connections for $\mathfrak{gl}(n)$ and $\mathfrak{so}(3)\simeq\mathbb R^3$}
\label{app:connections}

This appendix collects two specialization facts used in the main text:
(i) an explicit expression for the (degenerate) Killing form on $\mathfrak{gl}(n)$ and its relation to our modified form $\widetilde B$;
(ii) the standard intertwining between the $\mathrm{SO}(3)$ vector action and the adjoint action on $\mathfrak{so}(3)$.

\subsection{The classical Killing form on $\mathfrak{gl}(n)$}
\label{app:killing_gln}

\begin{proposition}[Closed form and degeneracy]
\label{prop:killing_gln}
For $X,Y\in \mathfrak{gl}(n)$,
\begin{equation}
B_{\mathfrak{gl}(n)}(X,Y)=2n\,\operatorname{tr}(XY)-2\,\operatorname{tr}(X)\operatorname{tr}(Y).
\label{eq:killing_gln}
\end{equation}
In particular, $B_{\mathfrak{gl}(n)}$ is degenerate, and $\mathbb RI$ lies in its radical.
\end{proposition}
\begin{proof}[Proof sketch]
The identity \eqref{eq:killing_gln} is standard for $\mathfrak{gl}(n)$ (up to an overall normalization convention).
Degeneracy follows since $\mathrm{ad}_{\lambda I}=0$ for all $\lambda I\in\mathbb RI$, hence
$B_{\mathfrak{gl}(n)}(\lambda I,\cdot)=\operatorname{tr}(\mathrm{ad}_{\lambda I}\circ \mathrm{ad}_{\cdot})=0$.
\end{proof}

\begin{proposition}[Restriction to $\mathfrak{sl}(n)$]
\label{prop:killing_sln_restriction}
If $X,Y\in\mathfrak{sl}(n)$, then
\begin{equation}
B_{\mathfrak{gl}(n)}(X,Y)=2n\,\operatorname{tr}(XY),
\end{equation}
which coincides with the Killing form on the semisimple ideal $\mathfrak{sl}(n)$ under the same normalization.
\end{proposition}
\begin{proof}
For $X,Y\in\mathfrak{sl}(n)$, $\operatorname{tr}(X)=\operatorname{tr}(Y)=0$ and \eqref{eq:killing_gln} reduces to $2n\,\operatorname{tr}(XY)$.
\end{proof}

%------------------------------------------------------------
\subsection{Our modified $\mathrm{Ad}$-invariant form on $\mathfrak{gl}(n)$}
\label{app:Bt_gl}

In the main text we use
\begin{equation}
\widetilde B(X,Y)=2n\,\operatorname{tr}(XY)-\operatorname{tr}(X)\operatorname{tr}(Y).
\label{eq:Bt_gl}
\end{equation}

\begin{proposition}[Relation to $B_{\mathfrak{gl}(n)}$]
\label{prop:Bt_vs_killing}
With $B_{\mathfrak{gl}(n)}$ as in \eqref{eq:killing_gln},
\begin{equation}
\widetilde B(X,Y)=B_{\mathfrak{gl}(n)}(X,Y)+\operatorname{tr}(X)\operatorname{tr}(Y).
\end{equation}
\end{proposition}
\begin{proof}
Subtract \eqref{eq:killing_gln} from \eqref{eq:Bt_gl}.
\end{proof}

\begin{proposition}[Orthogonal splitting]
\label{prop:Bt_split}
Write $X=X_0+\frac{1}{n}\operatorname{tr}(X)I$ and $Y=Y_0+\frac{1}{n}\operatorname{tr}(Y)I$ with $X_0,Y_0\in\mathfrak{sl}(n)$.
Then
\begin{equation}
\widetilde B(X,Y)=2n\,\operatorname{tr}(X_0Y_0)+\operatorname{tr}(X)\operatorname{tr}(Y),
\end{equation}
and $\mathbb RI$ and $\mathfrak{sl}(n)$ are $\widetilde B$-orthogonal.
\end{proposition}
\begin{proof}
Using $\operatorname{tr}(I)=n$ and $\operatorname{tr}(X_0)=\operatorname{tr}(Y_0)=0$,
\begin{equation}
\operatorname{tr}(XY)=\operatorname{tr}(X_0Y_0)+\frac{1}{n}\operatorname{tr}(X)\operatorname{tr}(Y).
\end{equation}
Substitute into \eqref{eq:Bt_gl}. Orthogonality follows by setting $X=\lambda I$ and $Y\in\mathfrak{sl}(n)$.
\end{proof}

\begin{proposition}[$\mathrm{Ad}$-invariance]
\label{prop:Bt_ad_invariant}
For all $g\in\mathrm{GL}(n)$ and $X,Y\in\mathfrak{gl}(n)$,
\begin{equation}
\widetilde B(\mathrm{Ad}_g X,\mathrm{Ad}_g Y)=\widetilde B(X,Y).
\end{equation}
\end{proposition}
\begin{proof}
Use $\mathrm{Ad}_g X=gXg^{-1}$, cyclicity $\operatorname{tr}(gXYg^{-1})=\operatorname{tr}(XY)$, and $\operatorname{tr}(gXg^{-1})=\operatorname{tr}(X)$ in \eqref{eq:Bt_gl}.
\end{proof}

\begin{proposition}[$\widetilde B$ as an $\mathrm{Ad}$-invariant non-degenerate completion on the center]
\label{prop:Bt_completion_center}
The classical Killing form $B_{\mathfrak{gl}(n)}$ is degenerate with radical containing the center
$\mathfrak z(\mathfrak{gl}(n))=\mathbb RI$.
The modified form $\widetilde B$ in \eqref{eq:Bt_gl} is $\mathrm{Ad}$-invariant and non-degenerate, and it agrees with
$B_{\mathfrak{gl}(n)}$ on the semisimple ideal $\mathfrak{sl}(n)$ while supplying a non-degenerate inner product on the center:
\begin{equation}
\widetilde B|_{\mathfrak{sl}(n)\times \mathfrak{sl}(n)} = B_{\mathfrak{sl}(n)},\qquad
\widetilde B(\lambda I,\mu I)=n^2\,\lambda\mu \quad (\lambda,\mu\in\mathbb R).
\end{equation}
In particular, $\widetilde B$ resolves the degeneracy of $B_{\mathfrak{gl}(n)}$ precisely along $\mathbb RI$ without altering the semisimple part.
\end{proposition}

\begin{proof}
$\mathrm{Ad}$-invariance is Proposition~\ref{prop:Bt_ad_invariant}.
For $\lambda I,\mu I\in\mathbb RI$, \eqref{eq:Bt_gl} gives
$\widetilde B(\lambda I,\mu I)=2n\,\tr(\lambda\mu I)-\tr(\lambda I)\tr(\mu I)
=2n(\lambda\mu n)-(\lambda n)(\mu n)=n^2\lambda\mu$, which is non-degenerate.
The restriction to $\mathfrak{sl}(n)$ follows since $\tr(X)=0$ on $\mathfrak{sl}(n)$, so \eqref{eq:Bt_gl} reduces to $2n\,\tr(XY)$, which matches Proposition~\ref{prop:killing_sln_restriction} under the same normalization.
\end{proof}

%------------------------------------------------------------
\subsection{$\mathfrak{so}(3)\simeq\mathbb R^3$: vector action vs.\ adjoint action}
\label{app:so3_hat}

Let $\hat{\cdot}:\mathbb R^3\to\mathfrak{so}(3)$ be the standard hat map defined by $\hat v\,w=v\times w$, with inverse $(\cdot)^\vee:\mathfrak{so}(3)\to\mathbb R^3$.

\begin{lemma}[Intertwining identity]
\label{lem:so3_hat_adjoint}
For all $R\in\mathrm{SO}(3)$ and $v\in\mathbb R^3$,
\begin{equation}
\widehat{Rv}=R\hat v R^\top=\mathrm{Ad}_R(\hat v),
\qquad \mathrm{Ad}_R(X)=RXR^{-1}=RXR^\top.
\end{equation}
Consequently, if $F:\mathfrak{so}(3)\to\mathfrak{so}(3)$ is $\mathrm{Ad}$-equivariant, then
$f(v):=\bigl(F(\hat v)\bigr)^\vee$ is left $\mathrm{SO}(3)$-equivariant: $f(Rv)=Rf(v)$.
\end{lemma}
\begin{proof}
For any $w\in\mathbb R^3$, using $(Ra)\times(Rb)=R(a\times b)$,
\begin{equation}
(R\hat vR^\top)w
=R\hat v(R^\top w)
=R\bigl(v\times(R^\top w)\bigr)
=(Rv)\times w
=\widehat{Rv}\,w.
\end{equation}
Thus $\widehat{Rv}=R\hat vR^\top$. The equivariance statement follows by applying $(\cdot)^\vee$ and using $\widehat{Ra}=R\hat aR^\top$.
\end{proof}

\begin{remark}[Trace contraction recovers the Vector Neuron inner product]
\label{rem:so3_trace_dot}
Under the hat identification $\hat{\cdot}:\mathbb R^3\to\mathfrak{so}(3)$,
\begin{equation}
\hat v=
\begin{bmatrix}
0 & -v_3 & v_2\\
v_3 & 0 & -v_1\\
-v_2 & v_1 & 0
\end{bmatrix},
\qquad
\tr(\hat v\,\hat w)=-2\,v^\top w
\quad (v,w\in\mathbb R^3).
\end{equation}
Hence the $\mathrm{Ad}$-invariant trace pairing on $\mathfrak{so}(3)$,
$\langle X,Y\rangle_{\mathrm{tr}}:=-\tfrac12\tr(XY)$,
corresponds exactly to the Euclidean dot product on $\mathbb R^3$:
$\langle \hat v,\hat w\rangle_{\mathrm{tr}}=v^\top w$.
In particular, $\mathrm{Ad}$-invariant scalar contractions on $\mathfrak{so}(3)$ recover the invariants used by Vector Neurons ~\cite{deng2021vector}(up to a fixed global scaling, which can be absorbed by adjacent learnable linear maps or normalization).
\end{remark}

%------------------------------------------------------------
% Providing proofs of key theorems
\section{Proofs of Key Theorems}
\label{app:bilinear_details}
In this section, we provide a generalized treatment and proofs for the modified bilinear form $\widetilde{B}$ introduced in Section~\ref{sec:architecture} and Appendix~\ref{app:connections}. Our discussion here establishes the formal properties of $\widetilde{B}$ for any real reductive Lie algebra.

\subsection{Proof of Non-degeneracy and Ad-invariance of Modified Killing Form $B_e$}
Let $\mathfrak g$ be a real reductive Lie algebra.

\begin{definition}[Reductive decomposition]
A Lie algebra $\mathfrak g$ is \emph{reductive} if
$\mathfrak g=\mathfrak z(\mathfrak g)\oplus[\mathfrak g,\mathfrak g]$,
where $\mathfrak z(\mathfrak g)$ is the center and $[\mathfrak g,\mathfrak g]$ is semisimple.
This decomposition is canonical (both summands are ideals).
\end{definition}

\begin{definition}[Modified Killing form on a reductive Lie algebra]

Fix any symmetric, positive–definite inner product
$\langle\cdot,\cdot\rangle_{\mathfrak z}$ on $\mathfrak z(\mathfrak g)$, and let
$B$ denote the Killing form on the semisimple ideal $[\mathfrak g,\mathfrak g]$.
For $Z_i\in\mathfrak z(\mathfrak g)$ and $X_i\in[\mathfrak g,\mathfrak g]$ define
\begin{equation}
\label{eq:def-modB}
\widetilde B(Z_1{+}X_1,\,Z_2{+}X_2)
\;:=\;
\langle Z_1,Z_2\rangle_{\mathfrak z}\;+\;B(X_1,X_2).
\end{equation}
\end{definition}

\begin{remark}[Canonicity]
On $[\mathfrak g,\mathfrak g]$ the restriction (Killing form) is canonical.
On $\mathfrak z(\mathfrak g)$ there is no canonical choice; any $\mathrm{Ad}$-invariant positive-definite inner product works. 
The choice we make in the case of $\mathfrak{gl}(n)$ ensures that it agrees with the Killing form on the semisimple part $\mathfrak{sl}(n)$, and the center $\mathbb{R}I$ is normalized by a natural trace scale.
\end{remark}

\begin{proposition}[Block–orthogonality and restrictions]
\label{lem:block}
With notation as above,
\begin{equation}
\widetilde B\big(\mathfrak z(\mathfrak g),[\mathfrak g,\mathfrak g]\big)=0,\qquad
\widetilde B|_{\mathfrak z(\mathfrak g)}=\langle\cdot,\cdot\rangle_{\mathfrak z},\qquad
\widetilde B|_{[\mathfrak g,\mathfrak g]}=B.
\end{equation}
\end{proposition}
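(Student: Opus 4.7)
The plan is to verify all three claims by directly unpacking the definition in equation~\eqref{eq:def-modB}, using only the canonicity of the reductive decomposition $\mathfrak{g} = \mathfrak{z}(\mathfrak{g}) \oplus [\mathfrak{g},\mathfrak{g}]$. The first step is to observe that because this decomposition is an internal direct sum of subspaces (in fact, of ideals), every element $W \in \mathfrak{g}$ has a \emph{unique} expression $W = Z + X$ with $Z \in \mathfrak{z}(\mathfrak{g})$ and $X \in [\mathfrak{g},\mathfrak{g}]$. This uniqueness is exactly what makes the right-hand side of equation~\eqref{eq:def-modB} well-defined as a function on $\mathfrak{g} \times \mathfrak{g}$.

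With well-definedness established, each of the three assertions reduces to a one-line substitution. For block-orthogonality, I would take $Z \in \mathfrak{z}(\mathfrak{g})$ and $X \in [\mathfrak{g},\mathfrak{g}]$, write them as $Z = Z + 0$ and $X = 0 + X$ in the canonical decomposition, and substitute into \eqref{eq:def-modB} to get
\begin{equation}
\widetilde{B}(Z, X) = \langle Z, 0\rangle_{\mathfrak{z}} + B(0, X) = 0.
\end{equation}
The two restriction identities follow by the same strategy: for $Z_1, Z_2 \in \mathfrak{z}(\mathfrak{g})$ the $[\mathfrak{g},\mathfrak{g}]$-components vanish, leaving $\widetilde{B}(Z_1, Z_2) = \langle Z_1, Z_2\rangle_{\mathfrak{z}}$, and symmetrically for $X_1, X_2 \in [\mathfrak{g},\mathfrak{g}]$ the center components vanish, leaving $\widetilde{B}(X_1, X_2) = B(X_1, X_2)$.

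The only point that merits more than a throwaway remark, rather than being a real obstacle, is uniqueness of the decomposition: the three computations above all presume that extracting the $\mathfrak{z}(\mathfrak{g})$- and $[\mathfrak{g},\mathfrak{g}]$-components of a given vector is unambiguous. This is immediate from the definition of reductivity, but if a reader wants a self-contained justification I would note that $\mathfrak{z}(\mathfrak{g}) \cap [\mathfrak{g},\mathfrak{g}] = 0$, since any element lying in both would be a central element of the semisimple ideal $[\mathfrak{g},\mathfrak{g}]$, and a semisimple Lie algebra has trivial center. Beyond this, no further Lie-theoretic machinery is required, and the proposition follows entirely from the definition.
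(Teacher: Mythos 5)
Your proof is correct and is the direct verification the paper leaves implicit (the paper states this proposition without proof, treating it as immediate from Definition~\ref{eq:def-modB}). Your remark on uniqueness of the decomposition via $\mathfrak z(\mathfrak g)\cap[\mathfrak g,\mathfrak g]=0$ is exactly the right point to flag for well-definedness, and the three one-line substitutions are all there is to it.
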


\begin{proposition}[Non–degeneracy]
\label{prop:nondeg}
$\widetilde B$ is nondegenerate on $\mathfrak g$.
\end{proposition}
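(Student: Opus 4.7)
The plan is to reduce non-degeneracy of $\widetilde B$ on the whole reductive algebra $\mathfrak g$ to the known non-degeneracy of its two block restrictions, using the block-orthogonal decomposition already established in Proposition~\ref{lem:block}. Concretely, I will take an arbitrary element $v\in\mathfrak g$ in the radical of $\widetilde B$ (that is, $\widetilde B(v,w)=0$ for every $w\in\mathfrak g$), decompose it along the canonical reductive splitting, and test it against carefully chosen vectors from each block to force each component to vanish.

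First I would write $v=Z+X$ with $Z\in\mathfrak z(\mathfrak g)$ and $X\in[\mathfrak g,\mathfrak g]$, which is unique because the reductive decomposition is a direct sum of ideals. Then I would pair $v$ with an arbitrary $Z'\in\mathfrak z(\mathfrak g)$. By block-orthogonality (Proposition~\ref{lem:block}) and the defining formula \eqref{eq:def-modB}, this pairing reduces to
\begin{equation}
\widetilde B(Z+X,Z') \;=\; \langle Z,Z'\rangle_{\mathfrak z}.
\end{equation}
Since this vanishes for every $Z'$ and $\langle\cdot,\cdot\rangle_{\mathfrak z}$ is positive-definite (hence non-degenerate) on $\mathfrak z(\mathfrak g)$, I conclude $Z=0$. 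Next I pair $v$ against an arbitrary $X'\in[\mathfrak g,\mathfrak g]$; the same block-orthogonality gives
\begin{equation}
\widetilde B(Z+X,X') \;=\; B(X,X').
\end{equation}
Because $[\mathfrak g,\mathfrak g]$ is semisimple, Cartan's criterion (cited in Appendix~\ref{app:prelim}) guarantees that its Killing form $B$ is non-degenerate, so $X'$ arbitrary forces $X=0$. Hence $v=Z+X=0$, proving that the radical of $\widetilde B$ is trivial and therefore that $\widetilde B$ is non-degenerate on $\mathfrak g$.

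The argument has essentially no hard step once Proposition~\ref{lem:block} is in hand: the real content is bundled into two external facts, namely the non-degeneracy of the Killing form on a semisimple Lie algebra (Cartan) and the non-degeneracy of any positive-definite inner product on the center. The only thing that could go subtly wrong is if the decomposition $v=Z+X$ or the block-orthogonality leaked cross-terms; I will therefore explicitly invoke Proposition~\ref{lem:block} to justify that $\widetilde B(Z,X')=0$ and $\widetilde B(X,Z')=0$ for all $Z'\in\mathfrak z(\mathfrak g)$ and $X'\in[\mathfrak g,\mathfrak g]$, which is what allows the two testing steps above to isolate each component cleanly.
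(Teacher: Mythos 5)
Your proof is correct and follows essentially the same argument as the paper: decompose the radical element along $\mathfrak z(\mathfrak g)\oplus[\mathfrak g,\mathfrak g]$, test against each summand separately, and conclude each component vanishes from the non-degeneracy of the inner product on the center and of the Killing form on the semisimple ideal. Your version is just slightly more explicit about invoking the block-orthogonality lemma and Cartan's criterion.
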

\begin{proof}
Let $X=Z+W$ with $Z\in\mathfrak z(\mathfrak g)$ and $W\in[\mathfrak g,\mathfrak g]$.
If $\widetilde B(X,\cdot)\equiv 0$, then testing against
$Y\in\mathfrak z(\mathfrak g)$ yields $\langle Z,Y\rangle_{\mathfrak z}=0$ for all $Y$,
hence $Z=0$; testing against $Y\in[\mathfrak g,\mathfrak g]$ yields
$B(W,Y)=0$ for all $Y$, hence $W=0$ by the non–degeneracy of $B$ on the semisimple ideal.
Thus $X=0$.
\end{proof}

\begin{proposition}[$\mathrm{Ad}$–invariance on the identity component]
\label{prop:Ad-inv}
$\widetilde B$ is $\mathrm{ad}$–invariant:
\begin{equation}
\widetilde B([X,Y],Z)+\widetilde B(Y,[X,Z])=0
\qquad\text{for all }X,Y,Z\in\mathfrak g,
\end{equation}
and hence $\widetilde B(\mathrm{Ad}_g Y,\mathrm{Ad}_g Z)=\widetilde B(Y,Z)$ for all $g$ in the
identity component $G^\circ$.
\end{proposition}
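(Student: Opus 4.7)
The plan is to reduce the infinitesimal identity $\widetilde B([X,Y],Z)+\widetilde B(Y,[X,Z])=0$ to the analogous ad-invariance of the Killing form $B$ restricted to the semisimple ideal $[\mathfrak g,\mathfrak g]$, and then exponentiate to obtain global $\mathrm{Ad}$-invariance on $G^\circ$. Two structural facts make the reduction essentially automatic once set up: (i) $\mathfrak z(\mathfrak g)$ commutes with everything, so any bracket $[\,\cdot\,,\,\cdot\,]$ in $\mathfrak g$ ignores central components; (ii) $[\mathfrak g,\mathfrak g]$ is an ideal, so brackets land in the semisimple summand.

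First I would decompose each argument along $\mathfrak g=\mathfrak z(\mathfrak g)\oplus[\mathfrak g,\mathfrak g]$, writing $X=X_z+X_s$, $Y=Y_z+Y_s$, $Z=Z_z+Z_s$. By (i), $[X,Y]=[X_s,Y_s]$ and $[X,Z]=[X_s,Z_s]$, and by (ii) both lie in $[\mathfrak g,\mathfrak g]$. Applying Proposition~\ref{lem:block}, the pairings of these bracket terms with the central components $Z_z$ and $Y_z$ vanish, so
\begin{equation}
\widetilde B([X,Y],Z)+\widetilde B(Y,[X,Z]) \;=\; B([X_s,Y_s],Z_s)+B(Y_s,[X_s,Z_s]).
\end{equation}
The right-hand side vanishes by the standard $\mathrm{ad}$-invariance of the Killing form on any Lie algebra (a consequence of $\mathrm{ad}_{[U,V]}=[\mathrm{ad}_U,\mathrm{ad}_V]$ and cyclicity of trace), applied here to $[\mathfrak g,\mathfrak g]$.

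To promote this to $\mathrm{Ad}$-invariance on the identity component, I would differentiate the one-parameter family $t\mapsto\widetilde B(\mathrm{Ad}_{\exp(tX)}Y,\mathrm{Ad}_{\exp(tX)}Z)$. Its derivative is
\begin{equation}
\widetilde B([X,\mathrm{Ad}_{\exp(tX)}Y],\mathrm{Ad}_{\exp(tX)}Z)+\widetilde B(\mathrm{Ad}_{\exp(tX)}Y,[X,\mathrm{Ad}_{\exp(tX)}Z]) \;=\; 0
\end{equation}
by the identity just established, so the function is constant in $t$ and equals its value at $t=0$. Since $G^\circ$ is generated by $\exp(\mathfrak g)$ and $\mathrm{Ad}$ is a group homomorphism, invariance extends multiplicatively to all of $G^\circ$.

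The main obstacle is not any single computation but the book-keeping when mixing central and derived components across three slots; once one observes that every bracket lands in $[\mathfrak g,\mathfrak g]$ and every cross-block pairing vanishes by Proposition~\ref{lem:block}, the reduction to the classical Killing-form identity is mechanical. Extending invariance beyond $G^\circ$ would require a separate argument on connected components (e.g., checking invariance under a set of component representatives), which the stated proposition does not claim.
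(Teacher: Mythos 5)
Your proof is correct and follows essentially the same route as the paper's: decompose along $\mathfrak z(\mathfrak g)\oplus[\mathfrak g,\mathfrak g]$, observe that brackets kill central components and land in the semisimple ideal, invoke block-orthogonality (Proposition~\ref{lem:block}) to reduce to the Killing form's $\mathrm{ad}$-invariance, and integrate along one-parameter subgroups to reach $G^\circ$. Your version simply spells out the bookkeeping and the differentiation argument that the paper compresses into a single sentence.
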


\begin{proof}
The restriction to $[\mathfrak g,\mathfrak g]$ equals $B$, which is $\mathrm{ad}$–invariant.
If $Z\in\mathfrak z(\mathfrak g)$ then $[X,Z]=0$ for all $X$, so any bilinear
form on $\mathfrak z(\mathfrak g)$ is automatically $\mathrm{ad}$–invariant.
Using Proposition~\ref{lem:block} and bilinearity gives the displayed identity.
Equivalence with $\mathrm{Ad}$–invariance on $G^\circ$ follows by integrating the
infinitesimal relation along paths in $G^\circ$.
\end{proof}

\begin{remark}[Invariance for nonconnected groups]
\label{rem:avg}
In case the group is nonconnected, and one desires invariance under the full group $G$ (not just $G^\circ$). The
component group $\Gamma=G/G^\circ$ acts linearly on $\mathfrak z(\mathfrak g)$. In all practical cases, $\Gamma$ will be a finite group.
Then averaging any positive–definite $\langle\cdot,\cdot\rangle_{\mathfrak z}$ over $ \Gamma$ yields an $\mathrm{Ad}(G)$–invariant inner product
on the center:
\begin{equation}
\langle Z_1,Z_2\rangle_{\mathfrak z}^{\mathrm{avg}}
=\frac{1}{|\Gamma|}\sum_{\gamma\in\Gamma}
\big\langle \mathrm{Ad}_\gamma Z_1,\,\mathrm{Ad}_\gamma Z_2\big\rangle_{\mathfrak z}.
\end{equation}
  Replacing $\langle\cdot,\cdot\rangle_{\mathfrak z}$ by
$\langle\cdot,\cdot\rangle_{\mathfrak z}^{\mathrm{avg}}$ in Equation~\ref{eq:def-modB} makes
$\widetilde B$ invariant under all of $G$.
\end{remark}

%------------------------------------------------------------

\section{Summary of Key Lie Groups and Algebras}
\label{app:groups_summary}

To provide a comprehensive overview of the geometric structures discussed in this work, Table~\ref{tab:groups_summary_appendix} summarizes the Lie groups $G$, their associated Lie algebras $\mathfrak{g}$, and their application domains. We categorize each group by its algebraic properties—specifically distinguishing between reductive, semisimple, and compact types—and detail their specific roles within both the broader literature and our experimental framework. This unified taxonomy situates ReLNs as a general-purpose backbone capable of handling a wide array of linear symmetries encountered in scientific machine learning.

\begin{table*}[h]
\centering
\scriptsize
\caption{Comprehensive survey of Lie groups, algebras, and their applications. We highlight the algebraic classifications that dictate the choice of invariant bilinear forms and contrast general literature examples with our specific experimental tasks.}
\label{tab:groups_summary_appendix}
\begin{tabularx}{\textwidth}{l l l X X}
\toprule
\textbf{Group ($G$)} & \textbf{Algebra ($\mathfrak{g}$)} & \textbf{Algebraic Type} & \textbf{General Applications \& Refs} & \textbf{Relevance to ReLN Framework} \\
\midrule

\textbf{General Linear} & $\mathfrak{gl}(n)$ & Reductive & General linear transformations, stress-strain tensors \citep{basu2024g, finzi2021practical}. & \textbf{Core Domain.} Primary target of ReLNs; resolves the Killing form degeneracy on $\mathfrak{gl}(n)$. \\ \addlinespace

\textbf{Special Linear} & $\mathfrak{sl}(3)$ & Semisimple & Homography classification, 3D vision, SLAM \citep{lin2023lie, finzi2021practical}. & \textbf{Algebraic Benchmark.} Validates generalization on semisimple non-compact subgroups (Section ~\ref{subsec:platonic}). \\ \addlinespace

\textbf{Symplectic} & $\mathfrak{sp}(4)$ & Semisimple & Hamiltonian mechanics, phase space dynamics \citep{lin2023lie, finzi2021practical}. & \textbf{Algebraic Benchmark.} Tests the ability to learn highly nonlinear invariants on conserved systems (Section ~\ref{subsec:sp4}). \\ \addlinespace

\textbf{Special Orthogonal} & $\mathfrak{so}(3)$ & Compact & 3D Point clouds, state estimation, rigid body rotations \citep{deng2021vector, son2024intuitive, thomas2018tensor}. & \textbf{Robotics Application.} Lifts 3D velocity vectors into $\mathfrak{so}(3) \subset \mathfrak{gl}(3)$ (Sections~\ref{subsec:covariance} and~\ref{sec:3dgs_experiment}). \\ \addlinespace

\textbf{Lorentz Group} & $\mathfrak{so}(1,3)^+$ & Semisimple & Particle physics, jet tagging, relativistic collisions \citep{bogatskiy2020lorentz, batatia2023a}. & \textbf{Physics Application.} 4-momenta embedded into $\mathfrak{gl}(5)$ to unify left-action with adjoint action (Section ~\ref{app:lorentz_results}. \\ \addlinespace

\textbf{Special Unitary} & $\mathfrak{su}(3)$ & Compact & Quantum Chromodynamics (QCD), lattice gauge theory \citep{favoni2022lattice}. & \textbf{General Example.} Illustrates framework applicability to complex-valued compact groups. \\ \addlinespace

\textbf{SPD Manifold}$^\dagger$ & $\mathrm{SPD}(n)$ & Riemannian & Geometric uncertainty, inertia tensors, diffusion MRI \citep{ma2025large, magnus1985differentiating}. & \textbf{Geometric Uncertainty.} Covariances mapped to $\mathfrak{gl}(n)$ (Sections~\ref{subsec:covariance} and~\ref{sec:3dgs_experiment}). \\

\bottomrule
\end{tabularx}
\begin{flushleft}
\scriptsize $^\dagger$ Note: $\mathrm{SPD}(n)$ is not a group but a manifold representable as the quotient space $\mathrm{GL}(n)/\mathrm{O}(n)$. We process it by mapping to the linear subspace $\mathfrak{sym}(n) \subset \mathfrak{gl}(n)$ via the matrix logarithm \citep{ma2025large}.
\end{flushleft}
\end{table*}

\section{Detailed Layer Formulations}
\label{app:layer_details}
This section provides the precise mathematical definitions and equivariance proofs for the ReLN architecture. To maintain consistency with the main text, we denote the input tensor as $x \in \mathbb{R}^{B \times K \times C}$, where $B$ is the batch size, $K = \dim\mathfrak{g}$ is the geometric dimension, and $C$ is the number of feature channels. Each column $x_{b, :, c} \in \mathbb{R}^K$ corresponds to a matrix $X_{b,c} \in \mathfrak{g}$ via the vee/hat isomorphism.

\subsection{Equivariant Linear Layer}
The ReLN-Linear layer applies a linear map to the channel dimension (the last axis). For an input $x \in \mathbb{R}^{B \times K \times C}$ and weights $W \in \mathbb{R}^{C \times C'}$, the operation is defined as:
\begin{equation}
    f_{\mathrm{ReLN-Lin}}(x; W) = xW \in \mathbb{R}^{B \times K \times C'}.
\end{equation}
We omit any bias term to preserve exact equivariance. 

\paragraph{Proof of Equivariance.}
The group action, $\mathrm{Ad}_g$ (defined in Equation~\ref{eq:adjoint-group}), is a linear map that multiplies each feature channel from the left. The weight matrix $W$ multiplies the channel dimension from the right. These operations commute, ensuring strict $G$-equivariance for any $g \in G$:
\begin{equation}
\begin{split}
    f_{\mathrm{ReLN-Lin}}(\mathrm{Ad}_g(x); W) &= (\mathrm{Ad}_g x) W \\
    &= \mathrm{Ad}_g (xW) \\
    &= \mathrm{Ad}_g (f_{\mathrm{ReLN-Lin}}(x; W)).
\end{split}
\end{equation}

\subsection{Equivariant Nonlinearities}
Standard pointwise activations break equivariance under non-orthogonal transforms. We introduce two equivariant alternatives.

\paragraph{ReLN-ReLU.}
This layer rectifies a feature based on its alignment with a learnable direction. Given the input $x \in \mathbb{R}^{B \times K \times C}$, we first compute equivariant directions $d = f_{\mathrm{ReLN-Lin}}(x; U)$ where $U \in \mathbb{R}^{C \times C}$. The nonlinearity is then applied channel-wise as a scalar-gated update:
\begin{equation}
    f_{\mathrm{ReLN-ReLU}}(x)_{b,:,c} =
    \begin{cases}
        x_{b,:,c}, & \text{if } \widetilde{B}(x_{b,:,c}^\wedge, d_{b,:,c}^\wedge) \le 0, \\
        x_{b,:,c} + \sigma\big(\widetilde{B}(x_{b,:,c}^\wedge, d_{b,:,c}^\wedge)\big) d_{b,:,c}, & \text{otherwise},
    \end{cases}
\end{equation}
where $\sigma(t) = \max(0, t)$ denotes the standard ReLU activation function acting as an invariant scalar gate. Since the modified bilinear form $\widetilde{B}$ is $\mathrm{Ad}$-invariant, the gate $\sigma(\widetilde{B}(\cdot, \cdot))$ is group-invariant. Furthermore, because $d$ transforms equivariantly with $x$, the resulting vector update preserves strict $\mathrm{Ad}_g$-equivariance. The leaky variant $f_{\mathrm{ReLN-Leaky ReLU}}(x) = \alpha x + (1-\alpha) f_{\mathrm{ReLN-ReLU}}(x)$ follows directly.

\paragraph{ReLN-Bracket (Lie-bracket nonlinearity).}
This layer uses the matrix commutator, a natural $\mathrm{Ad}$-equivariant primitive, to create learnable interactions between channels. Let the input be a batch of features represented by their vector coordinates, $x \in \mathbb{R}^{B \times K \times C_{\mathrm{in}}}$. 

First, two independent linear maps with weights $W_a, W_b \in \mathbb{R}^{C_{\mathrm{in}} \times C_{\mathrm{out}}}$ produce intermediate tensors $u = xW_a$ and $v = xW_b$. The Lie bracket is then computed channel-wise between the corresponding feature vectors of $u$ and $v$ to produce an update tensor $\Delta x \in \mathbb{R}^{B \times K \times C_{\mathrm{out}}}$:
\begin{equation}
    (\Delta x)_{b, :, c'} = \left[ (u_{b, :, c'})^\wedge, (v_{b, :, c'})^\wedge \right]^\vee.
\end{equation}

This update is added to the input via a residual connection (requiring $C_{\mathrm{in}} = C_{\mathrm{out}}$):
\begin{equation}
    f_{\mathrm{ReLN-Bracket}}(x) = x + \Delta x.
\end{equation}

Each step in this process is equivariant under the adjoint action, making the entire block exactly $\mathrm{Ad}_g$-equivariant: $f_{\mathrm{ReLN-Bracket}}(\mathrm{Ad}_g x) = \mathrm{Ad}_g(f_{\mathrm{ReLN-Bracket}}(x))$ for all $g \in G$.

\section{Theoretical Complexity and Scaling Analysis}
\label{app:complexity}

To address the operational properties of our architecture, we provide an explicit Big-$\mathcal{O}$ complexity analysis connecting the feature channel width $C$ and the Lie algebra matrix dimension $n$. We further validate this analysis with empirical benchmarks.

\paragraph{1. Theoretical Scaling with Matrix Dimension $n$.}
We fix the \textbf{total number of scalars per token} to $C$ to ensure a fair comparison. The effective channel width $M$ is scaled such that $M \times K = C$.
\begin{itemize}
    \item \textbf{Standard MLP ($K=1$):} Uses full width $M = C$.
    \item \textbf{ReLN ($K=n^2$):} Operates on $\mathfrak{gl}(n)$ matrices. Effective width $M = C/n^2$.
\end{itemize}

Let $n$ be the dimension of the matrices (e.g., $n=3$ for $\mathfrak{gl}(3)$).
\begin{itemize}
    \item \textbf{Linear Layer:} The layer performs $K$ independent matrix multiplications of size $M \times M$.
    \begin{equation}
        \text{Cost}_{\text{linear}} = K \cdot M^2 = n^2 \cdot \left(\frac{C}{n^2}\right)^2 = \frac{C^2}{n^2}.
    \end{equation}
    The dominant quadratic cost \textbf{decreases} as the matrix dimension $n$ increases. For $n=3$, ReLN reduces the Linear Layer FLOPs by a factor of $1/9$ compared to an MLP ($n=1$).
    
    \item \textbf{Lie Bracket Nonlinearity:} This operation computes $[X, Y] = XY - YX$, involving $n \times n$ matrix multiplications.
    \begin{equation}
        \text{Cost}_{\text{bracket}} = M \cdot \mathcal{O}(n^3) = \frac{C}{n^2} \cdot \mathcal{O}(n^3) = \mathcal{O}(C \cdot n).
    \end{equation}
    In deep learning contexts where $C$ is large (e.g., $C \ge 128$), the quadratic term $\mathcal{O}(C^2/n^2)$ dominates the linear term $\mathcal{O}(Cn)$.
\end{itemize}

\paragraph{2. Layer-wise Complexity Comparison.}
Table~\ref{tab:complexity_comparison} presents the theoretical breakdown. ReLN-Bracket reduces the total block cost to $\approx 12.5\%$ of the baseline.

\begin{table}[h]
    \centering
    \caption{\textbf{Theoretical Complexity Analysis.} Comparison under a fixed feature budget $C$. By utilizing a higher-dimensional geometric space ($n=3, K=9$), ReLN reduces the effective channel width to $C/9$, resulting in a quadratic reduction in the dominant Linear Layer cost.}
    \label{tab:complexity_comparison}
    \vspace{2mm}
    % \resizebox{\textwidth}{!}{%
    \small 
    \begin{tabular}{l l c c c c}
        \toprule
        \textbf{Layer Type} & \textbf{Model} & \textbf{Eff. Channels ($M$)} & \textbf{Parameters} & \textbf{FLOPs} & \textbf{Relative Cost} \\
        \midrule
        \multirow{3}{*}{\textbf{Linear Layer}} 
        & MLP ($n=1$) & $C$ & $C^2$ & $C^2$ & $1.00\times$ \\
        & VN ($n\approx 1.7$) & $C/3$ & $0.33 C^2$ & $0.33 C^2$ & $0.33\times$ \\
        & \textbf{ReLN ($n=3$)} & $\mathbf{C/9}$ & $\mathbf{0.11 C^2}$ & $\mathbf{0.11 C^2}$ & $\mathbf{0.11\times}$ \\
        \midrule
        \multirow{4}{*}{\textbf{Nonlinearity}} 
        & ReLU (MLP) & $C$ & 0 & $\mathcal{O}(C)$ & $\approx 0.00\times$ \\
        & VN-ReLU & $C/3$ & $0.11 C^2$ & $0.11 C^2$ & $0.11\times$ \\
        & ReLN-ReLU & $C/9$ & $0.01 C^2$ & $0.01 C^2$ & $0.01\times$ \\
        & \textbf{ReLN-Bracket} & $\mathbf{C/9}$ & $\mathbf{0.025 C^2}$ & $\mathbf{0.025 C^2}$ & $\mathbf{0.03\times}$ \\
        \bottomrule
    \end{tabular}
    % }
\end{table}

\section{Experimental Details}
\label{app:exp_details}
Training and evaluation for all presented experiments, Platonic solid classification, invariant function regression, top tagging, and drone state estimation, were conducted on a single NVIDIA GeForce RTX 4090 GPU.

\subsection{Model Architectures and Implementation Details}
\label{app:architectures}

Across all experiments, our proposed ReLN models are constructed by stacking ReLN-Linear, ReLN-ReLU, and ReLN-Bracket layers. The specific number of layers and channel widths are adapted for each task to ensure a fair comparison with baseline models in terms of parameter count.

\paragraph{Algebraic Benchmarks ($\mathfrak{sl}(3)$ and $\mathfrak{sp}(4)$).}
For the Platonic solid classification and $\mathfrak{sp}(4)$ invariant regression tasks, our ReLN model directly adopts the architecture used by the Lie Neurons benchmark model from \citet{lin2023lie}. The primary modification is the replacement of their Killing form-based nonlinearity and invariant layers with our proposed non-degenerate bilinear form $\widetilde{B}$ (Eq.~\ref{eq:killing-gl-concrete}). This setup allows for a direct comparison of the impact of the bilinear form, as all other architectural hyperparameters are kept identical to the baseline.

\paragraph{Top Tagging.}
For the Top-Tagging task, our model is a modification of the LorentzNet architecture~\citep{gong2022efficient}. We adapt its Lorentz Group Equivariant Blocks (LGEBs) by replacing the invariant feature computation with our proposed bilinear form. A detailed description of the architecture, our modifications, and training protocol is provided in Appendix~\ref{app:top_tagging}.

\paragraph{Drone State Estimation.}
In this task, we compare our ReLN model against two baseline families: a non-equivariant 1D ResNet and an equivariant Vector Neurons (VN) model. The specific implementation details and architectural choices for each model are provided next in Appendix~\ref{app:drone_exp_details}.

\subsection{Platonic Solid Classification on $\mathfrak{sl}(3)$}
\label{app:hyperparams}

\paragraph{Overview.}
All experiments evaluate classification of Platonic solids (tetrahedron, octahedron, icosahedron) from inter-face homographies computed in the image plane. For each model-family we train 5 independent runs with different random seeds and report mean ± standard deviation. Training uses fixed object and camera poses; at test time we report results on the in-distribution (ID) split and the rotated-camera (RC) split (RC applies ten random \(\mathrm{SO}(3)\) rotations to the camera frame). The `MLP (wider)' denotes a capacity-matched ($\approx$ 2× parameters) MLP used for a fairer comparison.

\begin{table}[h]
    \scriptsize
  \centering
  \caption{Common training hyperparameters (used across model families unless noted).}
  \label{tab:hyper_common}
  \begin{tabular}{l l}
    \toprule
    Hyperparameter & Value \\
    \midrule
    Optimizer & Adam \\
    Batch size & 100 \\
    Number of independent runs (seeds) & 5 \\
    Max epochs / stopping criterion & 5000 epochs \\
    Data augmentation (train) & Random camera rotations applied to training examples when enabled \\
    RC evaluation & 500 random \(\mathrm{SO}(3)\) rotations applied to each test example \\
    Metric reported & Classification accuracy (mean ± std across runs) \\
    \bottomrule
  \end{tabular}
\end{table}

\begin{table}[h]
  \centering
    \scriptsize
  \caption{Model-specific hyperparameters and implementation notes.}
  \label{tab:hyper_model_specific}
  \begin{tabular}{l l l}
    \toprule
    Model family & Key choices & Notes \\
    \midrule
    Latent Feature Size (MLP Baseline)  & 256 & As in \citet{lin2023lie}. \\
    Latent Feature Size (MLP Wider) & 386 & Increased width total parameters $\approx 2\times$ baseline.\\
    Learning rate (MLP models) & $1\times 10^{-4}$ & \\
    Learning rate (ReLNs/Lie Neurons models) & $3\times 10^{-6}$ & Lower LR chosen for stable training \\
    \bottomrule
  \end{tabular}
\end{table}

%%%%%%%%%%%%%%% Lorentz Group Benchmark %%%%%%%%%%%%%%%
\section{Top Tagging Experiment: Framework, Proof, and Implementation}
\label{app:top_tagging}

This appendix provides details for our Lorentz-equivariant jet tagging study. We first summarize the task and results, then present the Lorentz-compatible embedding and its equivariance proof, and finally describe the model and training protocol.

\subsection{Particle Physics with Lorentz Group $\mathrm{SO}^+(1,3)$ Equivariance}
\label{app:lorentz_results}

We evaluate on the Top-Tagging benchmark~\citep{kasieczka2019machine}, which classifies particle jets originating from top quarks against a Quantum Chromodynamics background. Since relativistic collisions respect spacetime symmetries, the model should be equivariant under the proper orthochronous Lorentz group $\mathrm{SO}^+(1,3)$. To express this as an adjoint action within our Lie-algebraic framework, we embed each four-momentum $p\in\mathbb{R}^{1,3}$ into $\mathfrak{gl}(5)$ via
\begin{equation}
\varphi(p) \;=\;
\begin{bmatrix}
0_{4\times4} & p \\
p^\top \eta & 0
\end{bmatrix},
\qquad
\eta=\mathrm{diag}(-1,1,1,1),
\label{eq:lorentz_embedding_app}
\end{equation}
and show in Appendix~\ref{app:lorentz_proof} that $\varphi(\Lambda p)=\mathrm{Ad}_{\mathrm{diag}(\Lambda,1)}\big(\varphi(p)\big)$ for $\Lambda\in\mathrm{SO}^+(1,3)$.

For comparison, we modify LorentzNet by replacing its invariant feature computation with our bilinear-form construction and additionally report a parameter-matched LorentzNet baseline. Table~\ref{tab:top_tagging_results} shows that ReLN achieves performance comparable to the parameter-matched LorentzNet across Acc./AUC and Rej@30\%, with differences within the reported uncertainty (i.e., overlapping standard deviations).

% --- TABLE FOR JET TAGGING RESULTS ---
\begin{table}[t]
    \centering
    \caption{
        Comparison of performance on the Top-Tagging dataset.
        ReLN results include both the parameter-efficient version (84k) and the parameter-matched version (224k).
        Rej@30\% denotes background rejection at 30\% signal efficiency (higher is better).
        Benchmark scores are as reported in the original publications.}
    \label{tab:top_tagging_results}
    \scriptsize
    \setlength{\tabcolsep}{3pt}
    \resizebox{0.7\linewidth}{!}{
        \begin{tabular}{l l c c c l}
            \toprule
            \textbf{Architecture} & \textbf{\#Params} & \textbf{Acc.} & \textbf{AUC} & \textbf{Rej@30\%} & \textbf{Reference} \\
            \midrule
            PELICAN                 & 45k      & 0.943 & 0.987 & $2289 \pm 204$ & \citet{bogatskiy2022pelican} \\
            LorentzNet (orig.)      & 224k     & 0.942 & 0.987 & $2195 \pm 173$ & \citet{gong2022efficient} \\
            LorentzNet (reprod.)    & 84k      & 0.942 & 0.987 & $1821 \pm 94$  & Our reprod. \\
            LGN                     & 4.5k     & 0.929 & 0.964 & $435 \pm 95$   & \citet{bogatskiy2020lorentz} \\
            BIP                     & 4k       & 0.931 & 0.981 & $853 \pm 68$   & \citet{munoz2022boost} \\
            partT                   & 2.14M    & 0.940 & 0.986 & $1602 \pm 81$  & \citet{qu2022particle} \\
            ParticleNet             & 498k     & 0.938 & 0.985 & $1298 \pm 46$  & \citet{qu2020jet} \\
            EFN                     & 82k      & 0.927 & 0.979 & $633 \pm 31$   & \citet{komiske2019energy} \\
            TopoDNN                 & 59k      & 0.916 & 0.972 & $295 \pm 5$    & \citet{pearkes2017jet} \\
            LorentzMACE             & 228k     & 0.942 & 0.987 & $1935 \pm 85$  & \citet{batatia2023a} \\
            CGENN                   & 321K     & 0.942 & 0.987 & $2172$         & \citet{ruhe2023clifford} \\
            \midrule
            \textbf{Lie Neurons}    & 224k     & 0.941 & 0.985 & $1655 \pm 73$ & Our reprod. \\
            \textbf{ReLN (Ours)}    & 224k     & 0.942 & 0.987 & $2201 \pm 101$ & \\
            \textbf{ReLN (Ours)}    & 84k      & 0.942 & 0.987 & $1979 \pm 87$ & \\
            \bottomrule
        \end{tabular}
    }
\end{table}

\subsection{Group Action Equivariance via Embedding Map}
\label{app:lorentz_proof}

To process four-momenta within our Lie-algebraic framework, we require an embedding that translates the Lorentz action into an adjoint action on a matrix space.

\begin{definition}[Lorentz-Compatible Embedding]
Given $p \in \mathbb{R}^4$ and $\eta = \operatorname{diag}(-1, 1, 1, 1)$, define
\begin{equation}
    \varphi(p) =
    \begin{bmatrix}
    0_{4 \times 4} & p \\
    p^\top \eta & 0
    \end{bmatrix}.
    \label{eq:lorentz_embedding}
\end{equation}
\end{definition}

\begin{theorem}[Adjoint Equivariance]
For any $p \in \mathbb{R}^4$ and $\Lambda \in \mathrm{SO}^+(1,3)$, let $G = \operatorname{diag}(\Lambda, 1) \in \operatorname{GL}(5)$. Then
\begin{equation}
    \operatorname{Ad}_G(\varphi(p)) = G\varphi(p)G^{-1} = \varphi(\Lambda p).
    \label{eq:equivariance_condition}
\end{equation}
\end{theorem}

\begin{proof}
We compute the left-hand side (LHS) of Eq. \ref{eq:equivariance_condition}, which is the adjoint action:
\begin{align}
    \operatorname{Ad}_G(\varphi(p)) &= 
    \begin{bmatrix}
    \Lambda & 0 \\
    0 & 1
    \end{bmatrix}
    \begin{bmatrix}
    0 & p \\
    p^\top \eta & 0
    \end{bmatrix}
    \begin{bmatrix}
    \Lambda^{-1} & 0 \\
    0 & 1
    \end{bmatrix} 
    = 
    \begin{bmatrix}
    0 & \Lambda p \\
    p^\top \eta \Lambda^{-1} & 0
    \end{bmatrix}.
    \label{eq:lhs_result}
\end{align}
The right-hand side (RHS) is the lift of the transformed vector $\Lambda p$:
\begin{equation}
    \varphi(\Lambda p) = 
    \begin{bmatrix}
    0 & \Lambda p \\
    (\Lambda p)^\top \eta & 0
    \end{bmatrix}
    =
    \begin{bmatrix}
    0 & \Lambda p \\
    p^\top \Lambda^\top \eta & 0
    \end{bmatrix}.
    \label{eq:rhs_result}
\end{equation}
For the LHS and RHS to be equal, we must show that $\eta \Lambda^{-1} = \Lambda^\top \eta$. We start from the defining property of $\operatorname{SO}(1,3)$:
\begin{equation}
    \Lambda^\top \eta \Lambda = \eta.
    \label{eq:lorentz_property}
\end{equation}
Right-multiplying Eq. \ref{eq:lorentz_property} by $\Lambda^{-1}$ yields the desired identity:
\begin{align}
    (\Lambda^\top \eta \Lambda) \Lambda^{-1} &= \eta \Lambda^{-1} \implies \Lambda^\top \eta ( \Lambda \Lambda^{-1}) = \eta \Lambda^{-1} \implies \Lambda^\top \eta = \eta \Lambda^{-1}.
    \label{eq:final_identity}
\end{align}
Since the condition holds, the proof is complete.
\end{proof}

\begin{remark}[Generalization to Orthogonal Groups]
\label{rem:lifting_generalization}
This embedding technique is not limited to the Lorentz group and can be readily generalized to any orthogonal group $\operatorname{O}(n)$ or special orthogonal group $\operatorname{SO}(n)$. For instance, in applications involving 3D point clouds where the symmetry is $\operatorname{SO}(3)$, a vector $p \in \mathbb{R}^3$ would be embedded into the Lie algebra $\mathfrak{gl}(4)$ as:
\begin{equation}
\varphi(p) = 
\begin{bmatrix}
0_{3 \times 3} & p \\
p^\top & 0
\end{bmatrix}
\end{equation}
The proof of equivariance follows the same structure, using the property of orthogonal matrices, $R^\top R = I$ (which implies $R^{-1} = R^\top$), instead of the Minkowski metric identity. This highlights the broad applicability of our embedding strategy to any benchmark involving norm-preserving group transformations.
\end{remark}

\subsection{Experimental Implementation}
\paragraph{Dataset.}
The experiment uses the Top-Tagging dataset~\citep{kasieczka2019machine}, which contains 2 million simulated proton-proton collision events. The dataset was generated with Pythia, Delphos, and FastJet to model the ATLAS detector response. We use the standard 60\%/20\%/20\% splits for training, validation, and testing. Each jet is represented as a set of constituent particles,
each with four-momentum $p=(E,p_x,p_y,p_z)$.

\paragraph{Model.}
Our model leverages the established architecture of LorentzNet \citep{gong2022efficient}, utilizing its stack of Lorentz Group Equivariant Blocks (LGEBs) for message passing on the jet's particle cloud. While the original LorentzNet computes these features directly from the 4-momenta using the Minkowski inner product, our approach introduces a modified bilinear form-based feature extraction. We first embed each pair of 4-momenta, $p_i$ and $p_j$, from the Minkowski space $\mathbb{R}^{1,3}$ into the Lie algebra $\mathfrak{gl}(5)$ via the map $p \mapsto \varphi(p)$, as defined in Equation~\ref{eq:lorentz_embedding}. The invariant features for the message passing are then derived from the bilinear form, $\widetilde{B}(\cdot, \cdot)$, on this Lie algebraic space. The edge message $m_{ij}$ is thus constructed as:
\begin{equation}
    m_{ij} = \phi_e \Big( h_i, h_j, \psi\big(\widetilde B(\varphi(p_i), \varphi(p_i))\big), \psi\big(\widetilde B(\varphi(p_j), \varphi(p_j))\big), \psi\big(\widetilde B(\varphi(p_i), \varphi(p_j))\big) \Big)
    \label{eq:bilinear_form_message}
\end{equation}
where $h_i, h_j$ are scalar features, $\phi_e$ is an MLP, and $\psi$ is a stabilizing nonlinearity. As shown in the main results (Table \ref{tab:top_tagging_results}), this approach leads to an advantage in background rejection when compared against a parameter-matched LorentzNet baseline. The architectural differences are summarized in Table~\ref{tab:top_tagging_arch_details}.

\begin{table}[h!]
\centering
\scriptsize
\caption{Architectural comparison for the Top-Tagging task.}
\label{tab:top_tagging_arch_details}
\begin{tabular}{l l l l}
\toprule
\textbf{Component} & \textbf{LorentzNet (Original)} & \textbf{Param-matched Baseline} & \textbf{Ours (ReLN)} \\
\midrule
Number of LGEBs & 6 & 5 & 5 \\
Hidden feature dims & 72 & 48 & 48 \\
Edge feature computation & Minkowski inner prod. & Minkowski inner prod. & \textbf{Bilinear invariant form} \\
\bottomrule
\end{tabular}
\end{table}

\paragraph{Training Setup.} For a fair comparison, our training procedure closely follows the protocol established in the LorentzNet~\citep{gong2022efficient}. The model was trained for a total of 35 epochs on a NVIDIA RTX 4090 GPU. We used the {AdamW optimizer} with a weight decay of 0.01 and a batch size of 128, matching the total effective batch size from the reference work. The learning rate was managed by the paper's specific three-stage schedule: a 4-epoch linear warm-up to an initial rate of \(1 \times 10^{-3}\), followed by a 28-epoch \texttt{CosineAnnealingWarmRestarts} schedule, and a final 3-epoch exponential decay. After each epoch, the model with the highest validation accuracy was saved for final evaluation on the test set.

\section{Drone Experiment Details}
\label{app:drone_exp_details}

This appendix provides the technical details for the drone state estimation experiment, including the theoretical framework, dataset generation, model implementations, and formal proofs.

\subsection{Geometric Framework for Equivariant Covariance Processing}
\label{app:geo_framework}
Our approach leverages the geometry of symmetric positive-definite matrices. A covariance matrix $C$ is symmetric positive-definite, residing on the manifold $\mathrm{SPD}(3)$. A non-degenerate covariance matrix $C \in \mathrm{SPD}(n)$ represents the anisotropic stretching of a general linear map, as seen via the polar decomposition $A = QP$ with $Q \in \mathrm{O}(n)$ and $P \in \mathrm{SPD}(n)$. Equivalently, there is a homogeneous-space isomorphism:
$
\mathrm{SPD}(n) \;\cong\; \mathrm{GL}(n)/\mathrm{O}(n),
$
which motivates processing covariances in a $\mathrm{GL}(n)$-aware architecture.

While $\mathrm{SPD}(3)$ is not a Lie group, the matrix logarithm provides a canonical map to the vector space of symmetric matrices $\mathrm{Sym}(3)$, which is a linear subspace of $\mathfrak{gl}(3)$.
\begin{equation}
    \log : \mathrm{SPD}(3) \;\longrightarrow\; \mathrm{Sym}(3) \subset \mathfrak{gl}(3).
\end{equation}

This allows us to embed a geometric object from a curved manifold into a flat, Lie-algebra-compatible space. The following theorem proves that the congruence transformation on $C \in \mathrm{SPD}(n)$ becomes an adjoint action on its image $\log C \in \mathrm{Sym}(n)$, thus preserving the equivariant structure required by our model.

\begin{theorem}[Equivariance of the Logarithmic Map]
For any $C \in \mathrm{SPD}(n)$ and any rotation matrix $R \in \mathrm{SO}(n)$, the congruence transformation on $C$ corresponds to an adjoint action on its logarithm:
\begin{equation}
    \log(RCR^\top) = R(\log C)R^\top.
\end{equation}
\end{theorem}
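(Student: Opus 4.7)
The plan is to reduce the identity to the uniqueness of the matrix logarithm on $\mathrm{SPD}(n)$. The principal matrix logarithm is the unique $X \in \mathrm{Sym}(n)$ with $\exp X = C$, so to establish $\log(RCR^\top) = R(\log C)R^\top$ it suffices to verify that (i) $RCR^\top \in \mathrm{SPD}(n)$ so that its principal logarithm is well-defined, (ii) $R(\log C)R^\top$ lies in $\mathrm{Sym}(n)$, and (iii) its exponential equals $RCR^\top$. Step (i) is immediate by congruence, and step (ii) follows since $(R(\log C)R^\top)^\top = R(\log C)^\top R^\top = R(\log C)R^\top$ using symmetry of $\log C$.

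The core step is (iii), which rests on the conjugation-equivariance of $\exp$. Expanding the power series and using $R^\top R = I$, each monomial telescopes as $(R X R^\top)^k = R X^k R^\top$, giving
\begin{equation}
\exp(R(\log C)R^\top) \;=\; \sum_{k\ge 0}\frac{(R(\log C)R^\top)^k}{k!} \;=\; R\Bigl(\sum_{k\ge 0}\frac{(\log C)^k}{k!}\Bigr)R^\top \;=\; R\,C\,R^\top.
\end{equation}
Uniqueness of the principal logarithm on $\mathrm{SPD}(n)$ then yields the claim. Alternatively, one can argue spectrally: diagonalize $C = U\Lambda U^\top$ with $\Lambda = \operatorname{diag}(\lambda_1,\ldots,\lambda_n)$, $\lambda_i>0$, note that $RCR^\top = (RU)\Lambda(RU)^\top$ is again an orthogonal spectral decomposition since $RU \in \mathrm{O}(n)$, and observe that both sides of the identity equal $(RU)(\operatorname{diag}(\log\lambda_1,\ldots,\log\lambda_n))(RU)^\top$.

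The only subtlety worth flagging is the well-definedness of $\log$. Over general invertible matrices the logarithm is multi-valued, and the identity could break for the wrong branch; restricting to $\mathrm{SPD}(n)$ ensures all eigenvalues are positive reals and that $\exp : \mathrm{Sym}(n) \to \mathrm{SPD}(n)$ is a diffeomorphism, which legitimizes the uniqueness argument. Once this is pinned down, the result is a direct application of the functional-calculus principle that any analytic matrix function commutes with similarity transformations, specialized to the orthogonal case where $R^{-1} = R^\top$. No combinatorial or analytic hurdle arises beyond correctly invoking these standard facts.
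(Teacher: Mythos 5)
Your proof is correct, and your primary argument takes a genuinely different route from the paper. The paper proceeds constructively: it diagonalizes $C = V\Lambda V^\top$, notes $RCR^\top = (RV)\Lambda(RV)^\top$ is again an orthogonal eigendecomposition, and applies the eigenvalue-based definition $\log(\,\cdot\,) = V(\log\Lambda)V^\top$ directly to both sides. Your main argument instead appeals to the characterization of $\log$ as the inverse of the diffeomorphism $\exp : \mathrm{Sym}(n)\to\mathrm{SPD}(n)$: you verify that $R(\log C)R^\top$ is symmetric, show via the telescoping power series $(RXR^\top)^k = RX^kR^\top$ that its exponential equals $RCR^\top$, and conclude by uniqueness. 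This buys some abstraction—it avoids committing to a particular formula for the matrix logarithm and makes explicit that the result is just functional calculus commuting with conjugation—while the paper's spectral route is more elementary and self-contained, not needing the diffeomorphism fact as a prerequisite. You do recover the paper's argument verbatim in your ``alternatively'' paragraph, so both styles are in your write-up; either alone would suffice, and your flagged subtlety about branch choice and well-definedness of $\log$ on $\mathrm{SPD}(n)$ is the right thing to call out.
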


\begin{proof}
The proof follows from the spectral theorem for real symmetric matrices.
\begin{enumerate}
    \item Let the eigendecomposition of $C$ be $C = V \Lambda V^\top$, where $V$ is an orthogonal matrix ($V^\top V = I$) of eigenvectors and $\Lambda$ is the diagonal matrix of corresponding positive eigenvalues.
    
    \item By definition, the matrix logarithm of $C$ is given by applying the logarithm to its eigenvalues:
    \begin{equation}
        \log C := V (\log \Lambda) V^\top
    \end{equation}
    where $\log \Lambda$ is the diagonal matrix of element-wise logarithms of the eigenvalues.

    \item Consider the transformed matrix $C' = RCR^\top$. Substituting the decomposition of $C$ yields:
    \begin{equation}
        C' = R(V \Lambda V^\top)R^\top = (RV) \Lambda (V^\top R^\top) = (RV) \Lambda (RV)^\top
    \end{equation}
    This is the eigendecomposition of $C'$, where the new orthogonal matrix of eigenvectors is $V' = RV$ and the eigenvalues $\Lambda$ are unchanged.

    \item Applying the definition of the matrix logarithm to $C'$ gives:
    \begin{equation}
        \log(C') = V' (\log \Lambda) (V')^\top = (RV) (\log \Lambda) (RV)^\top
    \end{equation}

    \item Rearranging the terms, we arrive at the desired identity:
    \begin{equation}
        \log(C') = R \left( V (\log \Lambda) V^\top \right) R^\top = R(\log C)R^\top
    \end{equation}
\end{enumerate}
\end{proof}

This identity is critical, as it confirms that our adjoint-equivariant network can process either the raw covariance $C$ or its logarithm $\log C$ while perfectly preserving the $\mathrm{SO}(3)$ symmetry.

In the \(\mathrm{SO}(3)\) regime used in our experiments, vectors (e.g., velocity \(\mathbf{v}\)) are represented in the Lie algebra \(\mathfrak{so}(3)\) so that the adjoint action coincides with ordinary rotation, \(\mathrm{Ad}_R(\mathbf{v})=R\mathbf{v}\). Conjugation then implements the covariance congruence \(C\mapsto RCR^\top\). Consequently, ReLNs realize \(\mathrm{SO}(3)\)-equivariance \emph{by construction}, avoiding the need for the model to learn these symmetries from data.

\subsection{Dataset Generation.}
\label{app:dataset}

We use the PyBullet engine to simulate 200 aggressive trajectories for a Crazyflie-like nano-quadrotor. To generate realistic measurements, the instantaneous velocity is corrupted by Gaussian noise,
$
\mathbf{v}_{\mathrm{noisy}} \sim \mathcal{N}(\mathbf{v}_{\mathrm{gt}},\,C_v),
$
where the covariance \(C_v\) varies with flight aggressiveness. The dataset provides time series of noisy velocities, ground-truth covariances, and ground-truth trajectories for evaluation.

\paragraph{Trajectory Generation.}
The procedure begins with the procedural generation of a sequence of 20 to 40 random 3D waypoints within a flight volume of approximately $170\text{m} \times 170\text{m} \times 60\text{m}$. The waypoints are sampled from a uniform distribution to create diverse flight paths. To mimic the complex dynamics of aggressive flight, each trajectory is randomly generated using a path with random wiggles or a path featuring high-speed spiral maneuvers. These discrete waypoints are then interpolated using a Catmull-Rom spline to create a smooth, $C^1$ continuous target trajectory, which is densely sampled at an 80\,Hz control frequency. Each of the 200 sequences results in a unique trajectory lasting approximately 2-4 minutes, totaling over 13 hours of simulated flight time. A sample generated trajectory is shown in Figure~\ref{fig:sample_trajectory}.

\begin{figure}[h]
    \centering
    \begin{subfigure}[b]{0.48\textwidth}
        \centering
        \includegraphics[width=\textwidth]{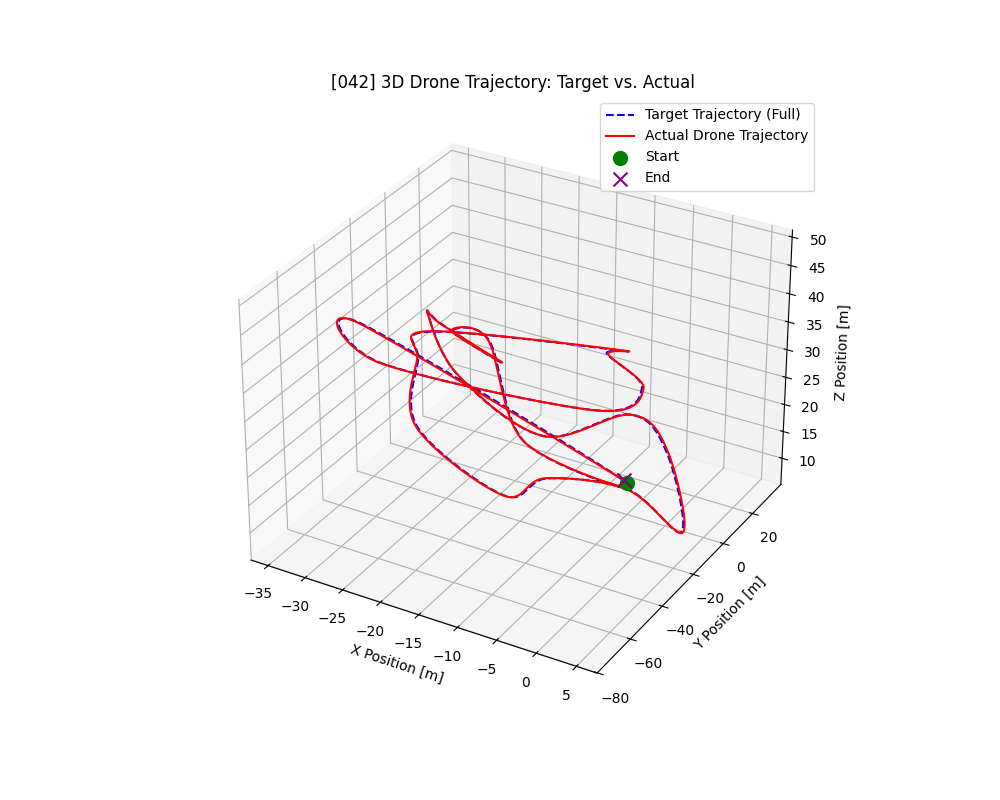}
        \caption{A sample trajectory with spiral maneuvers.}
        \label{fig:spiral_traj}
    \end{subfigure}
    \hfill
    \begin{subfigure}[b]{0.48\textwidth}
        \centering
        \includegraphics[width=\textwidth]{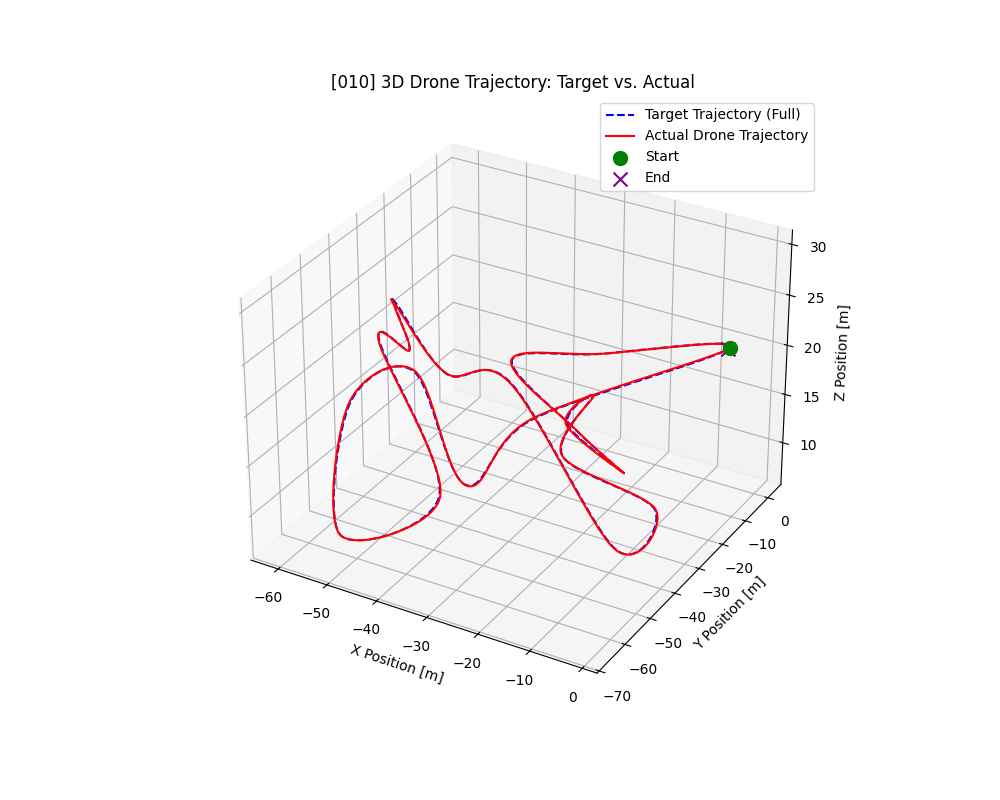}
        \caption{A sample trajectory with random wiggles.}
        \label{fig:wiggle_traj}
    \end{subfigure}
    \caption{Sample aggressive trajectories generated in the PyBullet simulator.}
    \label{fig:sample_trajectory}
\end{figure}

\paragraph{State-Dependent Noise Model.}
To simulate realistic sensor characteristics, the ground-truth velocity is corrupted by zero-mean Gaussian noise, $\mathbf{v}_{\mathrm{noisy}} \sim \mathcal{N}(\mathbf{v}_{\mathrm{gt}},\,C_v)$. The covariance matrix $C_v$ is state-dependent, designed to scale with the drone's speed. The standard deviation $\sigma_v$ for each velocity axis is computed using a sigmoid function of the velocity magnitude $\|\mathbf{v}_{\mathrm{gt}}\|$:
\begin{equation}
 \sigma_v(\|\mathbf{v}_{\mathrm{gt}}\|) = \sigma_{\min} + (\sigma_{\max} - \sigma_{\min}) \cdot \frac{1}{1 + \exp(-\lambda (\|\mathbf{v}_{\mathrm{gt}}\| - v_{\text{mid}}))},
\end{equation}
where the variance on each axis is $\sigma_v^2$. We set the minimum and maximum standard deviations to $\sigma_{\min}=0.2$\,m/s and $\sigma_{\max}=1.0$\,m/s, respectively. The steepness $\lambda$ is set to 0.8, and the midpoint velocity $v_{\text{mid}}$ is dynamically adjusted based on the estimated average speed of each trajectory to ensure a realistic noise profile.

\subsection{Baseline and Model Implementation Details}
\label{app:impl}

We compare ReLN against two baseline classes chosen to isolate the effect of geometric priors.

\paragraph{Non-equivariant baselines.}  We use a standard 1D ResNet architecture with temporal convolutional blocks that processes flattened input sequences. The \textbf{ResNet (velocity-only)} model receives only the 3D velocity vector. The \textbf{ResNet (velocity + covariance)} model receives the flattened $3\times3$ covariance matrix concatenated to the velocity vector. 

\paragraph{Eigendecomposition-based {$\mathbf{SO}\textbf{(3)}$}-Equivariant Baseline.} This model adapts the 1D ResNet backbone for $\mathrm{SO}(3)$ equivariance using VN layers. Since VNs cannot directly ingest matrices, we decompose each covariance matrix $C = V \Lambda V^\top$ and use a dual-stream design:
\begin{itemize}
  \item an \emph{equivariant} stream \(\mathcal{F}_{\mathrm{eq}}=\{\mathbf v,\mathbf e_1,\mathbf e_2,\mathbf e_3\}\) comprising the measured velocity \(\mathbf v\) and the three orthonormal eigenvectors \(\mathbf e_i\), which together capture all directional information. This stream is handled by the VNs backbone.
  \item an \emph{invariant} stream \(\mathcal{F}_{\mathrm{inv}}=\{\lambda_1,\lambda_2,\lambda_3\}\) processes the corresponding eigenvalues $\{\lambda_1, \lambda_2, \lambda_3\}$, which encode orientation-independent scale information, using a standard MLP.
\end{itemize}
The two latent features from both streams are fused at the final output layer. Eigenvector ambiguities (sign or multiplicities) are resolved via a deterministic, rotation-equivariant canonicalization.

\paragraph{Spherical Harmonic (SH)-based Equivariant Baselines (TFN and SE(3)-Transformer).}

We evaluate two representative $\mathrm{SO}(3)$-equivariant graph architectures built from irrep-valued features and steerable bases expanded in spherical harmonics (SH): Tensor Field Networks (TFN)~\citep{thomas2018tensor} and the SE(3)-Transformer~\citep{fuchs2020se}. 
Both enforce equivariance by restricting linear maps to $\mathrm{SO}(3)$ intertwiners, implemented through SH steerable kernels and tensor-product (Clebsch--Gordan) couplings.

\textit{Transformation Rules and Irrep Decomposition.}
\label{app:irrep_decomp}
Under a rotation $R\in \mathrm{SO}(3)$, the velocity transforms as $\mathbf v \mapsto R\mathbf v$, while a covariance transforms by congruence $C \mapsto RCR^\top$.
We represent the symmetric rank-2 covariance via the standard irrep decomposition
$\mathrm{Sym}^2(\mathbb{R}^3)\cong L{=}0 \oplus L{=}2$,
i.e., $C = \tfrac{\mathrm{tr}(C)}{3}I + C_0$ with $\mathrm{tr}(C)$ as a type-$0$ scalar and $C_0 := C - \tfrac{\mathrm{tr}(C)}{3}I$ as a type-$2$ traceless-symmetric tensor.

\textit{Feature construction and temporal graph.}
Each timestamp is encoded as an $\mathrm{SO}(3)$ fiber with degrees $L\in\{0,1,2\}$: $\mathrm{tr}(C)$ for $L{=}0$, $\mathbf v$ for $L{=}1$, and $C_0$ for $L{=}2$.
We form a temporal chain graph connecting $t$ to $t\pm 1$ and drive message passing using edge attributes derived from relative offsets.
TFN aggregates messages via steerable kernel filtering (with learned radial profiles), whereas SE(3)-Transformer replaces fixed kernel aggregation with multi-head, content-dependent equivariant self-attention~\citep{thomas2018tensor, fuchs2020se}.
In both cases, uncertainty is stored in separate type-$0$ and type-$2$ blocks, and interactions between them are mediated by learned equivariant tensor-product couplings rather than by treating $C$ as a single matrix feature throughout the network. 

\textit{Relation to our formulation.}
In contrast, ReLN embeds $(\mathbf v, C)$ into a unified matrix space $\mathfrak{gl}(3)$ and processes both under a single adjoint action, directly matching the congruence rule $C \mapsto RCR^\top$ in a shared Lie-algebraic backbone.

\textit{Empirical observation.}
In our experiments, TFN consistently outperformed the SE(3)-Transformer. We attribute this to the nature of drone state estimation as a path integration task. The SH-based convolutional kernels in TFN act as learned local integrators, which are better suited for capturing high-frequency temporal dynamics than the global dependency focus of attention mechanisms. While attention provides flexibility, it can introduce susceptibility to high-frequency noise in aggressive flight sequences, whereas the local inductive bias of SH-convolutions provides a more stable prior for trajectory reconstruction.

\paragraph{Reductive Lie Neurons (ReLNs).} 
The ReLN model shares a similar backbone but incorporates the \texttt{ReLN-Bracket} layer. In contrast to the VN and graph-based baselines, ReLNs provide a unified framework for velocity and covariance processing within the Lie algebra $\mathfrak{gl}(3)$. Velocities $\mathbf{v} \in \mathbb{R}^3$ are lifted into $\mathfrak{so}(3) \subset \mathfrak{gl}(3)$ via $K = \mathbf{v}^\wedge$, while covariance $C$ (or $\log C$) is treated as a structured geometric input. Both transform under the same adjoint action: $K' = RKR^\top$ and $C' = RCR^\top$. By utilizing the matrix logarithm, $\log C \in \mathrm{Sym}(3) \subset \mathfrak{gl}(3)$, we embed the manifold-valued uncertainty into a linear subspace compatible with Lie-algebraic processing. The final velocity estimate is equivariantly extracted via the projection $\tilde{\mathbf{v}} = (\tfrac{1}{2}(A - A^\top))^{\vee}$ from the network's matrix output $A \in \mathbb{R}^{3 \times 3}$.

\subsection{Training and Evaluation Protocol}

\paragraph{Problem Formulation.}
The network is trained to predict the drone's 3D position $\mathbf{p}_t \in \mathbb{R}^3$ at the end of a given time window, based on a sequence of noisy velocity measurements and their corresponding covariances within that window (e.g., a 1-second history). All models are trained by minimizing the Mean Squared Error (MSE) between the predicted position $\hat{\mathbf{p}}_t$ and the ground-truth position $\mathbf{p}_{t, \text{gt}}$. The loss function is defined as $\mathcal{L} = \| \hat{\mathbf{p}}_t - \mathbf{p}_{t, \text{gt}} \|_2^2$.

\paragraph{Dataset and Optimization.}
We partition the dataset using a standard 80:10:10 train/validation/test split. All models are trained on identical splits to ensure fair comparison. Models are optimized using the AdamW optimizer with a ReduceLROnPlateau learning rate scheduler based on validation loss.

\paragraph{Evaluation Metrics.}
We report the following pose-regression metrics over the test set:
\begin{itemize}
    \item \textbf{Absolute Trajectory Error (ATE):} The root-mean-square error (RMSE) between the ground-truth positions $p_k$ and predicted 3D positions $\hat{p}_k$ over the entire trajectory, calculated as $\sqrt{\frac{1}{N}\sum_{k=1}^N \|\hat{p}_k - p_k\|^2}$, measured in meters.
    
    \item \textbf{ATE\(_\%\):} The ATE normalized by the total trajectory length $L$, expressed as a percentage ($100 \times \text{ATE} / L$). This metric provides a scale-invariant measure of error, which is crucial for fairly comparing performance across our aggressive flight trajectories of varying lengths.
    
    \item \textbf{Relative Translation Error (RTE):} This metric evaluates local consistency by measuring the relative translational error over fixed-length sub-trajectories. For each window of duration $\Delta t = 2.0\,\text{s}$ ($n=400$ samples), we compute the RMSE of the positional difference between estimated and ground-truth relative displacements: $\|(\hat{p}_{k+n} - \hat{p}_k) - (p_{k+n} - p_k)\|$. We report the mean error by sliding this window across the trajectory with a stride of $0.5\,\text{s}$ ($100$ samples) to ensure a robust assessment of short-term stability.
\end{itemize}

To explicitly validate equivariance, we also evaluate all models on the test set after applying a set of random $\mathrm{SO}(3)$ rotations to the entire input sequence.

\subsection{Eigenvector Canonicalization for the VN Baseline}
\label{app:canonicalization}
To resolve ambiguities in the eigendecomposition $C=V\Lambda V^\top$ for the VN baseline while preserving $SO(3)$ geometry, we canonicalize the eigenvector matrix $V=[\mathbf{e}_1,\mathbf{e}_2,\mathbf{e}_3]$ using the following sequence. To avoid producing reflection matrices (where $\det V = -1$), sign disambiguation must precede the enforcement of a right-handed frame.

\begin{enumerate}
    \item \textbf{Sign Disambiguation:} For each eigenvector $\mathbf{e}_i$ associated with a distinct eigenvalue, we enforce a consistent orientation relative to the velocity vector $\mathbf{v}$ by ensuring $\mathbf{v}^\top\mathbf{e}_i \ge 0$. If $\mathbf{v}^\top\mathbf{e}_i < 0$, we set $\mathbf{e}_i \leftarrow -\mathbf{e}_i$.
    \item \textbf{Right-handed Frame:} After the individual signs are fixed, we ensure the matrix $V$ represents a proper rotation. If $\det V < 0$, we flip the sign of the third eigenvector, $\mathbf{e}_3 \leftarrow -\mathbf{e}_3$, to ensure $\det V = +1$. This step is performed last to ensure that any sign changes in the previous step do not inadvertently result in a reflection frame.
    \item \textbf{Multiplicity Handling:} In cases of repeated eigenvalues, we project the velocity vector $\mathbf{v}$ onto the corresponding eigenspace to uniquely define the first basis vector of that subspace, then complete the orthonormal basis via the Gram-Schmidt process, followed by the determinant check in Step 2.
\end{enumerate}
By applying the determinant constraint after sign alignment, we guarantee that the baseline operates on valid rotation matrices, ensuring a fair and geometrically consistent comparison with our $SO(3)$-equivariant model.

\subsection{Full Experimental Results and Ablation Study for Drone State Estimation}
\label{app:drone_full_results}

Table~\ref{tab:drone_full_comparison} provides the exhaustive results for all models across three input modalities: $v, (v, C), (v, \log C)$. This comprehensive comparison highlights that while the $\log$-covariance interface generally improves performance for most equivariant models, our ReLN architecture remains the superior backbone due to its ability to jointly model the full reductive structure.

\begin{table*}[h]
\centering
\small % 폰트 크기를 키워 가독성 향상
\caption{Complete performance report on the drone dataset including all input variants and ablations. We report ATE (meters), ATE$_{\%}$ (scaled), and RTE (meters). \textbf{SO(3)} denotes evaluation under random test-time rotations.}
\label{tab:drone_full_comparison}
\begin{tabular*}{\textwidth}{@{\extracolsep{\fill}} l c ccc ccc}
\toprule
\multirow{2}{*}{\textbf{Model}} & \multirow{2}{*}{\textbf{Input}} & \multicolumn{3}{c}{\textbf{ID (In-Distribution)}} & \multicolumn{3}{c}{\textbf{SO(3) (Rotated)}} \\
\cmidrule(lr){3-5} \cmidrule(lr){6-8}
& & \textbf{ATE} $\downarrow$ & \textbf{ATE}$_{\%}$ $\downarrow$ & \textbf{RTE} $\downarrow$ & \textbf{ATE} $\downarrow$ & \textbf{ATE}$_{\%}$ $\downarrow$ & \textbf{RTE} $\downarrow$ \\
\midrule
ResNet & $v$ & 208.07 & 95.06 & 107.60 & 217.02 & 100.39 & 111.29 \\
ResNet & $(v, C)$ & 205.11 & 94.94 & 106.07 & 213.26 & 98.90 & 109.37 \\
\midrule
Vector Neurons & $v$ & 17.36 & 7.52 & 13.51 & 17.36 & 7.52 & 13.51 \\
Vector Neurons & $(v, C)$ & 191.78 & 88.66 & 98.39 & 190.22 & 88.47 & 98.26 \\
\midrule
Tensor Field Network & $v$ & 24.59 & 10.95 & 18.23 & 24.59 & 10.95 & 18.23 \\
Tensor Field Network & $(v, C)$ & 17.56 & 7.60 & 14.40 & 17.56 & 7.60 & 14.40 \\
Tensor Field Network & $(v, \log C)$ & 16.83 & 7.56 & 13.34 & 16.83 & 7.56 & 13.34 \\
\midrule
SE(3)-Transformer & $v$ & 22.22 & 9.85 & 17.63 & 22.22 & 9.85 & 17.63 \\
SE(3)-Transformer & $(v, C)$ & 21.67 & 9.37 & 16.77 & 21.67 & 9.37 & 16.77 \\
SE(3)-Transformer & $(v, \log C)$ & 20.12 & 8.84 & 15.36 & 20.12 & 8.84 & 15.36 \\
\midrule
\textit{Ablations (Ours)} \\
Lie Neurons$^\dagger$ & $(v, C)$ & 16.86 & 7.43 & 13.65 & 16.86 & 7.43 & 13.65 \\
Lie Neurons$^\dagger$ & $(v, \log C)$ & 15.65 & 6.76 & 12.04 & 15.65 & 6.76 & 12.04 \\
ReLN (no semisimple) & $(v, \log C)$ & 16.27 & 7.00 & 12.65 & 16.27 & 7.00 & 12.65 \\
\midrule
\textit{Our Equivariant Model} \\
ReLN & $v$ & 16.85 & 7.31 & 12.70 & 16.85 & 7.31 & 12.70 \\
ReLN & $(v, C)$ & 16.49 & 7.21 & 13.02 & 16.49 & 7.21 & 13.02 \\
ReLN & $(v, \log C)$ & \textbf{13.92} & \textbf{5.99} & \textbf{11.04} & \textbf{13.92} & \textbf{5.99} & \textbf{11.04} \\
\bottomrule
\end{tabular*}
\begin{flushleft}
\scriptsize
$^\dagger$ The Lie Neurons architecture~\citep{lin2023lie} is mathematically equivalent to the ``no-center'' variant of ReLN, i.e., ReLN with the bilinear form restricted to the semisimple ideal.
\end{flushleft}
\end{table*}

\begin{figure*}[h]
    \centering
    \includegraphics[width=\linewidth]{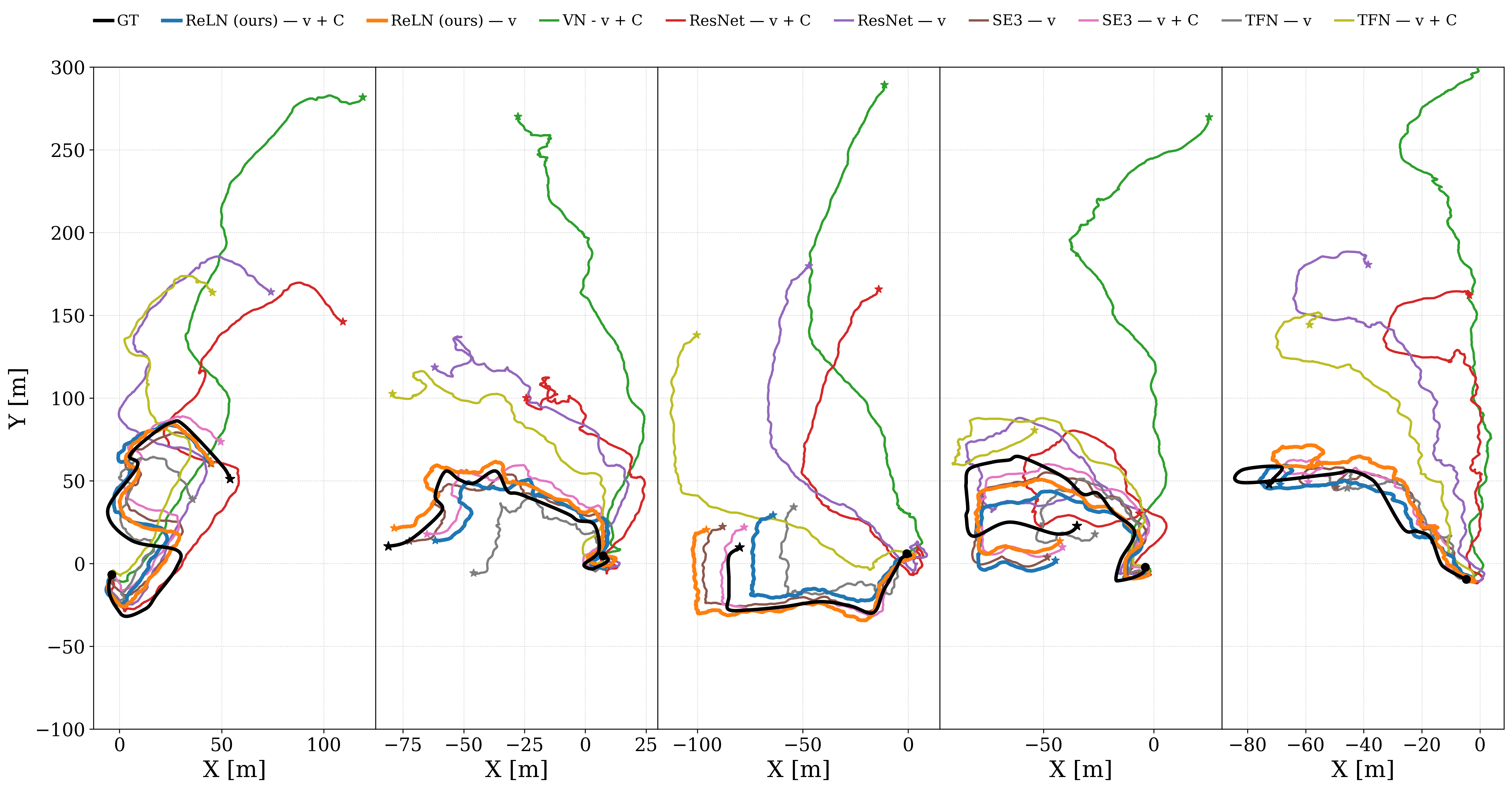}
    \caption{{Qualitative comparison of reconstructed trajectories} ReLN variants  maintain superior tracking fidelity and stability compared to non-equivariant  and standard equivariant baselines.}
    \label{fig:drone_qualitative_full}
\end{figure*}

\paragraph{Qualitative Analysis of Trajectory Fidelity.}
Figure~\ref{fig:drone_qualitative_full} validates the qualitative robustness of ReLNs under high-dynamic flight conditions. While standard MLP-based ResNets and eigendecomposition-based Vector Neurons suffer from rapid error accumulation and structural divergence, ReLN maintains high fidelity to the ground truth even in velocity-only configurations. 

The integration of covariance, $(v, C)$, and specifically the \textbf{$\log$-covariance} provides effective regularization via uncertainty-aware weighting. Notably, while steerable baselines such as TFN and SE(3)-Transformers show improved stability with $\log$-covariance, they still exhibit observable deviations during sharp maneuvers where geometric consistency is critical. This confirms our finding that the reductive Lie-algebraic backbone offers a more stable and geometrically principled space for fusing heterogeneous sensor signals and their associated uncertainties.

\section{Proof of $\mathrm{SO}(3)$-Equivariance for ReLN Velocity Extract with Covariance Inputs}
\label{app:equivariance_proof}

This section provides a formal proof for the $\mathrm{SO}(3)$-equivariance of our Reductive Lie Neuron (ReLN) architecture when processing a velocity vector and a covariance matrix. We first establish the foundations for processing covariance matrices within a Lie-algebraic framework and then present the main proof.

\subsection{$\mathrm{SO}(3)$-Equivariant Vector Extraction via Skew-Symmetric Projection}

Our network, $\Phinet$, is designed to be adjoint-equivariant. It maps geometric inputs—such as an embedded velocity $K \in \mathfrak{so}(3)$ and a covariance matrix $S \in \mathrm{SPD}(3)$—to a matrix feature $A \in \mathbb{R}^{3\times3}$. The inputs transform under the adjoint action of any rotation $R \in \mathrm{SO}(3)$:
\begin{equation}
    K' = \mathrm{Ad}_R(K) = RKR^\top, \quad S' = \mathrm{Ad}_R(S) = RSR^\top.
    \label{eq:appendix_input_transform}
\end{equation}
By construction, the network's output feature $A$ transforms according to the same law:
\begin{equation}
    \Phinet(K', S') = \mathrm{Ad}_R\big(\Phinet(K, S)\big) = R\,\Phinet(K, S)\,R^\top.
    \label{eq:appendix_net_equivariance}
\end{equation}
To obtain the final 3D velocity vector, we project the output matrix $A$ onto its skew-symmetric component and apply the vee operator. The following proposition formalizes the equivariance of this extraction mechanism.

\begin{proposition}[Equivariance of Skew-Symmetric Extraction]
Let a network $\Phinet$ and its inputs transform according to Eqs. \ref{eq:appendix_input_transform} and \ref{eq:appendix_net_equivariance}. If a vector $\tilde{\mathbf{v}} \in \mathbb{R}^3$ is extracted from the output matrix $A = \Phinet(K, S)$ via the projection
\begin{equation}
    A_{\mathrm{skew}} = \tfrac{1}{2}(A - A^\top), \qquad \tilde{\mathbf{v}} = (A_{\mathrm{skew}})^\vee,
    \label{eq:appendix_extraction}
\end{equation}
then the vector $\tilde{\mathbf{v}}'$ extracted from the transformed output $A' = \Phinet(K', S')$ transforms covariantly as $\tilde{\mathbf{v}}' = R\tilde{\mathbf{v}}$.
\end{proposition}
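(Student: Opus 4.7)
The plan is to propagate the assumed adjoint-equivariance of $\Phinet$ through the two operations defining the velocity readout, the skew-symmetrization and the vee map, verifying that each commutes with the conjugation by $R$ in the appropriate sense. This reduces the claim to two elementary algebraic facts that can be chained together.

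First I would apply the hypothesis directly: $A' = \Phinet(K',S') = R A R^\top$. Transposing yields $(A')^\top = R A^\top R^\top$ (using $(R^\top)^\top = R$), so the skew part satisfies
\begin{equation}
A'_{\mathrm{skew}} = \tfrac{1}{2}\bigl(A' - (A')^\top\bigr) = \tfrac{1}{2}\bigl(R A R^\top - R A^\top R^\top\bigr) = R\,A_{\mathrm{skew}}\,R^\top .
\end{equation}
This shows that the skew-symmetrization projector commutes with the adjoint action, which is expected because $\mathfrak{so}(3)\subset\mathbb{R}^{3\times 3}$ is an $\mathrm{Ad}$-invariant subspace and orthogonal to $\mathrm{Sym}(3)$ under the Frobenius pairing preserved by $R$.

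Second, I would invoke the standard intertwining identity between the hat/vee isomorphism and the defining representation of $\mathrm{SO}(3)$: for every $\omega\in\mathfrak{so}(3)$ and $R\in\mathrm{SO}(3)$,
\begin{equation}
(R\,\omega\,R^\top)^\vee = R\,\omega^\vee .
\end{equation}
This follows from the rotation covariance of the cross product, $R(u\times v)=(Ru)\times(Rv)$, together with the defining property $\omega x = \omega^\vee \times x$ of the hat map; equivalently, it is the statement that vee is an isomorphism of $\mathrm{SO}(3)$-representations between $(\mathfrak{so}(3),\mathrm{Ad})$ and $(\mathbb{R}^3,R\cdot)$. Composing the two steps gives
\begin{equation}
\hat{\mathbf{v}}' = (A'_{\mathrm{skew}})^\vee = (R\,A_{\mathrm{skew}}\,R^\top)^\vee = R\,(A_{\mathrm{skew}})^\vee = R\,\hat{\mathbf{v}} .
\end{equation}

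There is no genuine obstacle here; both steps are purely algebraic and well-known. The only element warranting explicit mention is the vee intertwining identity, which I would derive in one line from the cross-product covariance rather than treat as self-evident. I would also note that the argument is agnostic to the particular role of the covariance input $S$: any auxiliary input transforming under $\mathrm{Ad}_R$ produces the same conclusion, so the proposition is really a statement about the skew-then-vee readout composed with any $\mathrm{Ad}$-equivariant map, not about the specific internal architecture of $\Phinet$.
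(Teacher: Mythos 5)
Your proof is correct and follows essentially the same route as the paper's: propagate the assumed adjoint-equivariance to the output, show that skew-symmetrization commutes with conjugation by $R$, then apply the intertwining property of the vee map. The only difference is that you supply a one-line justification of the vee intertwining identity via the cross-product covariance, whereas the paper cites it as a known fact, and you add the (correct) observation that the argument is independent of the internal structure of $\Phinet$.
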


\begin{proof}
By the adjoint-equivariance property in Eq.~\ref{eq:appendix_net_equivariance}, the network satisfies $\Phinet(RKR^\top, RSR^\top) = R\,\Phinet(K, S)\,R^\top = RAR^\top$. Let $A' = RAR^\top$. The skew-symmetric component of the transformed output $A'$ is:
\begin{equation}
\begin{split}
    A'_{\mathrm{skew}} &= \tfrac{1}{2}(A' - A'^\top) \\
    &= \tfrac{1}{2}\big(RAR^\top - (RAR^\top)^\top\big) \\
    &= \tfrac{1}{2}\big(RAR^\top - RA^\top R^\top\big) \\
    &= R \left( \tfrac{1}{2}(A - A^\top) \right) R^\top \\
    &= R A_{\mathrm{skew}} R^\top = \mathrm{Ad}_R(A_{\mathrm{skew}}).
\end{split}
\end{equation}
The vee map, $(\cdot)^\vee: \mathfrak{so}(3) \to \mathbb{R}^3$, is itself an equivariant map satisfying $(\mathrm{Ad}_R(X))^\vee = R\,(X^\vee)$ for any $X \in \mathfrak{so}(3)$. Applying this property yields the desired result:
\begin{equation}
    \tilde{\mathbf{v}}' = (A'_{\mathrm{skew}})^\vee = (\mathrm{Ad}_R(A_{\mathrm{skew}}))^\vee = R\,(A_{\mathrm{skew}})^\vee = R\tilde{\mathbf{v}}.
\end{equation}
\end{proof}

\begin{remark}
The proof relies on three properties: (i) both inputs transform under the adjoint action $X \mapsto RXR^\top$; (ii) the network $\Phinet$ is equivariant to this action; and (iii) the output is projected onto $\mathfrak{so}(3)$ before the vee operator is applied. As established previously, these conditions hold whether the network ingests the raw covariance $S$ or its logarithm $\log S$.
\end{remark}

%%%%%%%%%%%%%%%%%%%%%%%%%%%%%%%%%%%%%%%% 3D GS %%%%%%%%%%%%%%%%%%%%%%%%%%%%%%%%%%%%%%%% 

\section{Details on 3D Gaussian Splatting Experiments}
\label{app:3dgs_details}

In this section, we detail the experimental protocols and architectural modifications for the 3D Gaussian Splatting (3DGS) experiments. Figure~\ref{fig:framework_overview} provides a comprehensive overview of the ReLN-integrated Gaussian-MAE architecture, highlighting the equivariant data flow from geometric lifting to projection. We provide a rigorous theoretical justification for our architectural design to achieve \textbf{end-to-end equivariance} by jointly learning geometric and photometric features within a unified Lie-algebraic framework.

\subsection{Datasets and Protocol}
\label{app:3dgs_datasets}
We adopt a standard two-stage protocol common in self-supervised learning for 3D representation~\citep{ma2025large}, separating the pre-training domain from the downstream evaluation task to assess generalization.

\paragraph{Pre-training: ShapeSplat (ShapeNet-derived).}
We utilize \textbf{ShapeSplat}~\citep{ma2025large}, a large-scale collection of object-level 3D Gaussian primitives derived from ShapeNet. Each object is represented by a set of anisotropic Gaussians $\Theta_i=\{\mu_i,\Sigma_i,\alpha_i,c_i\}_{i=1}^{N}$. Following the baseline protocol, we sample $N=1,024$ Gaussians via furthest point sampling (FPS) and apply a $60\%$ masking ratio for the masked autoencoding objective.

\paragraph{Downstream: ModelNet10 Classification.}
To evaluate rotational robustness, we fine-tune the pre-trained encoder on \textbf{ModelNet10}. We report accuracy on:
(i) \textbf{Aligned}: The canonical test set.
(ii) \textbf{Rotated}: The test set under random rotations $R \sim SO(3)$, testing whether the learned representations remain stable under arbitrary orientations.

\subsection{Holistic Equivariance via Joint Feature Learning}
\label{app:3dgs_holistic}
A critical limitation of the baseline architecture~\citep{ma2025large} is the naive concatenation of heterogeneous Gaussian attributes. A 3D Gaussian combines \emph{equivariant} quantities ($\mu, \Sigma$) with \emph{invariant} quantities ($c, \alpha$). Our goal in ReLN is to construct a \textbf{holistic equivariant architecture} where these distinct attribute types are learned jointly without violating the symmetry of the feature space.

\paragraph{1. The Equivariant Component (Active Geometry).}
The mean position $\mu \in \mathbb{R}^3$ and covariance $\Sigma \in \mathbb{R}^{3\times3}$ dictate the spatial structure. Under a global rotation $R \in SO(3)$, these transform via $\mu \mapsto R\mu$ and $\Sigma \mapsto R\Sigma R^\top$.
Recognizing that the covariance update corresponds to the adjoint action $\mathrm{Ad}_R(\Sigma) = R\Sigma R^{-1}$, we embed $\Sigma$ into the symmetric subspace of $\mathfrak{gl}(3)$. This ensures that the primary geometric features are processed within a consistent $\mathrm{Ad}$-equivariant vector space throughout the network.

\paragraph{2. The Invariant Component (Photometry and Geometric Norms).}
To support robust representation learning, we augment the state space with a set of \textbf{$O(3)$-invariant scalars (Type-0 features)}. This set includes:
\begin{itemize}
    \item \textbf{Intrinsic Attributes.}
Opacity $\alpha$ and color $c$ are treated as rotation-invariant channels in our setup.
Following the baseline, we retain only the \emph{zeroth-order} spherical-harmonic term ($l=0$) for color (three coefficients for the RGB channels).
Concretely, an SH color field can be written as
$
f(\hat{\mathbf d})=\sum_{l,m} c_{lm}\,Y_l^m(\hat{\mathbf d}),
$
and for $l=0$ this reduces to
$
f(\hat{\mathbf d}) = c_{00}\,Y_0^0(\hat{\mathbf d}).
$
Since $Y_0^0$ is constant on the sphere,
the $l=0$ coefficient is unchanged by rotations (equivalently, the Wigner-$D$ matrix satisfies $D^{0}(R)=1$).
Therefore, the RGB $l=0$ coefficients are best viewed as \emph{three independent scalar channels}
(i.e., multiplicity-$3$ copies of the $L=0$ irrep), and are invariant under object rotations:
$c \mapsto c$ and $\alpha \mapsto \alpha$.

    \item \textbf{Derived Geometric Invariants:} Crucially, we also explicitly compute rotation-invariant geometric descriptors, such as the norms of position vectors ($\|\mu\|$) and rotation/scale magnitudes derived from $\Sigma$.
\end{itemize}
In the ReLN framework, treating these as Type-0 features serves to \textit{facilitate its feature integration}. These scalars participate in the learning process by modulating the equivariant features without violating the underlying group structure and equivariance.

\paragraph{3. Joint Learning in ReLN Attention.}
The explicit stratification of these streams is a structural prerequisite for \textbf{joint equivariant learning}. By assigning each attribute to its correct algebraic type (Type-1/Type-2 for geometry, Type-0 for scalars), the ReLN attention mechanism can validly integrate information across all modalities. The invariant scalars ($c, \alpha, \|\mu\|$) contribute to the computation of attention scores (invariant inner products), thereby modulating how the equivariant geometric features ($\mu, \Sigma$) are aggregated. This design ensures that the entire network optimizes a single, cohesive objective function while guaranteeing that the final representation remains equivariant end-to-end.

\subsection{Architectural Modifications}
\label{app:3dgs_arch}
To enforce this holistic equivariance, we replaced non-robust components of the baseline architecture with ReLN equivalents that respect the defined type system.

\paragraph{ReLNEncoder and Geometric Aggregation.}
Our \textbf{ReLNEncoder} begins with a \textit{Geometric Lifting} stage that maps raw Gaussian parameters ($C, S, R$) into $\mathfrak{gl}(3)$. The baseline uses a `SoftEncoder` (MLP on raw coordinates) which breaks equivariance because standard linear weights $W$ do not rotate with the input frame. We replaced this with \textbf{ReLNEncoder}, which employs $\mathrm{Ad}$-equivariant linear layers ($\texttt{ReLNLinear}$) to process local neighborhoods. ReLNEncoder incorporates a \textit{Geometric Aggregator}, which utilizes an equivariant norm-based max pooling mechanism to compress grouped splats into structured $C' \times 9$ tokens. This ensures that the fundamental geometric tokens are processed within a mathematically consistent vector space.

\paragraph{Central Positional Embedding.}
To provide spatial context without violating symmetry, we process group centroids ($\mu$) through a dedicated \textit{Central Positional Embedding}. This module lifts 3D spatial coordinates into a structured Lie-algebraic representation, which is then injected into both the encoder and decoder stages (as illustrated in Figure~\ref{fig:framework_overview}). This injection strategy ensures that the transformer blocks distinguish relative spatial relations while maintaining rotational equivariance across the entire processing pipeline.

\paragraph{Invariant Global Pooling (vs. CLS Token).}
For downstream classification tasks, we eschew the conventional learnable \texttt{[CLS]} token, recognizing that a fixed-reference-frame parameter \textbf{inherently imposes} a canonical orientation bias that violates rotational symmetry. Instead, we employ a permutation-invariant \textit{Global Max Pooling} over the final sequence of equivariant tokens. This ensures that the aggregated global representation remains stable and rotates consistently with the input geometry, thereby satisfying holistic rotational symmetry for robust 3D representation learning.

\subsection{Training and Evaluation Protocol}
\label{app:3dgs_training}

\textbf{Pre-training Setup.} 
We pre-trained both the baseline and ReLN models for 300 epochs on the ShapeSplat (ShapeNet) dataset. We utilized the AdamW optimizer with an initial learning rate of $1 \times 10^{-3}$, a weight decay of $0.05$, and a cosine annealing scheduler with a 10-epoch warm-up. For the ReLN model, we explicitly disabled the \texttt{soft\_knn} option to prevent the leakage of non-equivariant features from the legacy encoder implementation.

\textbf{Fine-tuning Setup.} 
For the downstream ModelNet10 classification task, we fine-tuned the pre-trained encoder for an additional 300 epochs. In this phase, we adjusted the learning rate to $5 \times 10^{-4}$ while maintaining the same weight decay ($0.05$) and cosine scheduling strategy.

\textbf{Evaluation Metrics.} To rigorously evaluate geometric stability, we construct two test benchmarks:
\begin{itemize}
    \item \textbf{Standard (Aligned):} The original ModelNet10 test set where objects are canonically aligned.
    \item \textbf{Rotated (SO(3)):} We apply a random rotation $R \in \mathrm{SO}(3)$ to every object. For a Gaussian splat, we transform parameters as $\mu' = R\mu$, $\Sigma' = R \Sigma R^\top$.
\end{itemize}

\subsection{Additional Convergence Results}
Figure~\ref{fig:loss_convergence} presents the detailed training dynamics. The ReLN model consistently achieves lower loss values across all geometric attributes—Rotation, Scale, and Spherical Harmonics (SH)—demonstrating that enforcing equivariance stabilizes the learning of intrinsic 3D shape descriptors.

\begin{figure*}[h]
    \centering
    \includegraphics[width=0.95\linewidth]{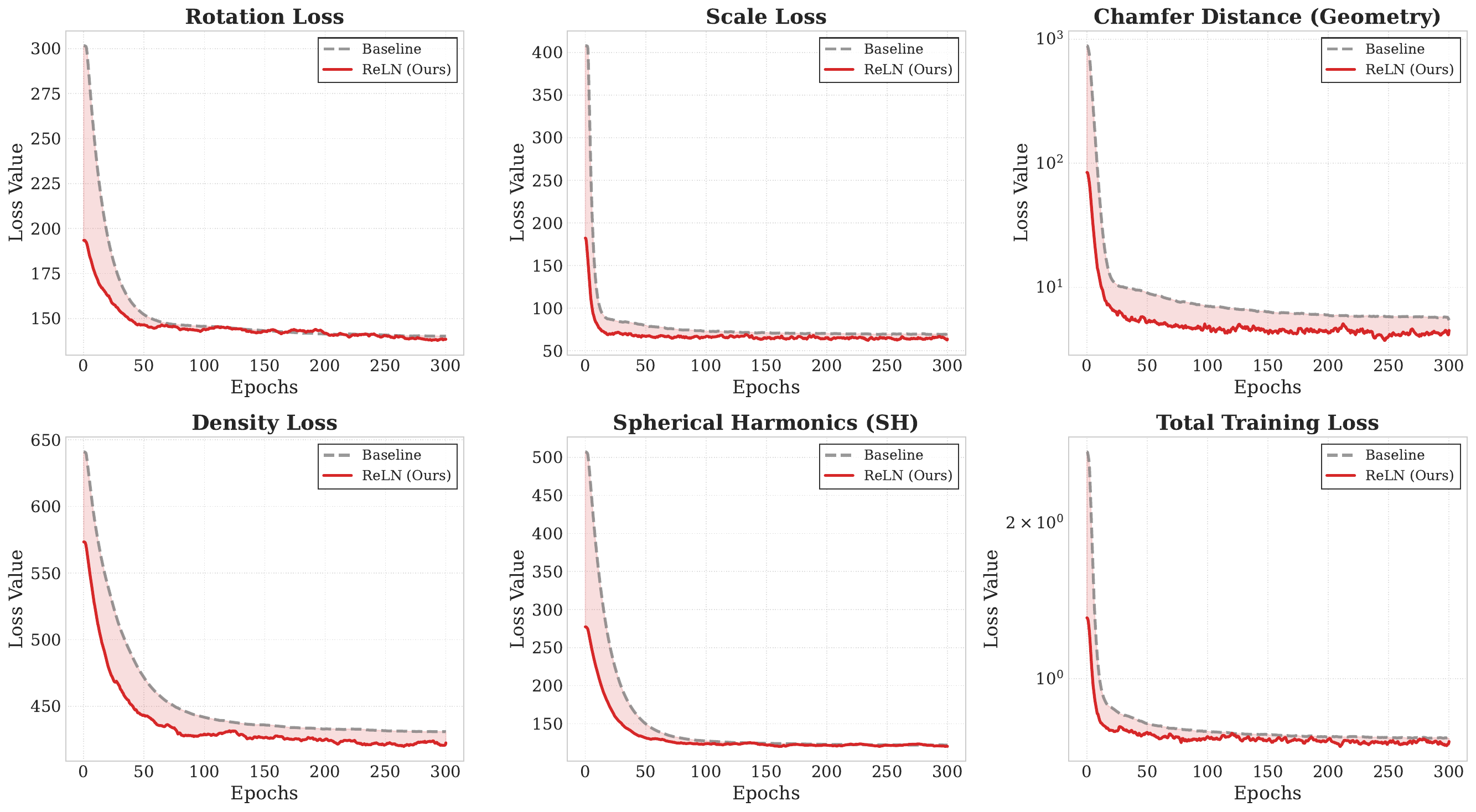}
    \caption{\textbf{Pre-training Dynamics on 3D Gaussian Splats.} Comparison of training convergence between the Baseline~\citep{ma2025large} and ReLN (Ours) across 300 epochs on \textbf{ShapeNet}. The red shaded regions highlight the gap where ReLN achieves lower reconstruction error than the baseline. ReLN consistently outperforms the baseline across \textbf{all} tracked metrics (including Rotation, Scale, Density, and Chamfer Distance), demonstrating that equivariance improves every aspect of 3D Gaussian reconstruction.}
    \label{fig:loss_convergence}
\end{figure*}

\section{Experimental Details for the Double-Pendulum Benchmark}
\label{app:double_pendulum_details}

To ensure a rigorous and fair comparison, we adhere to the experimental protocol and system parameters of the EMLP benchmark~\citep{finzi2021practical}.

\paragraph{System dynamics and dataset generation.}
We learn Hamiltonian dynamics of a 3D double spring pendulum exhibiting $O(2)$ symmetry about the $z$-axis.
All physical constants are set to unity ($m_i=k_i=l_i=1.0$) with gravity $g=[0,0,1]$.
Ground-truth trajectories are generated by Hamiltonian integration over a horizon of $T=30$s.
We construct $1500$ trajectory chunks of duration $1$s (sampled at five $0.2$s intervals) and split them into
$500/500/500$ for training/validation/testing.

\paragraph{Models and training.}
All compared Hamiltonian models parameterize a scalar Hamiltonian $H_\theta(z)$ and use the standard HNN update
$\dot z = J\nabla_z H_\theta(z)$, where $z$ stacks generalized coordinates and momenta and $J$ is the canonical symplectic matrix.
Both EMLP-HNN and ReLN-HNN use three hidden layers with channel width $c=128$, and are trained for $2000$ epochs with Adam
(lr=$3\times 10^{-3}$, batch size=$500$).
Rollout performance is evaluated by the geometric mean of relative errors over the full $30$s horizon, averaged over three independent trials (random seeds).

\paragraph{EMLP-HNN baseline (equivariant layer construction).}
Let $G=O(2)$ be the symmetry group acting on the $(x,y)$ plane (with $z$ unchanged), e.g.,
$R(\phi)=\mathrm{diag}(R_{2\times2}(\phi),1)$ (and similarly for reflections), and let $\rho(\cdot)$ denote the induced representation on the state space.
Equivariance of a linear map $W:V_{\mathrm{in}}\!\to\!V_{\mathrm{out}}$ requires
\begin{equation}
\rho_{\mathrm{out}}(g)\,W \;=\; W\,\rho_{\mathrm{in}}(g), \qquad \forall g\in G.
\end{equation}
Following~\citep{finzi2021practical}, EMLP enforces this by constraining the vectorized weights $v=\mathrm{vec}(W)$ under the product representation
$\rho=\rho_{\mathrm{out}}\otimes\rho_{\mathrm{in}}^{*}$:
\begin{equation}
\rho(g)\,v \;=\; v, \qquad \forall g\in G.
\end{equation}
To avoid infinitely many constraints, the condition is reduced to a finite generator set consisting of Lie algebra generators $\{A_i\}$ and discrete generators $\{h_j\}$:
\begin{equation}
d\rho(A_i)\,v = 0
\quad \text{and} \quad
(\rho(h_j)-I)\,v = 0,
\label{eq:emlp_constraints}
\end{equation}
which are assembled into a linear system $Cv=0$.
An equivariant basis $Q$ is obtained by computing a nullspace of $C$ (e.g., via SVD/Krylov methods as in the reference implementation),
and the layer is parameterized as $v=Q\beta$ during training (no further constraint solving during SGD).
This basis computation is group- and representation-specific (e.g., $O(2)$ vs.\ $SO(2)$).

\paragraph{EMLP nonlinearity.}
We use the standard EMLP gated nonlinearity~\citep{finzi2021practical}, where scalar channels produce gates that multiplicatively modulate non-scalar feature blocks,
preserving equivariance while enabling expressive nonlinear mixing.

\paragraph{ReLN-HNN (ours).}
In contrast, ReLN-HNN bypasses numerical constraint solving and constructs equivariant/invariant primitives in closed form within a reductive Lie-algebraic space (here $\mathfrak{gl}(3)$),
using bilinear/Killing-form-based operations to obtain symmetry-consistent features without group-specific basis recomputation.
The same architecture is used across different symmetry settings (e.g., $O(2)$, $SO(2)$, $D_6$) without re-solving linear constraints.

\paragraph{Computational efficiency measurement.}
Per-step FLOPs in Table~\ref{tab:efficiency} are measured using JAX HLO cost analysis, which counts device-executed floating-point operations including internal tensor contractions and projections.

\end{document}